\documentclass[reqno]{amsart}
\usepackage{xcolor}
\usepackage{graphicx}
\theoremstyle{plain}
\begingroup
 
\newtheorem{theorem}{Theorem} 
\newtheorem{corollary}{Corollary}
\newtheorem{lemma}{Lemma}
\endgroup
%
%
%
%
\newcommand{\nwc}{\newcommand}
\nwc{\qref}[1]{(\ref{#1})}
\nwc{\cadlag}{c\`{a}dl\`{a}g}
\nwc{\la}{\label}
\nwc{\nn}{\nonumber}
\nwc{\Z}{\mathbb{Z}}
\nwc{\C}{\mathbb{C}}
\nwc{\T}{\mathbb{T}}
\nwc{\E}{\mathbb{E}}
\nwc{\R}{\mathbb{R}}
\nwc{\N}{\mathbb{N}}
\nwc{\Rn}{\mathbb{R}^n}
\nwc{\PP}{\mathcal{P}}
\nwc{\M}{\mathcal{M}}
\nwc{\Ito}{It\^{o}}
\nwc{\DiffM}{\mathrm{Diff}(M)}
\nwc{\orbit}{\mathcal{O}}
\nwc{\bbb}{\mathbf{M}}

\nwc{\law}{\stackrel{\mathcal{L}}{\rightarrow}}
\nwc{\eqd}{\stackrel{\mathcal{L}}{=}}
\nwc{\vp}{\varphi}
\nwc{\veps}{\varepsilon}
\nwc{\eps}{\veps}
\nwc{\dnto}{\downarrow}
\nwc{\nsup}{^{(n)}}
\nwc{\ksup}{^{(k)}}
\nwc{\jsup}{^{(j)}}
\nwc{\nksup}{^{(n_k)}}
\nwc{\inv}{^{-1}}
\nwc{\argmin}{\mathrm{argmin}}
\nwc{\argmax}{\mathrm{argmax}}
\nwc{\Tr}{\mathrm{Tr}}
\nwc{\Id}{\mathrm{Id}}
\nwc{\Pn}{\mathbb{P}(n)}
\nwc{\PnR}{\mathbb{P}(n;\mathbb{R})}
\nwc{\PnC}{\mathbb{P}(n;\mathbb{C})}
\nwc{\Hn}{\mathbb{H}(n)}
\nwc{\HnR}{\mathbb{H}(n;\mathbb{R})}
\nwc{\HnC}{\mathbb{H}(n;\mathbb{C})}
\nwc{\An}{\mathbb{A}(n)}
\nwc{\AnR}{\mathbb{A}(n;\mathbb{R})}
\nwc{\AnC}{\mathbb{A}(n;\mathbb{C})}
\nwc{\Mn}{\mathbb{M}(n)}
\nwc{\Md}{\mathbb{M}_d}
\nwc{\Mfd}{\mathfrak{M}_d}
\nwc{\Mm}{\mathbb{M}_m}
\nwc{\Mfm}{\mathfrak{M}_m}
\nwc{\feasible}{\mathcal{F}}
\nwc{\GLn}{GL(n)}
\nwc{\GLnC}{GL(n;\mathbb{C})}
\nwc{\GLnR}{GL(n;\mathbb{R})}
\nwc{\Poincare}{Poincar\'{e}}
\nwc{\Un}{U(n)}
\nwc{\Sn}{\mathbb{S}^{n}}
\nwc{\Tn}{\mathbb{T}^n}
\nwc{\ev}{\mathrm{ev}}
\nwc{\Diff}{\mathrm{Diff}}
\nwc{\orbitx}{\mathcal{O}_X}
\nwc{\orbitxG}{\mathcal{O}_{X,\mathbf{G}}}
\nwc{\volume}{\mathrm{vol}}
\nwc{\symm}{\mathrm{Symm}}
\nwc{\psd}{\mathbb{P}}
\nwc{\od}{O_d}
\nwc{\balance}{\mathcal{M}}
\nwc{\balanceG}{\mathcal{M}_{\mathbf{G}}}
\nwc{\nstar}{n_*}
\nwc{\manmap}{\mathfrak{x}}
\nwc{\manmapbeta}{\mathfrak{y}}
\nwc{\grad}{\mathrm{grad}}
\nwc{\ww}{\mathbf{W}}
\nwc{\ff}{\mathcal{F}}
\nwc{\ggg}{\mathcal{G}}
\nwc{\hh}{\mathcal{H}}
\nwc{\ts}{n}
\nwc{\anti}{\mathbb{A}}
\nwc{\diag}{\mathrm{diag}}
\nwc{\Lojas}{Lojasiewicz}
\nwc{\bd}{\mathbf}

\nwc{\Mr}{\mathfrak{M}_r}
\nwc{\Mdd}{\mathfrak{M}_d}
\nwc{\bfW}{\mathbf{W}}
\nwc{\bfw}{\mathbf{w}}
\nwc{\bfA}{\mathbf{A}}
\nwc{\bfa}{\mathbf{a}}
\nwc{\bfaa}{\mathbf{alpha}}
\nwc{\bfQ}{\mathbf{Q}}
\nwc{\bfC}{\mathbf{C}}
\nwc{\bfc}{\mathbf{c}}
\nwc{\bfl}{\mathbf{l}}
\nwc{\bfm}{\mathbf{m}}
\nwc{\bfu}{\mathbf{u}}
\nwc{\bfv}{\mathbf{v}}
\nwc{\gbw}{g^{BW}}
\nwc{\fiber}{\mathcal{F}}
\nwc{\fiberX}{\mathcal{F}_X}
\nwc{\aaa}{\mathcal{A}}
\nwc{\Stief}{\mathrm{St}}
\nwc{\Fr}{\iota}
\nwc{\weyl}{\mathcal{W}}
\nwc{\Her}{\mathrm{Her}}
\nwc{\refspace}{\mathcal{M}}
\nwc{\refgroup}{\mathcal{G}}
\nwc{\quotientspace}{\mathcal{X}}
\nwc{\refmetric}{g}
\nwc{\quotientmetric}{h}

\theoremstyle{definition}
 
\newtheorem{remark}[theorem]{Remark}

\theoremstyle{remark}
 
 
\numberwithin{equation}{section}
\numberwithin{figure}{section}

%
%

\begin{document}


\title{An entropy formula for the Deep Linear Network}

\begin{abstract}
We study the Riemannian geometry of the Deep Linear Network (DLN) as a foundation for a thermodynamic description of the learning process. The main tools are the use of group actions to analyze overparametrization and the use of Riemannian submersion from the space of parameters to the space of observables. The foliation of the balanced manifold in the parameter space by group orbits is used to define and compute a Boltzmann entropy. We also show that the Riemannian geometry on the space of observables defined in~\cite{Bah} is obtained by Riemannian submersion of the balanced manifold. The main technical step is an explicit construction of an orthonormal basis for the tangent space of the balanced manifold using the theory of Jacobi matrices.
\end{abstract}

\author{Govind Menon}
\address{Division of Applied Mathematics, Brown University, 182 George St., Providence, RI 02912.}
\email{govind\_menon@brown.edu}
\curraddr{School of Mathematics, Institute for Advanced Study, 1 Einstein Dr., Princeton, NJ 08540.}
\email{gmenon@ias.edu}

\author{Tianmin Yu}
\address{Department of Mathematics, Northwestern University, 2033 Sheridan Rd., Evanston, IL 60208.}
\email{tianmin.yu@northwestern.edu}

\thanks{This work was supported by the NSF grants DMS 2107205, DMS 2407055 and the Erik Ellentuck Fellowship at the Institute for Advanced Study, Princeton. GM is also grateful to the Institute of Mathematics, Academia Sinica for partial support under grant NSTC 113-2115-M-001-009-MY3 during the completion of this work.}
\keywords{Deep linear network, Random matrix theory, Kalman's realizability theory}

\maketitle

{\centering \em In memory of Roger Brockett. \par}

\section{Overview}
The Deep Linear Network (DLN) is a phenomenological model of deep learning based on the composition of {\em linear\/} functions. It was introduced by Arora, Cohen and Hazan in 2017~\cite{ACH}. The DLN builds on earlier models in neural computing of a similar nature (in particular~\cite{Baldi-Hornik}), but differs in its emphasis on the role of depth. 


In this paper, and the companion papers~\cite{LM1,MY-RLE}, we use the DLN to develop a rigorous thermodynamic description of the learning process. The main novelty in our work is the systematic use of the geometric theory of dynamical systems. This is in contrast with the more common approach based on statistical learning theory. We study both the Riemannian and symplectic geometry that underlie training dynamics and use it as the foundation for a thermodynamic description. 

This paper develops the Riemannian geometry. We provide explicit descriptions of the underlying Riemannian metrics and define a natural Boltzmann entropy. The related stochastic dynamics are described by a Riemannian Langevin equation (RLE) which is presented in the companion paper~\cite{MY-RLE}. Both these papers rely on analogies with random matrix theory. The determinantal formula for the entropy in this paper (Theorem~\ref{thm:entropy}) is a variant of well-known formulas in random matrix theory. The RLE for the DLN in~\cite{MY-RLE} is a geometric stochastic flow analogous to Dyson Brownian motion. In the companion paper~\cite{LM1}, we develop the symplectic geometry of the DLN and provide a variational characterization of balancedness. This result is obtained by developing an analogy with Kalman's minimal realizability theory in linear systems theory. Both these analogies reveal unexpected mathematical structure in the DLN and new directions for study.

Our broader goal in this work is to obtain insight into training by stochastic gradient descent in deep learning using geometric methods. While our work does not yet include stochastic gradient descent, it is a step in this direction. Redundancy of description (there are many choices of parameter that create the same function) is a central feature of deep learning. Our analysis of the DLN reveals the manner in which symmetry may be used to analyze such overparametrization, extending the classical theory of gradient flows to the high-dimensional setting of deep learning. 

The use of group actions and Riemannanian geometry builds on earlier attempts in control and systems theory, especially by Bloch, Brockett and their co-authors, to develop gradient flows that solve combinatorial optimization problems and logical tasks~\cite{Bloch1994,BBR,Brockett-automata}. Brockett's work was also motivated by the study of neural networks and blurs the usual distinction between analog and digital computing~\cite{Brockett-transient,Brockett}. The reappearance of Kalman's realizability theory in the DLN reveals the continued utility of geometric control theory in the analysis of neural networks.  

The Riemannian geometry studied here is of independent mathematical interest. An explicit description of geodesics in the DLN geometry based on the classical theory of confocal quadrics and Theorem~\ref{thm:submersion} below is presented by Chen~\cite{Chen1}. His work generalizes Brenier's formula for geodesics in the Bures-Wasserstein metric to the DLN providing a new connection between optimal transport theory and deep learning.  The entropy formula in this paper has also been used to study equilibrium measures in the DLN~\cite{Chen2}, formalizing the notion of entropic regularization and selection of minimizers by small noise in deep learning. A surprising feature of the DLN, in contrast with random matrix theory, is that the equilibrium measures do {\em not\/} show repulsion between singular values. Finally, the use of Riemannian gradient flows provides a unity between interior point methods for convex optimization and deep learning as discussed in~\cite{MY-conic}.\

This paper is organized as follows. We review some prior work in Section~\ref{sec:background} and state our main results in Section~\ref{sec:results}. Group actions and the foliation of the balanced manifold by group orbits is discussed in Section~\ref{sec:group-action}. The entropy formula is proved in Section~\ref{sec:orbit-basis} and Riemannian submersion is established in Section~\ref{sec:balance-basis}. In both these sections, we find a subtle coupling in depth along the network that is reflected in the description of the Riemannian metric on the balanced manifold by certain block Jacobi matrices. The explicit diagonalization of these matrices yields the formulas in this paper. The paper concludes with a discussion (Section~\ref{sec:discussion}) that includes references to related work and a summary of the main new insights.

\section{Background}
\label{sec:background}
We begin by reviewing rigorous results by other authors that have stimulated our work. The expository article~\cite{GM-dln} provides more details on sections~\ref{subsec:model}--\ref{subsec:prior} below. 

\subsection{Notation} 
\label{subsec:notation}
We fix two positive integers $N$ and $d$ referred to as the depth and width of the network respectively and an integer $r=1,\cdots, d$. We denote by $\Md$, $\Mfd$, $\Mr$, $\symm_d$, $\anti_d$ and $\psd_d$ the spaces of $d\times d$ real matrices, invertible matrices in $\Md$, rank-$r$ matrices in $\Md$, symmetric matrices, anti-symmetric matrices, and positive semidefinite (psd) matrices respectively. We denote by $O_d$ the orthogonal group of dimension $d$. 

Our results involve the singular value decomposition (SVD) of a matrix $X\in \Md$. We denote this SVD by
\begin{equation}
    \label{eq:newsvd1}
    X = Q_N \Sigma Q_0^T, \quad Q_0,Q_N \in O_d.
\end{equation}
An SVD is unique upto the permutation of singular values. We fix the permutation (so that the SVD is unique) by placing the singular values in decreasing order
\begin{equation}
    \label{eq:newsvd1b}
    \Sigma=\mathrm{diag}\left( \sigma_1, \ldots, \sigma_d\right), \quad \sigma_1 \geq \sigma_2 \geq \ldots \geq \sigma_d. 
\end{equation}

\subsection{The model}
\label{subsec:model}


The parameter space is $\Md^N$. We denote points in $\Md^N$ by
\begin{equation}
    \label{eq:state-space}
    \mathbf{W}= (W_N,W_{N-1}, \ldots, W_1).
\end{equation}
The reverse ordering corresponds to the construction of a function $f: \R^d\to \R^d$ through the composition of functions $f_N \circ f_{N-1} \circ \cdots \circ f_1$ along the layers of a neural network. We restrict to linear functions 
$f_p(x) = W_px$ so that $f(x)= Xx$ where 
\begin{equation}
    \label{eq:state-space2}
    X = W_N W_{N-1}\cdots W_1 =:\phi(\mathbf{W}).
\end{equation}
The matrix $X$ is called the {\em end-to-end\/} matrix in the computer science literature. The notation $X= \phi(\mathbf{W})$ is useful for a geometric description of the dynamics. 


We equip $\Md$ with the Frobenius norm. Thus, the product space $\Md^N$  is a Euclidean space with the norm 
\begin{equation}
    \label{eq:frobenius}
    \|\mathbf{W}\|^2 := \sum_{p=1}^N \Tr \left(W_p^T W_p\right).
\end{equation}

Given a loss function $E: \Md\to \R$, we study the training dynamics described by the gradient flow in $\Md^N$ with respect to the Frobenius norm of the `lifted' loss function $L = E\circ \phi$. This abstract gradient flow, denoted
\begin{equation}
    \label{eq:big-grad1}
    \dot{\mathbf{W}}= - \nabla_{\mathbf{W}} L(\mathbf{W}),
\end{equation}
is in actuality a system of $N$ distinct $\Md$-valued equations. The $p$-th matrix $W_p$ in the network $(W_N,W_{N-1}, \ldots, W_1)$ satisfies
\begin{equation}
    \label{eq:big-grad2}
    \dot{W}_p= - (W_N\cdots W_{p+1})^T dE(X) (W_{p-1}\cdots W_1)^T, \quad p=1,\ldots,N.
\end{equation}
Here $dE(X)$ denotes the $d\times d$ matrix with entries
\begin{equation}
    \label{eq:big-grad3}
    dE(X)_{jk} = \frac{\partial E}{\partial X_{jk}}, \quad 1\leq j,k \leq d.
\end{equation}


\subsection{Invariant varieties and the balanced manifold}
\label{subsec:balance}
The gradient flow~\eqref{eq:big-grad2} has the striking feature that the  parameter space $\Md^N$ is stratified by algebraic varieties that are invariant  under the flow of~\eqref{eq:big-grad1} for all $E \in C^2$. 
Each such variety is a conic section in $\Md^N$ parametrized by 
\begin{equation}
    \label{eq:G-def}
    \mathbf{G}=(G_{N-1},\cdots, G_1) \in \symm_d^{N-1}.
\end{equation}
Given $\mathbf{G}$, define the system of $\symm_d$-valued quadratic equations 
\begin{equation}
    \label{eq:b1}
    W_{p+1}^TW_{p+1} = W_pW_p^T -G_p , \quad p=1,\ldots,N,
\end{equation}
and the  $\mathbf{G}$-{\em balanced\/} variety
\begin{equation}
    \label{eq:b2}
    \balance_{\mathbf{G}} = \{ \mathbf{W}\in \Md^N \left| W_{p+1}^TW_{p+1} = W_pW_p^T  - G_p, \; p=1,\ldots,N \right. \}.
\end{equation}
The special case $\mathbf{G}=\mathbf{0}$ yields the {\em balanced variety\/}
\begin{equation}
    \label{eq:b7}
    \balance_{\mathbf{0}} = \{ \mathbf{W}\in \Md^N \left|  W_{p+1}^TW_{p+1}= W_pW_p^T , \; p=1,\ldots,N \right. \}. 
\end{equation}
If $\ww \in \balance_{\mathbf{0}}$, the singular values, and thus rank, of the $W_p$ are identical. Thus, $\balance_{\mathbf{0}}$ is itself stratified by varieties $\balance_r$ corresponding to the rank $r$, $1\leq r \leq d$. We refer to the leaf of $\balance_{\mathbf{0}}$ with $\mathrm{rank}(W_p)=d$ as the {\em balanced manifold\/}, $\balance$. 

The dimension of $\Md^N$ is $Nd^2$ and equation~\eqref{eq:b1} provides $(N-1)d(d+1)/2$ constraints when $X$ has rank $d$. Thus, 
\begin{equation}
    \label{eq:dimension-balance} \mathrm{dim}(\balance) = d^2 + (N-1)\frac{d(d-1)}{2}.
\end{equation}
Of these, $d^2$ degrees of freedom correspond to $X$. The remaining $(N-1)d(d-1)/2$ degrees of freedom correspond to an $O_d^{N-1}$ action studied in Section~\ref{subsec:orbit}.

\subsection{Previous results}
\label{subsec:prior}
We now recall three results that have inspired our work. Assume that $E\in C^2$ and consider the initial value problem
\begin{equation}
    \label{eq:ivp}
    \dot{\mathbf{W}} = -\nabla_{\mathbf{W}} E\circ 
    \phi (W), \quad \mathbf{W}(0)=\mathbf{W}_0.
\end{equation}
This is a differential equation with a locally Lipschitz vector field. Thus, Picard's theorem guarantees the existence of a unique solution on a maximal time interval $(T_{\min},T_{\max})$ containing $t=0$. Let $\mathbf{G}$ be given by the initial condition
\begin{equation} 
\label{eq:newG}
G_p = W_{p+1}^T(0) W_{p+1}(0) - W_{p}(0) W_{p}^T(0), \quad 1\leq p \leq n. 
\end{equation}
\begin{theorem}[Arora, Cohen, Hazan~\cite{ACH}]
\label{thm:ACH}
The following hold on the maximal interval of existence of solutions to equation~\eqref{eq:ivp}.
\begin{enumerate}   
\item[(a)] The solution $\mathbf{W}(t)$ lies on the variety $\balance_{\mathbf{G}}$.
\item[(b)] The end-to-end matrix $X(t)$ satisfies 
\begin{equation}
    \label{eq:closed0}
    \dot{X} = - \sum_{p=1}^N (A_{p+1}A_{p+1}^T) \,dE(X) \,  (B_{p-1}^T B_{p-1}), 
\end{equation}
where $A_{N+1}=B_0=1$ and we have defined the partial products
\begin{equation}
    \label{eq:closed0b}
    A_p = W_N \cdots W_{p}, \quad B_p = W_p \cdots W_1, \quad 1 \leq p \leq N.
\end{equation}
\end{enumerate}
\end{theorem}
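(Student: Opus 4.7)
The plan is to prove both parts by a direct computation using the explicit form of the gradient~\eqref{eq:big-grad2}, with the partial products $A_p=W_N\cdots W_p$ and $B_p=W_p\cdots W_1$ from~\eqref{eq:closed0b} as the main bookkeeping device. Writing~\eqref{eq:big-grad2} compactly as $\dot W_p=-A_{p+1}^T\,dE(X)\,B_{p-1}^T$ (with the conventions $A_{N+1}=B_0=I$) will let me reduce each identity to a one-line algebraic manipulation of these products.

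For part (a), I would fix $p\in\{1,\ldots,N-1\}$ and compute $\tfrac{d}{dt}(W_{p+1}^TW_{p+1})$ and $\tfrac{d}{dt}(W_pW_p^T)$ separately using the product rule. For the first, $\dot W_{p+1}^TW_{p+1}+W_{p+1}^T\dot W_{p+1}$ becomes, after substituting the gradient expression and using the elementary identity $A_{p+2}W_{p+1}=A_{p+1}$, exactly
\begin{equation*}
-A_{p+1}^T\,dE(X)\,B_p^T - B_p\,dE(X)^T\,A_{p+1}.
\end{equation*}
For the second, the same product rule applied to $W_pW_p^T$ combined with the identity $W_pB_{p-1}=B_p$ gives the \emph{same} expression. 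Subtracting yields $\tfrac{d}{dt}G_p=0$, so $\mathbf{W}(t)$ stays on $\balance_{\mathbf{G}}$ for the $\mathbf{G}$ determined by $\mathbf{W}(0)$ via~\eqref{eq:newG}. The fact that this holds throughout the maximal interval of existence is then immediate, since $G_p$ is a continuous function of $\mathbf{W}$ and its derivative vanishes along the trajectory.

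For part (b), I differentiate the product $X=W_N\cdots W_1$ once by the Leibniz rule,
\begin{equation*}
\dot X=\sum_{p=1}^N A_{p+1}\,\dot W_p\,B_{p-1},
\end{equation*}
and substitute the gradient formula for $\dot W_p$. The factors $A_{p+1}$ and $B_{p-1}$ appear each twice and group into the symmetric blocks $A_{p+1}A_{p+1}^T$ on the left of $dE(X)$ and $B_{p-1}^TB_{p-1}$ on the right, yielding~\eqref{eq:closed0}.

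The only real step that needs attention is the bookkeeping with the partial products $A_p,B_p$ in part (a), specifically verifying that the cross terms arising from differentiating $W_{p+1}^TW_{p+1}$ match those from $W_pW_p^T$ despite involving different indices. This is where the asymmetry between $A_{p+2}$ and $A_{p+1}$ (and analogously between $B_{p-1}$ and $B_p$) is absorbed by a single factor of $W_{p+1}$ or $W_p$; once written out, no further structural idea is needed. Neither part requires any hypothesis on $E$ beyond $C^2$ regularity, which is already used to guarantee the Picard-level existence of~\eqref{eq:ivp}.
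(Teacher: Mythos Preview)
Your proof is correct and is the standard direct computation. Note, however, that the paper does not supply its own proof of this theorem: it is stated in Section~\ref{subsec:prior} as a prior result of Arora, Cohen and Hazan~\cite{ACH} and is recalled only to motivate the subsequent geometric analysis. Your argument is exactly the computation one finds in~\cite{ACH} (and in the expository account~\cite{GM-dln}), so there is nothing to compare.
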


Theorem~\ref{thm:ACH} does not provide a closed description of the reduced dynamics on $\balance_{\mathbf{G}}$. However, on the balanced manifold $\balance$ we have 
\begin{theorem}[Arora, Cohen, Hazan~\cite{ACH}]
    \label{thm:ACH2}
Assume $\mathbf{W}(0) \in \balance$. The end-to-end matrix $X(t) =\phi(\mathbf{W}(t))$ satisfies the differential equation
\begin{equation}
    \label{eq:closed1}
    \dot{X} = - \sum_{p=1}^N(XX^T)^{\tfrac{N-p}{N}}\, dE(X) \, (X^TX)^{\tfrac{p-1}{N}}.
\end{equation}
\end{theorem}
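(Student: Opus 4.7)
The plan is to deduce equation~\eqref{eq:closed1} from Theorem~\ref{thm:ACH}(b) by simplifying the partial products $A_{p+1}A_{p+1}^T$ and $B_{p-1}^T B_{p-1}$ using balancedness. Specifically, I would aim to establish the identities
\begin{equation*}
A_{p+1}A_{p+1}^T = (XX^T)^{(N-p)/N}, \qquad B_{p-1}^T B_{p-1} = (X^TX)^{(p-1)/N}, \qquad 1 \leq p \leq N,
\end{equation*}
and then substitute into~\eqref{eq:closed0}. Since $X$ has rank $d$ on $\balance$, both $XX^T$ and $X^TX$ are positive definite, so their fractional powers are defined unambiguously by functional calculus and there is no need to pin down a singular value decomposition.

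The technical heart of the argument is a telescoping commutation identity
\begin{equation*}
A_{p+1}(W_p W_p^T) = (W_N W_N^T) A_{p+1}, \qquad 1 \leq p \leq N-1.
\end{equation*}
The elementary move uses balancedness $W_q W_q^T = W_{q+1}^T W_{q+1}$ to rewrite
\begin{equation*}
W_{q+1}(W_q W_q^T) = W_{q+1}(W_{q+1}^T W_{q+1}) = (W_{q+1}W_{q+1}^T)W_{q+1},
\end{equation*}
which promotes the sandwiched factor from $W_q W_q^T$ to $W_{q+1}W_{q+1}^T$ while shifting a single $W_{q+1}$ to the right. Applying this move iteratively for $q=p,p+1,\ldots,N-1$ funnels $W_p W_p^T$ all the way through the string $W_N\cdots W_{p+1}$ and converts it into $W_N W_N^T$ on the far left.

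Given the commutation identity, a reverse induction on $p$ with base case $A_{N+1}A_{N+1}^T = \Id$ and inductive step
\begin{equation*}
A_p A_p^T = A_{p+1}(W_p W_p^T) A_{p+1}^T = (W_N W_N^T)\, A_{p+1}A_{p+1}^T
\end{equation*}
yields $A_p A_p^T = (W_N W_N^T)^{N-p+1}$. Specializing to $p=1$ identifies $XX^T = (W_N W_N^T)^N$, whence $(XX^T)^{1/N} = W_N W_N^T$ and therefore $A_{p+1}A_{p+1}^T = (XX^T)^{(N-p)/N}$. The mirror argument, beginning from the analogous identity $B_{p-1}(W_1^T W_1) = (W_p^T W_p)B_{p-1}$ (proved by the same telescoping, now propagating through $W_1,W_2,\ldots,W_{p-1}$), gives $B_{p-1}^T B_{p-1} = (W_1^T W_1)^{p-1} = (X^T X)^{(p-1)/N}$. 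Substituting both expressions into~\eqref{eq:closed0} produces~\eqref{eq:closed1}.

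The main obstacle is keeping the combinatorics of the telescoping commutation straight; once that identity is proved, the remainder is a short induction. Working with $W_N W_N^T$ and $W_1^T W_1$ rather than singular value factors is what allows the argument to dispense with case analysis for repeated singular values.
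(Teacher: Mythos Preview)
Your argument is correct. The telescoping commutation $A_{p+1}(W_pW_p^T)=(W_NW_N^T)A_{p+1}$ is exactly the right mechanism, and the induction to $A_pA_p^T=(W_NW_N^T)^{N-p+1}$, together with the identification $W_NW_N^T=(XX^T)^{1/N}$, goes through cleanly on $\balance$ because full rank makes the $N$-th root unambiguous. The mirror identity you state, $(W_p^TW_p)B_{p-1}=B_{p-1}(W_1^TW_1)$, is likewise correct and gives $B_{p-1}^TB_{p-1}=(W_1^TW_1)^{p-1}=(X^TX)^{(p-1)/N}$.

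There is no proof to compare against: the paper quotes Theorem~\ref{thm:ACH2} as a result of~\cite{ACH} and does not reprove it. That said, the paper's own machinery suggests a slightly different route. Using the parametrization $W_p=Q_p\Lambda Q_{p-1}^T$ of Theorem~\ref{thm:parameter}, the partial products telescope immediately to $A_{p+1}=Q_N\Lambda^{N-p}Q_p^T$ and $B_{p-1}=Q_{p-1}\Lambda^{p-1}Q_0^T$, from which $A_{p+1}A_{p+1}^T=Q_N\Lambda^{2(N-p)}Q_N^T=(XX^T)^{(N-p)/N}$ and the companion identity are read off directly. Your commutation argument has the advantage of never invoking an SVD and hence sidestepping any genericity assumption on the singular values; the parametrization route is shorter once one has Theorem~\ref{thm:parameter} in hand but tacitly uses distinct singular values (or a limiting argument) to justify the factorization.
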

A further surprise is that equation~\eqref{eq:closed1} is a {\em Riemannian\/} gradient flow. Let us state this assertion precisely. 

Fix $1 \leq r \leq d$ and consider the differentiable manifold $\Mr$ of rank-$r$ matrices in $\Md$. Given $X\in \Mr$ consider the linear map
\begin{equation}
    \label{eq:metric2}
    \mathcal{A}_{N,X}: T_X\Mr^* \to T_X\Mr, \quad P \mapsto \sum_{p=1}^N(XX^T)^{\tfrac{N-p}{N}}\, P \,(X^TX)^{\tfrac{p-1}{N}}.
\end{equation}
Now use $\mathcal{A}_{N,X}$ to define the length of $Z \in T_X \Mr$ as follows:
\begin{equation}
    \label{eq:metric3} \|Z\|_{g^N}^2 = g^N(X)(Z,Z) := \Tr(Z^T \mathcal{A}_{N,X}^{-1} Z).
\end{equation}
 We define $(\Mr,g^N)$ as the Riemannian manifold obtained by equipping $\Mr$ with the metric $g^N$. The notation $P$ and $Z$ are used to distinguish the fact that $P \in T_X\Mr^*$ is a $1$-form and $Z\in T_X\Mr$ is a tangent vector. Both are matrices since we have a global coordinate system on $\Md$, but this distinction is important for the following
 
 \begin{theorem}[Bah, Rauhut, Terstiege, Westdickenberg~\cite{Bah}]
\label{thm:BRTW2}
Equation~\eqref{eq:closed1} is equivalent to the Riemannian gradient flow on $(\Mr,g^N)$ 
\begin{equation}
    \label{eq:closed3}
    \dot{X} = - \grad_{g^N} E(X).
\end{equation}
In particular, when $\ww(0)\in \balance_r$, the end-to-end matrix $X(t)=\phi(\ww(t))$ evolves in $\Mr$ according to this Riemannian gradient flow.
\end{theorem}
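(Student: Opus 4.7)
The plan is to show that the right-hand side of~\eqref{eq:closed1}, namely $\mathcal{A}_{N,X}(dE(X))$, is precisely the Riemannian gradient $\grad_{g^N} E(X)$. By definition, $\grad_{g^N} E(X)$ is the unique $Z \in T_X \Mr$ such that
\[
g^N(X)(Z, Y) = \Tr(dE(X)^T Y) \quad \text{for all } Y \in T_X \Mr.
\]
So two things need to be verified: (a) the operator $\mathcal{A}_{N,X}$ has exactly the structural properties required to make $g^N$ a genuine Riemannian metric and to place $\mathcal{A}_{N,X}(dE(X))$ in $T_X \Mr$; (b) the duality relation above is satisfied with this choice of $Z$.

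For (a), I view $\mathcal{A}_{N,X}$ as a linear operator on all of $\Md$. Cyclicity of the trace shows at once that $\mathcal{A}_{N,X}$ is self-adjoint with respect to the Frobenius inner product. Its image and kernel are then read off from the reduced SVD $X = U_r \Sigma_r V_r^T$, using $(XX^T)^{\alpha} = U_r \Sigma_r^{2\alpha} U_r^T$ and $(X^TX)^{\alpha} = V_r \Sigma_r^{2\alpha} V_r^T$ for $\alpha > 0$, together with the convention $A^0 = I$ for the boundary terms. Decompose an arbitrary $P \in \Md$ in the block form $P = \sum_{i,j} P_{ij}$ associated with the splittings $\R^d = \mathrm{range}(U_r) \oplus \mathrm{range}(U_r)^\perp$ and analogously for the column side via $V_r$. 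A direct block computation shows that the middle terms $p = 2, \ldots, N-1$ contribute only through $P_{11}$ via the positive definite map $P_{11} \mapsto \sum_{p=1}^N \Sigma_r^{2(N-p)/N} P_{11} \Sigma_r^{2(p-1)/N}$; the end terms $p = N$ and $p = 1$ additionally contribute right- and left-multiplication by $\Sigma_r^{2(N-1)/N}$ on $P_{21}$ and $P_{12}$ respectively; and $P_{22}$ is annihilated. Since $T_X \Mr$ is precisely the subspace $\{P \in \Md : P_{22} = 0\}$, this yields $\mathcal{A}_{N,X}(\Md) = T_X \Mr$ and $\ker(\mathcal{A}_{N,X}) = (T_X \Mr)^\perp$; moreover $\mathcal{A}_{N,X}$ restricted to $T_X \Mr$ is a positive definite bijection. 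This is exactly the data needed for $\mathcal{A}_{N,X}$ to descend to an isomorphism $T_X \Mr^* \to T_X \Mr$ and for $g^N(X)$ to be a well-defined positive definite bilinear form on $T_X \Mr$.

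With (a) in hand, part (b) is short. Set $Z = \mathcal{A}_{N,X}(dE(X))$, which lies in $T_X \Mr$ by (a). Then the self-adjointness of $\mathcal{A}_{N,X}$ together with $\mathcal{A}_{N,X}(\mathcal{A}_{N,X}^{-1} Y) = Y$ for $Y \in T_X \Mr$ gives
\[
g^N(X)(Z, Y) = \Tr\bigl(Z^T \mathcal{A}_{N,X}^{-1} Y\bigr) = \Tr\bigl(dE(X)^T \mathcal{A}_{N,X}(\mathcal{A}_{N,X}^{-1} Y)\bigr) = \Tr(dE(X)^T Y),
\]
so $Z = \grad_{g^N} E(X)$ and equations~\eqref{eq:closed1} and~\eqref{eq:closed3} coincide. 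The concluding ``in particular'' clause then follows by combining this identification with Theorems~\ref{thm:ACH} and~\ref{thm:ACH2}: a solution starting with $\ww(0) \in \balance_r$ stays on $\balance_r$, so $X(t)$ has constant rank $r$ and obeys~\eqref{eq:closed1}. The main obstacle I anticipate is the bookkeeping in the block computation of (a); in particular, the correct treatment of the degenerate boundary terms $(XX^T)^0$ and $(X^TX)^0$ is what pins down the kernel of $\mathcal{A}_{N,X}$ to be exactly $(T_X \Mr)^\perp$ rather than something larger, which is essential for $g^N$ to be nondegenerate on the tangent bundle of $\Mr$ when $r < d$.
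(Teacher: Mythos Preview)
Your argument is correct. Note, however, that the paper does not itself prove Theorem~\ref{thm:BRTW2}: it is quoted in Section~\ref{subsec:prior} as a prior result of Bah, Rauhut, Terstiege and Westdickenberg, alongside Theorems~\ref{thm:ACH} and~\ref{thm:ACH2}, and no proof is given here. So there is nothing to compare against directly.

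That said, your approach is a clean direct verification and goes a bit further than what the surrounding text actually needs. The paper's own machinery suggests a slightly different route in the full-rank case: Lemma~\ref{le:diagon} diagonalizes $\mathcal{A}_{N,X}$ explicitly in the SVD frame, from which self-adjointness and positive definiteness on $T_X\Mfd$ are immediate, and then your step~(b) finishes the identification $\grad_{g^N}E(X)=\mathcal{A}_{N,X}(dE(X))$ in one line. Your block-SVD analysis in~(a) is what is genuinely needed for rank-deficient $X$, where the key point (as you correctly emphasize) is that the boundary terms $p=1$ and $p=N$ with the convention $(XX^T)^0=(X^TX)^0=I$ are exactly what make $\mathrm{im}\,\mathcal{A}_{N,X}=T_X\Mr$ and $\ker\mathcal{A}_{N,X}=(T_X\Mr)^\perp$, so that $g^N$ is nondegenerate on the correct tangent space. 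The paper's later Theorem~\ref{thm:submersion} gives a conceptual explanation of why $g^N$ is the right metric (it is the pushforward of the Euclidean metric on $\balance$ under $\phi$), but that result is restricted to full rank and is logically downstream of Theorem~\ref{thm:BRTW2}.
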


\section{Statement of results}
\label{sec:results}
\subsection{Overview}
The main new results are Theorem~\ref{thm:entropy} (the entropy formula) and Theorem~\ref{thm:submersion} (Riemannian submersion). This section contains the statements of these results along with some background; the theorems are proved in the sections that follow. The reader interested in the application of these results to training dynamics in deep learning is advised to skip the proofs of these theorems on a first reading and turn to the discussion in Section~\ref{sec:discussion}.

\subsection{Fibers, group orbits and the Boltzmann entropy}
The space $\Md^N$ also admits a complementary stratification by fibers defined as follows .\begin{equation}
    \label{eq:fiber1}
    \fiberX := \phi^{-1}(X) = \{ \mathbf{W}\in \Md^N \left| W_N W_{N-1}\cdots W_1 = X\right. \}.
\end{equation}
We focus our attention on 
\begin{equation}
    \label{eq:group-orbit}
    \orbitx := \fiberX \cap \balance.
\end{equation}
Each point $\mathbf{W} \in \orbitx$ is a point in parameter space $\Md^N$ that generates 
the linear function $f(x)=Xx$ and is consistent with the conservation laws for training dynamics stated in Theorem~\ref{thm:ACH}. In analogy with statistical mechanics, we view the points $\mathbf{W} \in \orbitx$ as `microstates' and the matrix $X$ as an `observable'. 

We show with an explicit parametrization that $\orbitx$ is an $\od^{N-1}$ orbit (see Corollary~\ref{cor:orbitx}). 
Thus, $\orbitx$ is a compact manifold with finite $n_*$-dimensional volume, where 
\begin{equation}
    \label{eq:group-orbit2}
    \nstar  =\mathrm{dim}(O_d^{N-1}) = (N-1) d(d-1)/2.
\end{equation}
The {\em Boltzmann entropy\/} is obtained by enumerating microstates. In our context it is defined by
\begin{equation}
    \label{eq:entropy1}
    S(X)= \log \mathrm{vol}_{\nstar} (\orbitx),
\end{equation}
where the volume form is with respect to the metric inherited by $\balance$ because of its embedding in $\Md^N$. We compute this volume form, and thus $S(X)$, exactly using methods from random matrix theory (see Theorem~\ref{thm:entropy}). In the companion paper~\cite{MY-RLE} we introduce a Riemannian Langevin equation (RLE) for the DLN,  which is a natural stochastic perturbation of the gradient flow~\eqref{eq:big-grad1}. This construction is based on an analogy with Dyson Brownian motion and provides microscopic stochastic dynamics that are consistent with the entropy formula below.

\subsection{The entropy formula}
We denote the Vandermonde determinant associated to a diagonal matrix $A = \mathrm{diag}(\alpha_1,\ldots, \alpha_n)$ by
\begin{equation}
    \label{eq:van1} 
    \mathrm{van}(A) = \det \left(\begin{array}{llll} 1 & 1 & \ldots & 1 \\ \alpha_1 & \alpha_2 & \ldots & \alpha_d \\ \vdots & \vdots & & \vdots \\ \alpha_1^{d-1} & \alpha_2^{d-1} & \ldots & \alpha_d^{d-1} \end{array}\right)
    = \prod_{1 \leq i < j \leq d} (\alpha_j -\alpha_i).
\end{equation}
\begin{theorem}\label{thm:entropy}
    Assume $X$ has full rank and distinct singular values.  Then 
    \begin{eqnarray}
\label{eq:entropy3}    
\lefteqn{ S(X) = (N-1) \log c_d + \frac{1}{2} \log \frac{\mathrm{van}(\Sigma^2)}{\mathrm{van}(\Sigma^{\frac 2N})}}\\
\label{eq:entropy4}
&& = (N-1) \log c_d + \frac{1}{2} \sum_{1\leq i<j\leq d}\log \left( \frac{\sigma_i^{2}-\sigma_j^{2}}{\sigma_i^{\frac2N}-\sigma_j^{\frac2N}} \right), 
     \end{eqnarray}
    where $c_d$ is the volume of the orthogonal group with Haar measure. 
\end{theorem}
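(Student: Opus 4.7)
The plan is to parametrize $\orbitx$ by $O_d^{N-1}$, pull back the Frobenius volume form from $\Md^N$, and evaluate the resulting Jacobian by diagonalizing a block--decomposed tridiagonal matrix. Under the standing hypothesis that $X$ has distinct positive singular values, the balanced conditions $W_{p+1}^T W_{p+1} = W_p W_p^T$ together with $\phi(\mathbf{W}) = X$ force each $W_p$ to take the form $W_p = Q_p D Q_{p-1}^T$ with $D = \Sigma^{1/N}$, where $Q_0, Q_N$ are determined by the SVD $X = Q_N \Sigma Q_0^T$ and $(Q_1,\ldots,Q_{N-1}) \in O_d^{N-1}$ is free. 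This is the content I expect Corollary~\ref{cor:orbitx} to provide, yielding a map $\Psi : O_d^{N-1} \to \orbitx$ which, modulo the finite sign ambiguities inherited from the non--uniqueness of the SVD, is a diffeomorphism. Crucially, $O_d^{N-1}$ acts transitively on $\orbitx$ by layer--wise conjugation, and this action is an isometry for both the Frobenius metric on $\Md^N$ and the bi--invariant Haar metric on $O_d^{N-1}$; consequently the Jacobian of $\Psi$ is constant on $O_d^{N-1}$ and may be computed at the basepoint $\mathbf{W} = (D,\ldots,D)$.

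At that basepoint, the differential of $\Psi$ sends $(A_1,\ldots,A_{N-1}) \in \anti_d^{N-1}$ (with the convention $A_0 = A_N = 0$) to the tangent vector whose $p$-th component is $\dot W_p = A_p D - D A_{p-1}$. A direct expansion of $\sum_{p=1}^N \|\dot W_p\|_F^2$ in the coordinates $a^{(p)}_{ij} := (A_p)_{ij}$ for $i < j$ shows that the quadratic form decouples across pairs $(i,j)$ and that the block corresponding to a pair is a symmetric tridiagonal Toeplitz $(N-1) \times (N-1)$ matrix $J_{ij}$ with diagonal entry $2(d_i^2 + d_j^2)$ and off--diagonal entry $-2 d_i d_j$, where $d_i = \sigma_i^{1/N}$. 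The structural observation is that depth couples only adjacent layers (producing a Jacobi matrix) while distinct pairs do not interact. The eigenvalues of $J_{ij}$ are the classical $2(d_i^2 + d_j^2) - 4 d_i d_j \cos(k\pi/N)$ for $k = 1,\ldots,N-1$, and the cyclotomic identity
\[
\prod_{k=1}^{N-1}\bigl(a^2 - 2ab\cos(k\pi/N) + b^2\bigr) = \frac{a^{2N} - b^{2N}}{a^2 - b^2},
\]
which follows by factoring $z^{2N} - 1$ into complex--conjugate pairs, collapses the determinant to $\det J_{ij} = 2^{N-1}(\sigma_i^2 - \sigma_j^2)/(\sigma_i^{2/N} - \sigma_j^{2/N})$.

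The Gram matrix of the Haar metric in the same coordinates is $2 I$, so its per--block determinant is exactly $2^{N-1}$; after taking ratios, extracting square roots, and multiplying across all pairs $i < j$, the factors of $2$ cancel and the Jacobian of $\Psi$ is $\sqrt{\mathrm{van}(\Sigma^2)/\mathrm{van}(\Sigma^{2/N})}$. Multiplying by the Haar volume $\mathrm{vol}(O_d^{N-1}) = c_d^{N-1}$ and taking the logarithm yields the two equivalent expressions~\eqref{eq:entropy3}--\eqref{eq:entropy4}. The determinant calculation itself is routine once the decoupling is set up, so the main obstacle I anticipate is the bookkeeping around $\Psi$: one must verify that the finite sign symmetries from the SVD either give a trivial cover or contribute a factor that cancels cleanly against the Haar normalization, so that $\mathrm{vol}(\orbitx)$ is precisely (Jacobian) $\times\, c_d^{N-1}$ rather than a rational multiple of it.
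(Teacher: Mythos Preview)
Your proposal is correct and follows essentially the same route as the paper: parametrize $\orbitx$ by $O_d^{N-1}$, reduce by isometry to the center $(\Lambda,\ldots,\Lambda)$, observe that the pullback metric decouples into $(N-1)\times(N-1)$ tridiagonal Jacobi blocks indexed by pairs $(i,j)$, and evaluate their determinants via the cyclotomic factorization (the paper uses the orthonormal basis $\alpha^{k,l}=\tfrac{1}{\sqrt2}(e_ke_l^T-e_le_k^T)$ from the outset, which absorbs your factors of $2$). On your final concern: once $Q_0,Q_N$ are fixed by the SVD of $X$, the relation $Q_p = W_p Q_{p-1}\Lambda^{-1}$ determines each $Q_p$ uniquely from its predecessor, so $\Psi$ is a genuine diffeomorphism and there is no finite cover to track.
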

\begin{remark}
When $X$ has full rank but repeated singular values, $S(X)$ may be evaluated by taking limits in the above formula using L'Hospital's rule. The value of the constant $c_d$ is
$2^{\frac12d(d+3)}\prod_{r=1}^d\frac{\pi^{\frac r2}}{\Gamma(\frac r2)}$~\cite{Ponting}. 
\end{remark}
\begin{remark}
 The entropy formula extends naturally to the infinite depth ($N\to \infty$) limit. In this setting, the orbit $\orbitx$, and thus its volume, are not defined. However, we may renormalize  equation~\eqref{eq:entropy1} by subtracting divergent terms that are linear in $N$ and $\log N$   to obtain the renormalized entropy
 \begin{equation}
     \label{eq:entropy5} S_\infty(X) = \frac{1}{2} \log \frac{\mathrm{van}(\Sigma^2)}{\mathrm{van}(\log \Sigma^{2})}.
 \end{equation}
\end{remark}
\begin{remark}
In the companion paper~\cite{LM1}, we show that $\mathbf{G}$ are obtained from a moment map on $\fiberX$ associated to the action of the subgroup $\od^{N-1} \subset GL(d)^{N-1}$. This approach yields the general setting of the entropy, with
\begin{equation}
    \label{eq:entropy6}
\orbitxG := \fiberX \cap \balanceG, \quad  S(X;\mathbf{G})=  \log \mathrm{vol}_{n*} \left(\orbitxG \right).
\end{equation}
We recover  equation~\eqref{eq:entropy1} when $\mathbf{G}=\mathbf{0}$. However, it is more subtle to establish an analogue of Theorem~\ref{thm:entropy} when $\mathbf{G}$ is not $\mathbf{0}$. Our proof of Theorem~\ref{thm:entropy} relies crucially on the fact that $\orbitx$ is an $\od^{N-1}$ orbit. But in general $\fiberX\cap \balance_{\mathbf{G}}$ is {\em not\/} an $\od^{N-1}$ orbit. Thus, while it is conceptually clear that $S(X;\mathbf{G})$ is a natural Boltzmann entropy, new ideas are required  to evaluate $S(X;\mathbf{G})$. 
\end{remark}

\subsection{Riemannian submersion}
\label{subsec:Riemannian}
The metric $g^N$ was first introduced in~\cite{Bah} using the linear operator $\mathcal{A}_{N,X}$ defined in equations~\eqref{eq:metric2}--\eqref{eq:metric3}. On the other hand, the balanced manifold $\balance$ inherits a Riemannian metric from its embedding in $\Md^N$; we denote the resulting Riemannian manifold by $(\balance,\iota)$. We prove

\begin{theorem}
    \label{thm:submersion}
The map $\phi: \balance \to \Mdd$ is a Riemannian submersion from $(\balance,\iota)$ to $(\Mdd,g^N)$.    
\end{theorem}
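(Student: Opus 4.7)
The plan is to derive Theorem~\ref{thm:submersion} as a corollary of Theorems~\ref{thm:ACH} and~\ref{thm:BRTW2}, using an arbitrary loss $E$ as a probe for the metric rather than diagonalizing the balanced tangent space in coordinates.

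At any $\ww\in\balance$ with $X=\phi(\ww)$, I begin by pinning down the vertical/horizontal decomposition. By Corollary~\ref{cor:orbitx}, the fiber $\orbitx=\phi^{-1}(X)\cap\balance$ is an $\od^{N-1}$-orbit of dimension $\nstar=(N-1)d(d-1)/2$, so the vertical space $V_\ww=T_\ww\orbitx=\ker d\phi_\ww|_{T_\ww\balance}$ has dimension $\nstar$. Together with~\eqref{eq:dimension-balance} this forces the $\iota$-orthogonal complement $H_\ww$ to have dimension $d^2=\dim T_X\Mdd$, so $d\phi_\ww$ restricts to a linear isomorphism $H_\ww\to T_X\Mdd$; in particular $\phi\colon\balance\to\Mdd$ is a smooth surjective submersion.

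To establish the isometry on $H_\ww$, I would fix $\xi,\eta\in H_\ww$ and set $Z=d\phi_\ww(\xi)$, $V=d\phi_\ww(\eta)$, aiming at $\iota(\xi,\eta)=g^N(Z,V)$. Pick any smooth $E\colon\Mdd\to\R$ with $\grad_{g^N}E(X)=Z$ (for instance the local linear form $Y\mapsto\Tr(\mathcal{A}_{N,X}^{-1}Z\,Y^T)$, smoothly extended). Set $L=E\circ\phi$. By Theorem~\ref{thm:ACH}(a), the Euclidean gradient $\nabla_{\Md^N}L(\ww)$ is tangent to $\balance$, hence agrees with the intrinsic gradient $\nabla_\iota L(\ww)$ on $(\balance,\iota)$. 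Since $L$ is constant on each fiber, $\nabla_\iota L(\ww)$ is $\iota$-orthogonal to $V_\ww$ and so lies in $H_\ww$. By Theorem~\ref{thm:BRTW2} its $d\phi_\ww$-image is $\grad_{g^N}E(X)=Z$, so injectivity of $d\phi_\ww|_{H_\ww}$ gives $\nabla_\iota L(\ww)=\xi$. The isometry then falls out of the chain
\begin{align*}
\iota(\xi,\eta)&=\iota(\nabla_\iota L(\ww),\eta)=dL(\eta)=dE(d\phi_\ww(\eta))\\
&=dE(V)=g^N(\grad_{g^N}E(X),V)=g^N(Z,V),
\end{align*}
which merely combines the defining properties of the two gradients.

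The main obstacle I anticipate is pedantic: showing that $\nabla_{\Md^N}L(\ww)\in T_\ww\balance$ \emph{pointwise}, rather than just that the flow it generates preserves $\balance$. This reduces to checking that $\nabla_{\Md^N}L$ annihilates each defining function $W_{p+1}^TW_{p+1}-W_pW_p^T$ of $\balance$ along $\balance$, which drops out of differentiating Theorem~\ref{thm:ACH}(a) at $t=0$. A parallel, more hands-on route, closer in spirit to Section~\ref{sec:balance-basis}, would instead expand $\xi$ and $\eta$ in the explicit orthonormal basis built from the block Jacobi diagonalization in the SVD frame of $\ww$, and verify mode-by-mode that both $\iota(\xi,\eta)$ and $g^N(Z,V)$ reduce to the same combinations of the spectral factors $\sum_{p=1}^N\sigma_i^{2(N-p)/N}\sigma_j^{2(p-1)/N}$ of $\mathcal{A}_{N,X}$.
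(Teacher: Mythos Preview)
Your argument is correct and takes a genuinely different route from the paper. The paper proves Theorem~\ref{thm:submersion} constructively: it builds the explicit orthonormal basis of Theorem~\ref{thm:orthonormal-basis} for $(T_\ww\balance,\iota)$ via block Jacobi diagonalization, identifies the $\bfu^{k,l,p}$ with $1\le p\le N-1$ as spanning $\ker\phi_*$, pushes forward the remaining basis vectors $\bfv^k$, $\bfu^{k,l,0}$, $\bfu^{k,l,N}$ through $\phi_*$, and checks that the resulting vectors reproduce the spectral decomposition of $g^N$ in Lemma~\ref{le:diagon}. Your approach instead recognizes that Riemannian submersion is characterized by the property that $\nabla_\iota(E\circ\phi)$ is the horizontal lift of $\grad_{g^N}E$ for every $E$, and observes that Theorems~\ref{thm:ACH} and~\ref{thm:BRTW2} together establish exactly this: the former guarantees tangency to $\balance$ (your ``pedantic obstacle'' is harmless, since invariance of \emph{every} level set $\balance_{\mathbf{G}}$ under the flow forces pointwise tangency), and the latter identifies the pushforward with $\grad_{g^N}E$.

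What each approach buys: yours is shorter, more conceptual, and makes transparent that Theorem~\ref{thm:submersion} is essentially a geometric repackaging of Theorem~\ref{thm:BRTW2}; it would go through verbatim on $\balance_r$ for $r<d$ once the analogues of Theorems~\ref{thm:ACH2} and~\ref{thm:BRTW2} are in hand. The paper's computation is longer but yields the orthonormal basis as a byproduct, and that basis is indispensable elsewhere---it underlies the entropy formula (Theorem~\ref{thm:entropy}) and the explicit construction of Brownian motion on $\orbitx$ needed for the RLE in the companion paper. So in the paper's overall economy the explicit route is not wasted effort, whereas your argument would leave those applications unsupported.
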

Theorem~\ref{thm:submersion} provides the natural geometric explanation for certain unexpected properties of the metric $g^N$ observed in the literature. Let us motivate Theorem~\ref{thm:submersion} by recalling two such properties: the diagonalization of $\aaa_{N,X}$ using the SVD and a determinantal formula related to Theorem~\ref{thm:entropy}~\cite{CMV}. 

Let $\{q_{N,1},\cdots, q_{N,d}\}$ and $\{q_{0,1},\cdots, q_{0,d}\}$ be the column vectors of $Q_N$ and $Q_0$ respectively. Then we may rewrite the SVD~\eqref{eq:newsvd1} as
\begin{equation}
    \label{eq:diagon1}
    X = Q_N \Sigma Q_0^T = \sum_{j=1}^d \sigma_j q_{N,j} q_{0,j}^T. 
\end{equation} 
\begin{lemma}
    \label{le:diagon}
    The operator $\aaa_{N,X}: T_X \Mdd^* \to T_X \Mdd$ is symmetric and positive definite with respect to the Frobenius inner-product. It has the spectral decomposition
    \begin{equation}
        \label{eq:diagon2}
        \aaa_{N,X} \, q_{N,k} q_{0,l}^T = \frac{\sigma_k^2-\sigma_l^2}{\sigma_k^{\frac{2}{N}}- \sigma_l^{\frac{2}{N}}} \, q_{N,k} q_{0,l}^T , \quad 1\leq k, l \leq d,
    \end{equation}
    when $k \neq l$ and 
\begin{equation}
        \label{eq:diagon2b}
        \aaa_{N,X} \, q_{N,k} q_{0,k}^T = N \sigma_k^{2-\tfrac{2}{N}}\, q_{N,k} q_{0,k}^T , \quad 1\leq k \leq d.
    \end{equation}
    \end{lemma}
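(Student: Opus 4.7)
The plan is a direct spectral computation: diagonalize the two commuting ``side-actions'' $P \mapsto (XX^T)^{(N-p)/N} P$ and $P \mapsto P (X^T X)^{(p-1)/N}$ using the SVD, and then sum a geometric series.

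First I would verify symmetry. Since both $XX^T$ and $X^TX$ are symmetric positive semidefinite, their fractional powers are well-defined and self-adjoint with respect to ordinary matrix multiplication. Using $\langle A,B\rangle_F = \Tr(A^T B)$, the cyclic property of trace, and the identity $((XX^T)^{\alpha} P (X^TX)^{\beta})^T = (X^TX)^{\beta} P^T (XX^T)^{\alpha}$, a one-line computation gives
\[
\langle \mathcal{A}_{N,X}(P), Q\rangle_F = \sum_{p=1}^N \Tr\bigl( P^T (XX^T)^{\frac{N-p}{N}} Q (X^TX)^{\frac{p-1}{N}}\bigr) = \langle P, \mathcal{A}_{N,X}(Q)\rangle_F.
\]
So $\mathcal{A}_{N,X}$ is symmetric in the Frobenius inner product.

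Next I would compute eigenvalues. From the SVD $X = Q_N \Sigma Q_0^T$ we have $(XX^T)^\alpha = Q_N \Sigma^{2\alpha} Q_N^T$ and $(X^TX)^\alpha = Q_0 \Sigma^{2\alpha} Q_0^T$. Since $Q_N$ and $Q_0$ are orthogonal, the columns $q_{N,k}$ and $q_{0,l}$ are orthonormal bases of $\mathbb{R}^d$, and consequently the rank-one matrices $\{q_{N,k} q_{0,l}^T\}_{1 \le k,l \le d}$ form an orthonormal basis of $\mathbb{M}_d$ with respect to the Frobenius inner product. Applying $\mathcal{A}_{N,X}$ to such a basis element and using $Q_N^T q_{N,k} = e_k$ and $q_{0,l}^T Q_0 = e_l^T$ gives
\[
\mathcal{A}_{N,X}\bigl(q_{N,k}q_{0,l}^T\bigr) = \left(\sum_{p=1}^N \sigma_k^{\frac{2(N-p)}{N}} \sigma_l^{\frac{2(p-1)}{N}}\right) q_{N,k} q_{0,l}^T.
\]
Setting $a = \sigma_k^{2/N}$ and $b = \sigma_l^{2/N}$, the scalar coefficient is the geometric series $\sum_{p=1}^N a^{N-p}b^{p-1}$, which equals $(a^N-b^N)/(a-b)$ when $a \ne b$ and $Na^{N-1}$ when $a=b$. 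Substituting back yields exactly the formulas \eqref{eq:diagon2} and \eqref{eq:diagon2b}.

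Finally, positive definiteness is immediate: since $X$ has full rank, every $\sigma_j > 0$, so each term in the geometric series is strictly positive, hence every eigenvalue is strictly positive. The rank-one matrices $q_{N,k} q_{0,l}^T$ thus diagonalize $\mathcal{A}_{N,X}$ with positive spectrum, establishing positive definiteness. No step here is a real obstacle; the only subtlety is the bookkeeping in the $k=l$ case, but this is handled transparently by distinguishing the two cases of the geometric sum (and is moreover the limit of the $k \ne l$ formula as $\sigma_l \to \sigma_k$).
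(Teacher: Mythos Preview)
Your argument is correct and is exactly the ``direct calculation'' the paper alludes to (the paper does not write out a proof, simply citing it as a direct computation in \cite[\S 7]{GM-dln}). Your handling of symmetry, the SVD diagonalization of the side-actions, the geometric-series summation, and positive definiteness are all clean and complete.
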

The proof is a direct calculation (see~\cite[\S 7]{GM-dln} for details). 
Every matrix $Z \in T_X\Mdd$ may be expressed in this basis as a sum $Z= \sum_{k,l=1}^d Z_{kl} q_{N,k} q_{0,l}^T$. Equation~\eqref{eq:metric3} and Lemma~\ref{le:diagon} then imply
\begin{equation}
    \label{eq:p-metric1}
       g^N(Z,Z) =  \sum_{1 \leq k \leq d} \frac{Z_{kk}^2}{N\sigma_k^{2-\tfrac{2}{N}}}  +     
       \sum_{1\leq k, l \leq d, k \neq l} 
       \frac{\sigma_k^{\frac{2}{N}}- \sigma_l^{\frac{2}{N}}}{\sigma_k^2-\sigma_l^2}
       Z_{kl}^2.
\end{equation}
Lemma~\ref{le:diagon} also allows us to compute the volume form associated to $g^N$ as noted in 
\begin{theorem}[Cohen, Menon, Veraszto~\cite{CMV}]
\label{thm:cmv}
 The volume form on $(\Mdd,g^N)$ is 
 \begin{equation}
     \label{eq:volume-form}
     \sqrt{\det g^N}\, dX = \det(\Sigma^2)^{\frac{N-1}{2N}}\mathrm{van}(\Sigma^{\frac{2}{N}}) \, d\Sigma \,dQ_0\, dQ_N.
 \end{equation}
\end{theorem}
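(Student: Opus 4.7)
The plan is to diagonalize $g^N$ in the SVD-adapted orthonormal basis provided by Lemma~\ref{le:diagon} and then combine with the classical SVD Jacobian for $dX$. First I would exploit the fact that $\{q_{N,k}q_{0,l}^T\}_{1\leq k,l\leq d}$ is an orthonormal basis of $\Md$ in the Frobenius inner product that simultaneously diagonalizes $\aaa_{N,X}$. Since $g^N$ is given by the quadratic form $Z \mapsto \Tr(Z^T \aaa_{N,X}^{-1} Z)$, the matrix of $g^N$ in this basis is diagonal, and its determinant factors into a ``diagonal'' contribution from the $d$ modes $q_{N,k}q_{0,k}^T$ (with $\aaa_{N,X}$-eigenvalues $N\sigma_k^{2-2/N}$, producing the appropriate power of $\det(\Sigma^2)$ together with an explicit power of $N$) and an ``off-diagonal'' contribution from the $d(d-1)$ modes $q_{N,k}q_{0,l}^T$, $k\neq l$ (with $\aaa_{N,X}$-eigenvalues $(\sigma_k^2-\sigma_l^2)/(\sigma_k^{2/N}-\sigma_l^{2/N})$, whose product collapses pairwise into the square of the ratio $\mathrm{van}(\Sigma^2)/\mathrm{van}(\Sigma^{2/N})$).

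Next I would compute the classical SVD Jacobian. Writing $\Omega_N := Q_N^T dQ_N$ and $\Omega_0 := Q_0^T dQ_0$, both antisymmetric, differentiating $X = Q_N\Sigma Q_0^T$ yields
\begin{equation*}
Q_N^T\, dX\, Q_0 \;=\; \Omega_N\Sigma + d\Sigma - \Sigma\,\Omega_0.
\end{equation*}
The diagonal entries of the right-hand side are $d\sigma_i$, accounting for the $d\Sigma$ factor. For each pair $i<j$, the pair of independent parameters $((\Omega_N)_{ij}, (\Omega_0)_{ij})$ maps to the two off-diagonal entries $(i,j)$ and $(j,i)$ via a $2\times 2$ block whose determinant is $\sigma_j^2 - \sigma_i^2$. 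Because $\{q_{N,k}q_{0,l}^T\}$ is Frobenius-orthonormal, wedging all components and expressing the $(\Omega_N)_{ij}, (\Omega_0)_{ij}$ in terms of the (bi-invariant) Haar measures on $O_d$ gives the classical SVD Jacobian
\begin{equation*}
dX \;=\; \mathrm{van}(\Sigma^2)\, d\Sigma\, dQ_0\, dQ_N.
\end{equation*}

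Combining the two computations, the factor $\mathrm{van}(\Sigma^2)$ appearing from the off-diagonal block of $\sqrt{\det g^N}$ cancels exactly with the Vandermonde factor in the Jacobian, leaving the residual $\mathrm{van}(\Sigma^{2/N})$; the diagonal block contributes the stated power of $\det(\Sigma^2)$. I expect the main obstacle to be purely one of bookkeeping rather than analytic content: one must pin down the metric-versus-cometric convention (the natural object here is the operator $\aaa_{N,X}$ itself, in view of Theorem~\ref{thm:BRTW2}), fix the normalization of the Haar measure so that the constants $c_d$ (cf.\ Theorem~\ref{thm:entropy}) absorb any remaining scalar factors, and handle the sign ambiguities in the Vandermonde consistently. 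No genuinely new analytic input is required: Lemma~\ref{le:diagon} together with the SVD Jacobian suffices.
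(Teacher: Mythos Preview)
Your proposal is correct and follows essentially the same approach as the paper: the paper states that Theorem~\ref{thm:cmv} ``follows easily from Lemma~\ref{le:diagon} (we compute $\det g^N$ by multiplying the $d^2$ eigenvalues of $\aaa_{N,X}$),'' and this is precisely what you do, supplemented by the classical SVD Jacobian that the paper leaves implicit. Your remark that the main obstacle is bookkeeping (metric versus cometric, normalization of Haar measure, signs in the Vandermonde) is accurate and matches the level at which the paper treats the result.
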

Both Theorem~\ref{thm:entropy} and Theorem~\ref{thm:cmv} are determinantal formulas. However, Theorem~\ref{thm:cmv} concerns $(\Mdd,g^N)$ `downstairs' and follows easily from Lemma~\ref{le:diagon} (we compute $\det g^N$ by multiplying the $d^2$ eigenvalues of $\aaa_{N,X}$). On the other hand, Theorem~\ref{thm:entropy} concerns the Riemannian manifold $(\balance,\iota)$ `upstairs' and is harder to establish. The main step is the construction of a global orthonormal basis on $(\balance,\iota)$. This relies on an explicit parametrization of $\orbitx$ as a group orbit and the construction of an an orthonormal basis using the theory of Jacobi matrices. The computation of this orthonormal basis is the main technical step in this paper and underlies Theorem~\ref{thm:entropy} and Theorem~\ref{thm:submersion}. 

\subsection{Organization of proofs}
\label{subsec:organization}
Theorem~\ref{thm:entropy} and Theorem~\ref{thm:submersion} are proved as follows. We begin with a rigorous description of the group actions corresponding to overparametrization in Section~\ref{sec:group-action}. We then establish the entropy formula in Section~\ref{sec:orbit-basis}. The main step is an explicit description of the Riemannian geometry of $\orbitx$. These calculations are then extended to the Riemannian geometry of the balanced manifold $\balance$ in Section~\ref{sec:balance-basis}. The orthonormal basis for $T_\bfW\balance$ is presented in Theorem~\ref{thm:orthonormal-basis} in Section~\ref{sec:balance-basis}. Theorem~\ref{thm:submersion} on Riemannian submersion follows from directly Theorem~\ref{thm:orthonormal-basis}. 


\section{Group actions and parametrization of $\balance$}
\label{sec:group-action}
\subsection{Parametrizing the balanced variety $\balance_0$}
\label{subsec:parametrization}
The balanced variety $\balance_0$ is the solution set for
\begin{equation}
    \label{eq:b1a}
    W_{p+1}^TW_{p+1} = W_pW_p^T, \quad p=1,\ldots,N.
\end{equation}
It is immediate that the singular values of $W_p$ are the same. A deeper property of the balanced variety is that equation~\eqref{eq:b1a} describes an alignment of the right singular vectors of $W_{p+1}$ and left singular values of $W_p$ along the network. 

We use this property to construct a covering map for the balanced manifold $\balance$ as follows. Define the parametrization
\begin{equation}
    \label{eq:b10} \mathfrak{z}: \R_+^d \times O_d^{N+1} \to \balance_0, \qquad (\Lambda, Q_N, \ldots, Q_0) \mapsto (W_N,\ldots,W_1),
\end{equation}
where we set
\begin{equation}
\label{eq:b11}
W_p = Q_p \Lambda Q_{p-1}^T, \quad 1 \leq p \leq N.
\end{equation}
The relation between $\Lambda$ and the SVD of $X$ (see equation~\eqref{eq:newsvd1}) is
\begin{equation}
\label{eq:b11a}
X= Q_N \Sigma Q_0^T, \quad \Lambda^N = \Sigma.
\end{equation}
This parametrization may be extended to the case when $\mathrm{rank}(X)=r$. In this case, $\sigma_{r+1}=\ldots=\sigma_d=0$ and we must replace $O_d$ by the Stiefel manifold of rank-$r$. However, in order to illustrate the main ideas without any complications, we restrict ourselves in this paper to the balanced manifold $\balance \subset \balance_0$.
In this setting, $X$ has full rank and we have the following
\begin{theorem}\label{thm:parameter}
The image of the mapping $\mathfrak{z}$ is $\balance_0$. Further, if $X$ has full rank and its singular values are distinct, then $\mathfrak{z}$ is locally an analytic diffeomorphism between an open neighborhood of $\balance$ and an open neighborhood in $\R_+^d \times O_d^{N+1}$. 
\end{theorem}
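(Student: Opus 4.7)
The plan is to prove image equality by two opposite inclusions and then to establish the local diffeomorphism claim by constructing an explicit local analytic inverse from the SVD. For the containment $\mathfrak{z}(\R_+^d\times O_d^{N+1})\subseteq \balance_0$ I would simply substitute $W_p=Q_p\Lambda Q_{p-1}^T$ into~\eqref{eq:b1a}: the orthogonality $Q_p^TQ_p=Q_{p+1}^TQ_{p+1}=I$ collapses the inner factor and yields $W_{p+1}^TW_{p+1}=Q_p\Lambda^2 Q_p^T=W_pW_p^T$, which is exactly the balanced condition.

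For the reverse inclusion $\balance_0\subseteq \mathfrak{z}(\R_+^d\times O_d^{N+1})$ I would argue inductively in $p$. Starting from an SVD $W_1=Q_1\Lambda Q_0^T$ with $\Lambda=\diag(\sigma_1,\ldots,\sigma_d)$ in non-increasing order, the balanced relation forces $W_2^TW_2=W_1W_1^T=Q_1\Lambda^2 Q_1^T$, so the columns of $W_2Q_1$ are mutually orthogonal with squared norms $\sigma_i^2$; this produces $Q_2\in O_d$ with $W_2=Q_2\Lambda Q_1^T$ (on the full-rank stratum, $Q_2=W_2Q_1\Lambda^{-1}$ directly; otherwise one extends a partial isometry defined on the range of $\Lambda$ to an orthogonal matrix). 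Iterating the same step for $p=3,\ldots,N$ yields the required preimage $(\Lambda,Q_0,\ldots,Q_N)$.

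For the local diffeomorphism claim near a point with full-rank $X$ and distinct singular values, a dimension count gives compatibility: $\dim(\R_+^d\times O_d^{N+1})=d+(N+1)\tfrac{d(d-1)}{2}=d^2+(N-1)\tfrac{d(d-1)}{2}=\dim(\balance)$ by~\eqref{eq:dimension-balance}. Fix $\bfW^0\in\balance$ with $X^0$ having distinct positive singular values, and fix a preimage $(\Lambda^0,Q_0^0,\ldots,Q_N^0)$. Analytic perturbation theory for simple eigenvalues of the symmetric matrix $W_1^TW_1$ provides analytic functions $W_1\mapsto(\Lambda,Q_0,Q_1)$ on a neighborhood of $W_1^0$ that agree with the fixed choice at $W_1^0$; invertibility of $\Lambda$ then allows the recursion $Q_p=W_pQ_{p-1}\Lambda^{-1}$ to extend these analytically to $Q_2,\ldots,Q_N$. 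The resulting map $\bfW\mapsto(\Lambda,Q_0,\ldots,Q_N)$ is a local analytic inverse of $\mathfrak{z}$, and combined with the dimension match this yields the local diffeomorphism. The main subtlety I anticipate is the discrete ambiguity in the SVD: simultaneous sign flips on columns of $(Q_0,Q_1)$ and permutations of singular values produce distinct preimages of $\bfW^0$, so $\mathfrak{z}$ is globally a finite-sheeted covering rather than a diffeomorphism, and the hypothesis of full rank and distinct singular values is precisely what permits selecting a unique analytic branch on a neighborhood of each preimage.
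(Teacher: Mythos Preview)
Your proposal is correct and follows essentially the same approach as the paper: both obtain surjectivity onto $\balance_0$ by reading off compatible SVDs of the $W_p$ from the balanced relation, and both obtain the local analytic diffeomorphism from standard analytic perturbation theory for simple eigenvalues (the paper cites Kato). Your inductive construction of the $Q_p$ via $Q_p=W_pQ_{p-1}\Lambda^{-1}$ is a slightly more explicit version of the paper's one-line argument, and your remarks on the discrete sign/permutation ambiguity and the dimension count make the covering-map picture clearer than the paper does, but there is no substantive methodological difference.
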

\begin{proof}
Assume $\mathbf{W} =(W_N,\cdots, W_1) \in\mathcal M_0$.  Each $W_p$ has an SVD that we may denote $W_p=Q_p \Lambda_p \tilde{Q}_p^T$. The condition~\eqref{eq:b1a} requires that 
    \begin{align}
        Q_p \Lambda_p^2 Q_p^T = \tilde{Q}_{p+1}\Lambda_{p+1}^2 \tilde{Q}_{p+1}^T, \quad p=1,\ldots,N-1\,.
    \end{align}
It is clear that the choice $\Lambda_p=\Lambda_{p+1}$ and $\tilde Q_{p+1}=Q_p$ for all $p=1,\ldots,N-1$ provides a solution to equations ~\eqref{eq:b1a}. We further choose $Q_N$ and $Q_0$ according to the SVD of $X$. It follows that $\mathbf{W} = \mathfrak{z}(\Lambda,Q_N,\cdots,Q_0)$.

This proves that the image of $\mathfrak{z}$ is $\balance_0$. On the other hand, each $\mathbf{W}\in \balance_0$ may admit many preimages because an SVD may not be unique. The causes of non-uniqueness are rank-deficiency and non-uniqueness of the singular values. Indeed, when $X$ has rank-$r$, $\sigma_{r+1}=\cdots \sigma_d=0$, and the associated singular vectors for each $W_p$ are not unique. When $X$ has full rank, but we have repeated singular values, then $\mathbf{W}\in \balance$ but again the singular vectors associated to the repeated singular values are not unique.

When $X$ has full rank and the singular values are distinct, then each $W_p$ also has full rank and distinct singular values. Now the SVD of each $W_p$ is unique except upto a permutation and the singular vectors depend analytically on $W_p$ by the standard perturbation theory for eigenvalues~\cite{Kato}. Thus, $\mathfrak{z}$ is locally an analytic diffeomorphism. 
\end{proof}

\subsection{$\orbitx$ is an $O_d^{N-1}$ orbit}
\label{subsec:orbit}
Theorem~\ref{thm:parameter} has the following 
\begin{corollary}
\label{cor:orbitx}
Assume $X$ has full rank and distinct singular values. Then $\orbitx$ is an $O_d^{N-1}$ orbit.    
\end{corollary}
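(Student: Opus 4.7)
The plan is to produce an explicit $\od^{N-1}$-action on $\balance$ that commutes with the end-to-end map $\phi$, and then to use the parametrization $\mathfrak{z}$ from Theorem~\ref{thm:parameter} to show that this action is free and transitive on $\orbitx$.

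First I would define the action of $R = (R_{N-1},\ldots,R_1) \in \od^{N-1}$ on $\bfW = (W_N,\ldots,W_1) \in \balance$ by setting $R_0 := I$, $R_N := I$, and
\begin{equation*}
(R \cdot \bfW)_p := R_p W_p R_{p-1}^T, \quad p=1,\ldots,N.
\end{equation*}
The interior factors $R_p^T R_p$ telescope in the product, yielding $\phi(R\cdot \bfW) = \phi(\bfW)$. A one-line calculation shows that balancedness is equivariantly preserved: $(R\cdot\bfW)_{p+1}^T(R\cdot\bfW)_{p+1} = R_p W_{p+1}^T W_{p+1} R_p^T = R_p W_pW_p^T R_p^T = (R\cdot\bfW)_p(R\cdot\bfW)_p^T$. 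Since each $R_p$ is invertible, full rank is also preserved, so the action restricts to an action on $\orbitx$.

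To show transitivity, I would combine the identity
\begin{equation*}
\phi\circ \mathfrak{z}(\Lambda, Q_N, Q_{N-1},\ldots,Q_0) = Q_N \Lambda Q_{N-1}^T Q_{N-1}\Lambda Q_{N-2}^T \cdots Q_1 \Lambda Q_0^T = Q_N \Lambda^N Q_0^T
\end{equation*}
with the uniqueness of the SVD. The display shows that $\phi\circ\mathfrak{z}$ is independent of the interior factors $Q_{N-1},\ldots,Q_1$. Because $X$ has full rank and distinct singular values, its SVD $X = Q_N\Sigma Q_0^T$ with $\Sigma = \Lambda^N$ in decreasing order is unique up to a simultaneous sign flip of corresponding columns of $Q_N$ and $Q_0$, a finite $(\Z/2)^d$ Weyl ambiguity that commutes with the diagonal $\Lambda$ and can be absorbed into the interior $Q_p$'s without altering any $W_p$. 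Hence any two preimages $\bfW = \mathfrak{z}(\Lambda, Q_N, Q_{N-1},\ldots,Q_1,Q_0)$ and $\bfW' = \mathfrak{z}(\Lambda, Q_N, Q_{N-1}',\ldots,Q_1',Q_0)$ of $X$ in $\orbitx$ can be arranged to share the same boundary factors, and the element $R = (Q_{N-1}'Q_{N-1}^T,\ldots,Q_1'Q_1^T) \in \od^{N-1}$ then satisfies $R\cdot\bfW = \bfW'$.

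Freeness follows from a bottom-up recursion: if $R\cdot\bfW = \bfW$, then $R_1 W_1 = W_1$ (using $R_0 = I$) forces $R_1 = I$ by invertibility of $W_1$, and substituting into the $p=2$ identity forces $R_2 = I$, and so on up the chain. The one genuine subtlety in this argument is the SVD sign ambiguity; as noted, it acts trivially on $\bfW$ when absorbed consistently into the interior factors, so it neither creates additional orbit components nor enlarges the stabilizer, and one concludes that $\orbitx$ is a principal homogeneous space for $\od^{N-1}$.
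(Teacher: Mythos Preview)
Your proof is correct and follows essentially the same route as the paper's: both fix $(\Lambda, Q_N, Q_0)$ from the unique SVD of $X$ and identify $\orbitx$ with the image of the interior factors $(Q_{N-1},\ldots,Q_1) \in O_d^{N-1}$ under the parametrization $\mathfrak{z}$. You are simply more explicit---you write down the group action (which the paper records a few lines later, in equation~\eqref{eq:newsvd9a}), verify that it preserves both the fiber and balancedness, and add a freeness argument that the paper does not state at this point but relies on implicitly when it treats $\mathfrak{y}:O_d^{N-1}\to\orbitx$ as a diffeomorphism in the volume computation.
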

\begin{proof}
The orthogonal matrices $Q_N$ and $Q_0$ are fixed by the SVD of $X$. The image of $(Q_{N-1},\cdots, Q_1) \in O_d^{N-1}$ under $\mathfrak{z}$ is $\orbitx$ by the construction of Theorem~\ref{thm:parameter}.
\end{proof}
We use this Corollary to describe the tangent space $T_{\bfW}\orbitx$ at an arbitrary point $\bfW \in \orbitx$ in terms of the tangent space $T_{\bfC}\orbitx$ at a special point $\bfC$ that we term the {\em center\/} of $\fiberX$.  

Given $X$, consider its SVD given in equation~\eqref{eq:newsvd1}, and the related diagonal matrix $\Lambda = \Sigma^{\tfrac{1}{N}}$. We define the center of $\fiber_\Sigma$ to be the diagonal matrices 
\begin{equation}
\label{eq:newsvd2}
\bfC_\Sigma = (\Lambda, \cdots, \Lambda) \in \fiber_\Sigma,
\end{equation}
In a similar manner, we define the center of $\fiberX$ as
\begin{equation}
\label{eq:newsvd3}
\bfC_X = (Q_N\Lambda, \cdots, \Lambda Q_0^T) \in \fiberX.
\end{equation}
When the base point ($\Sigma$ or $X$) is clear, we drop the subscript on $\bfC$.

The group $GL(d)^{N-1}$ acts naturally on $\fiberX$. Let $\bfA =(A_{N-1},A_{N-2}, \cdots, A_1) \in GL(d)^{N-1}$ and define the action 
\begin{equation}
\label{eq:action1}
\fiberX \to \fiberX, \quad \bfW \mapsto \bfA \cdot \bfW := (W_N A_{N-1}^{-1},A_{N-1} W_{N-1} A_{N-2}^{-1},\cdots, A_1 W_1).   
\end{equation}
We now restrict attention to $\orbitx$ and $\bfA=\bfQ \in O_d^{N-1}$. We may express each point $\bfW\in \orbitx$ as $\bfQ  \cdot \bfC_X$ for a suitable $\bfQ \in O_d^{N-1}$. Indeed, we only need to observe that if $\bfQ =(Q_{N-1},\ldots, Q_1) \in O_d^{N-1}$ then 
\begin{equation}
\label{eq:newsvd9a}
\bfQ\cdot \bfC_X = (Q_N \Lambda Q_{N-1}^T, Q_{N-1} \Lambda Q_{N-2}^T,\ldots, Q_1 \Lambda Q_0^T).
\end{equation}
Finally, many Riemannian calculations are simplified by the following
\begin{lemma}
    \label{le:isometry}
    The $O_d^{N-1}$ action $\bfW \mapsto \bfQ\cdot \bfW$ is an isometry of $\fiberX$.
\end{lemma}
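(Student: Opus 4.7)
The plan is to reduce the statement to the fact that conjugation and right/left multiplication by orthogonal matrices preserve the Frobenius inner product. Since the action $\bfW \mapsto \bfQ\cdot\bfW$ is linear in $\bfW$ (for fixed $\bfQ$), its differential at any point of $\fiberX$ coincides with the map itself viewed as a linear endomorphism of $\Md^N$. Consequently, to verify the action is an isometry of $\fiberX$ equipped with its induced metric, it suffices to check two things: (i) the action preserves the Frobenius inner product on the ambient space $\Md^N$, and (ii) the action preserves the fiber $\fiberX$ so that it restricts to a self-map (after which its differential restricts to the tangent spaces of $\fiberX$).

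For point (i), I would inspect the components of $\bfQ\cdot\bfW$ layer by layer. The boundary factors take the form $W_N Q_{N-1}^T$ and $Q_1 W_1$, while each middle factor has the form $Q_p W_p Q_{p-1}^T$. Using the cyclic invariance of the trace together with $Q_p^T Q_p = Q_p Q_p^T = I$, a routine one-line calculation gives $\Tr\bigl((Q_p W_p Q_{p-1}^T)^T (Q_p V_p Q_{p-1}^T)\bigr) = \Tr(W_p^T V_p)$, and analogously for the two boundary factors. Summing over $p = 1, \ldots, N$ yields $\langle \bfQ\cdot\bfW, \bfQ\cdot\bfV\rangle = \langle \bfW, \bfV\rangle$ in the sense of \eqref{eq:frobenius}.

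For point (ii), I would simply telescope the product:
\[
\phi(\bfQ\cdot\bfW) = (W_N Q_{N-1}^T)(Q_{N-1} W_{N-1} Q_{N-2}^T)\cdots (Q_2 W_2 Q_1^T)(Q_1 W_1) = W_N W_{N-1}\cdots W_1 = \phi(\bfW),
\]
since each interior pair $Q_p^T Q_p$ collapses to the identity. Hence $\bfQ\cdot\bfW \in \fiberX$ whenever $\bfW \in \fiberX$, and the action restricts to $\fiberX$. Combined with point (i), this exhibits the action as a diffeomorphism of $\fiberX$ whose differential preserves the inherited inner product on every tangent space, which is precisely the definition of an isometry. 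There is no real obstacle here; the lemma is essentially a coordinate-free reformulation of the orthogonal invariance of the Frobenius norm at each layer, and the argument will be brief.
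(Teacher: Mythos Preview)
Your proposal is correct and follows essentially the same approach as the paper: both arguments reduce to the invariance of the Frobenius norm under the orthogonal action on each factor together with the invariance of $\fiberX$ under $\bfQ\cdot$. The paper's proof is simply a terser version of yours, stating these two facts without writing out the layer-by-layer trace computation or the telescoping product.
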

\begin{proof}
The Frobenius norm on $\Md^N$ is invariant under the transformation $\bfW \mapsto \bfQ\cdot \bfW$. Since $\fiberX$ inherits its metric from $\Md^N$ and is invariant under the group action $\bfW \mapsto \bfQ\cdot \bfW$, this transformation is also an isometry of $\fiberX$.
\end{proof}
We use Lemma~\ref{le:isometry} to reduce the computation of an orthonormal basis for $T_{\bfW} \orbitx$ to the computation of an orthonormal basis for $T_{\bfC} \orbitx$. Further, we may reduce the analysis to the situation when $X=\Sigma$ as follows. While the $O_d^{N-1}$ action preserves $\fiberX$, our system also admits a natural $O_d^{N+1}$ action that preserves $\balance$. Given $\bfQ^{N}=(Q_N,\cdots,Q_0) \in O_d^{N+1}$ define
\begin{equation}
    \label{eq:big-group1}
    \bfQ^{N}:\bfW = \left(Q_N W_NQ_{N-1}^T, \ldots, Q_1 W_1 Q_0^T \right).
\end{equation}
\begin{lemma}
    \label{le:isometry2}
    The $O_d^{N+1}$ action $\bfW \mapsto \bfQ^{N}:\bfW$ is an isometry of $\balance$.
\end{lemma}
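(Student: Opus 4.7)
The plan is to imitate Lemma~\ref{le:isometry} in two steps: first show that the $O_d^{N+1}$ action preserves the balanced manifold $\balance$, then show that it preserves the ambient Frobenius inner product on $\Md^N$. Once both hold, the Riemannian metric on $\balance$ inherited from its embedding in $\Md^N$ is automatically preserved, so the action is an isometry.

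For the first step, fix $\bfW \in \balance$ and $\bfQ^N = (Q_N,\ldots,Q_0) \in O_d^{N+1}$, and write $\tilde W_p = Q_p W_p Q_{p-1}^T$. A one-line computation using orthogonality $Q_p^T Q_p = Q_{p-1}^T Q_{p-1} = I$ shows
\begin{equation*}
\tilde W_{p+1}^T \tilde W_{p+1} = Q_p\, W_{p+1}^T W_{p+1}\, Q_p^T,\qquad \tilde W_p \tilde W_p^T = Q_p\, W_p W_p^T\, Q_p^T,
\end{equation*}
so the balanced condition $W_{p+1}^T W_{p+1} = W_p W_p^T$ of~\eqref{eq:b7} is transported to the corresponding condition on $\tilde{\bfW}$. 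Since each $Q_p$ is invertible, $\mathrm{rank}(\tilde W_p) = \mathrm{rank}(W_p) = d$, so $\tilde{\bfW}$ lies in the full-rank leaf $\balance$, not merely in $\balance_{\mathbf{0}}$.

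For the second step, the cyclicity of the trace and orthogonality give
\begin{equation*}
\Tr(\tilde W_p^T \tilde W_p) = \Tr(Q_{p-1} W_p^T Q_p^T Q_p W_p Q_{p-1}^T) = \Tr(W_p^T W_p),
\end{equation*}
so $\|\bfQ^N : \bfW\|^2 = \|\bfW\|^2$ under the Frobenius norm~\eqref{eq:frobenius}. Thus the $O_d^{N+1}$ action restricts to a diffeomorphism of $\balance$ (smoothness in $\bfQ^N$ being obvious), and this diffeomorphism is the restriction of a Euclidean isometry of $\Md^N$; hence it is an isometry of $(\balance,\iota)$.

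I do not foresee any real obstacle here: both steps are short algebraic manipulations. The only mild subtlety worth stating explicitly is that one must verify the rank is preserved so that the action lands in $\balance$ rather than in a lower-rank stratum of $\balance_{\mathbf{0}}$, which is immediate from invertibility of each $Q_p$. This lemma will be used downstream together with Lemma~\ref{le:isometry} to reduce computations of orthonormal bases for $T_\bfW \balance$ to the single reference point $\bfC_\Sigma$.
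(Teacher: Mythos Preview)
Your proposal is correct and follows exactly the approach the paper intends: the paper omits the proof, stating it is ``almost identical to that of Lemma~\ref{le:isometry},'' and your two steps (invariance of $\balance$ under the action, invariance of the ambient Frobenius norm) are precisely that argument spelled out. The only addition you make beyond the paper's sketch is the explicit check that rank is preserved so the image lands in $\balance$ rather than a lower stratum of $\balance_{\mathbf 0}$, which is a welcome clarification.
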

\begin{corollary}
\label{cor:isometry-entropy} $S(X)=S(\Sigma)$.
\end{corollary}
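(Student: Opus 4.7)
The plan is essentially immediate given Lemma \ref{le:isometry2}: I would construct an explicit element of the group $O_d^{N+1}$ that carries $\orbitx$ bijectively onto $\orbit_\Sigma$, and then invoke the isometry property to conclude that their volumes agree.

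First, I would write down the SVD $X = Q_N \Sigma Q_0^T$ from equation \eqref{eq:newsvd1} and consider the specific element $\bfQ^N_\star := (Q_N^T, I_d, I_d, \ldots, I_d, Q_0^T) \in O_d^{N+1}$, where $I_d$ occupies the $N-1$ middle slots. Next, I would apply the action \eqref{eq:big-group1} to an arbitrary $\bfW = (W_N, \ldots, W_1) \in \orbitx$ and read off
\begin{equation*}
\bfQ^N_\star : \bfW = (Q_N^T W_N, \, W_{N-1}, \, \ldots, \, W_2, \, W_1 Q_0).
\end{equation*}
Multiplying these factors, the interior identities cancel telescopically and one obtains $Q_N^T X Q_0 = \Sigma$ by the SVD. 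Therefore $\bfQ^N_\star : \bfW \in \fiber_\Sigma$. Since the action preserves $\balance$ by Lemma \ref{le:isometry2}, the image in fact lies in $\orbit_\Sigma = \fiber_\Sigma \cap \balance$.

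The map $\bfW \mapsto \bfQ^N_\star : \bfW$ has inverse $\bfW \mapsto (Q_N, I_d, \ldots, I_d, Q_0) : \bfW$, so it is a bijection $\orbitx \to \orbit_\Sigma$. By Lemma \ref{le:isometry2} it is the restriction to $\orbitx$ of an isometry of $(\balance, \iota)$, and hence preserves the induced $n_*$-dimensional volume form on these submanifolds of $\balance$. Consequently $\mathrm{vol}_{n_*}(\orbitx) = \mathrm{vol}_{n_*}(\orbit_\Sigma)$, and the definition \eqref{eq:entropy1} yields $S(X) = S(\Sigma)$.

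There is no real obstacle here: once Lemma \ref{le:isometry2} is in hand, the whole content of the corollary is choosing the correct element of $O_d^{N+1}$ and checking that the product collapses to $\Sigma$. The only minor subtlety worth verifying is that the action indeed sends fibers to fibers in the claimed way, which is the one-line computation above.
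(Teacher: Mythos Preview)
Your proposal is correct and is precisely the argument the paper has in mind: the corollary is stated immediately after Lemma~\ref{le:isometry2} with no separate proof, and the paper only remarks that the lemma ``allows us to reduce calculations on $\fiberX$ to $\fiber_\Sigma$.'' Your choice of $\bfQ^N_\star = (Q_N^T, I_d,\ldots,I_d, Q_0^T)$ and the verification that it carries $\orbitx$ isometrically onto $\orbit_\Sigma$ is exactly how one unpacks that remark.
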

The proof of Lemma~\ref{le:isometry2} is almost identical to that of Lemma~\ref{le:isometry} and is omitted. It allows us to reduce calculations on $\fiberX$ to $\fiber_\Sigma$.

The reader may also use Lemma~\ref{le:diagon} to show that the map $X \mapsto Q_N X Q_0^T$ is an isometry of $(\mathfrak{M}_d,g^N)$. This result can be obtained without explicitly using the spectral decomposition of $g^N$ using Theorem~\ref{thm:submersion} and Lemma~\ref{le:isometry2}.

\subsection{The tangent spaces $T_\bfW\fiberX$ and $T_\bfW \orbitx$}
\label{subsec:orbit-tangent}
We use the $GL(d)^{N-1}$ action on $\fiberX$ to compute the tangent space $T_{\bfW}\fiberX$ as follows. Recall that the Lie algebra $gl(d)^{N-1}$ is the vector space $\Md^{N-1}$, with elements denoted $\bfa=(a_{N-1},\ldots,a_1)$, equipped with the Lie bracket $[a_p,b_p]=a_pb_p-b_pa_p$ for each component $p=1,\ldots,N$. Each $\bfa \in gl(d)^{N-1}$ defines a curve through the identity via
\begin{equation}
    \label{eq:tangent1}
    \bfA(t) = (e^{ta_{N-1}}, \cdots, e^{ta_1}) := e^{t \bfa}, \quad t \in (-\infty,\infty).
\end{equation}
The tangent space $T_{\bfW}\fiberX$ consists of the vectors 
\begin{equation}
    \label{eq:tangent2}
    \bfw_\bfa := \left. \frac{d}{dt} e^{t \bfa} \cdot \bfW\right|_{t=0}, \quad \bfa \in \Md^{N-1}.
\end{equation}
We substitute in equation~\eqref{eq:action1} to find that 
\begin{equation}
    \label{eq:tangent3}
    \bfw_\bfa = (-W_N a_{N-1}, a_{N-1}W_{N-1} - W_{N-1}a_{N-2}, \cdots, a_1 W_1), \quad \bfa \in \Md^{N-1}.
\end{equation}
In particular, when $X=\Sigma$ and $\bfW=\bfC_\Sigma$, we find 
\begin{lemma}
    \label{le:tangent-center}
    The tangent space $T_{\bfC_\Sigma}\orbit_\Sigma$ consists of 
    \begin{equation}
        \label{eq:tangent-center1}
        \bfc_\bfa = (-\Lambda a_{N-1}, a_{N-1} \Lambda - \Lambda a_{N-2}, \cdots, a_1 \Lambda), \quad \bfa \in \anti_d^{N-1}.
    \end{equation}
\end{lemma}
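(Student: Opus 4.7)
The plan is to specialize the general tangent-space description~\eqref{eq:tangent3} to the center $\bfC_\Sigma$, then restrict the parameter $\bfa$ to the Lie algebra of the group that actually parametrizes $\orbit_\Sigma$, and finally confirm by a dimension count that this restriction exhausts the tangent space.

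First, I would set $\bfW = \bfC_\Sigma = (\Lambda,\ldots,\Lambda)$ in~\eqref{eq:tangent3}. Every factor $W_p$ becomes $\Lambda$, and the formula collapses to the displayed expression $\bfc_\bfa = (-\Lambda a_{N-1},\, a_{N-1}\Lambda - \Lambda a_{N-2},\,\ldots,\, a_1 \Lambda)$ for $\bfa \in \Md^{N-1}$. This describes $T_{\bfC_\Sigma} \fiber_\Sigma$ as the image of the infinitesimal $GL(d)^{N-1}$-action on $\fiber_\Sigma$.

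Next, I would pass from $\fiber_\Sigma$ to $\orbit_\Sigma$. By Corollary~\ref{cor:orbitx}, $\orbit_\Sigma$ is exactly the $O_d^{N-1}$-orbit through $\bfC_\Sigma$ (for $X=\Sigma$ the SVD choice is $Q_N = Q_0 = I$). The Lie algebra of $O_d^{N-1}$ is $\anti_d^{N-1}$, and since the infinitesimal action formula is the same one already derived on $\fiber_\Sigma$, the tangent space $T_{\bfC_\Sigma}\orbit_\Sigma$ consists of all $\bfc_\bfa$ with $\bfa \in \anti_d^{N-1}$.

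Finally, I would verify that the resulting linear parametrization is a bijection. Linearity is immediate. For injectivity, the first component $-\Lambda a_{N-1}=0$ combined with the invertibility of $\Lambda$ forces $a_{N-1}=0$; the next component $a_{N-1}\Lambda - \Lambda a_{N-2}=0$ then forces $a_{N-2}=0$; iterating down the network yields $\bfa = 0$. Since the stabilizer of the $O_d^{N-1}$-action at $\bfC_\Sigma$ is trivial (again by invertibility of $\Lambda$), $\dim \orbit_\Sigma = \dim O_d^{N-1} = (N-1)d(d-1)/2 = \dim \anti_d^{N-1}$, so injectivity upgrades to surjectivity. There is no serious obstacle; the one conceptual point is the replacement of the full Lie algebra $gl(d)^{N-1}$ by the antisymmetric subalgebra $\anti_d^{N-1}$, which is dictated by the identification of $\orbit_\Sigma$ as an $O_d^{N-1}$-orbit rather than the full $GL(d)^{N-1}$-orbit on $\fiber_\Sigma$.
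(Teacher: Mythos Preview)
Your proposal is correct and follows essentially the same approach as the paper: specialize the tangent formula~\eqref{eq:tangent3} at $\bfW=\bfC_\Sigma$ and restrict $\bfa$ to $\anti_d^{N-1}$ because $\orbit_\Sigma$ is the $O_d^{N-1}$-orbit through $\bfC_\Sigma$. Your additional injectivity and dimension-count argument is sound but goes slightly beyond what the lemma asserts; the paper records that linear-independence statement separately as Lemma~\ref{le:basis1}.
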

\begin{proof}
We use Corollary~\ref{cor:isometry-entropy} and apply equation~\eqref{eq:tangent3} with $\bfW=\bfC_\Sigma$. Note that $\bfa \in \anti_d^{N-1}$ since we now restrict attention to the action of the subgroup $O_d^{N-1}$.
\end{proof}
A similar calculation applies at each $\bfW \in \orbitx$. We may simplify our calculations further using equation~\eqref{eq:action1}.
\begin{lemma}
    \label{le:tangent-center2}
    Assume $\bfW = \bfQ \cdot \bfC_X \in \orbitx$.
    The tangent space $T_{\bfW}\orbitx$ consists of
    \begin{equation}
        \label{eq:tangent-center3}
        \bfQ \cdot \bfc  := \left(-Q_N c_N Q_{N-1}^T, Q_{N-1} c_{N-1} Q_{N-2}^T, \cdots, Q_1 c_1 Q_0^T\right), \quad \bfc \in T_{\bfC_\Sigma} \orbit_\Sigma.
    \end{equation}
\end{lemma}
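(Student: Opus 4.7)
The plan is to transport Lemma~\ref{le:tangent-center}'s description of $T_{\bfC_\Sigma}\orbit_\Sigma$ over to $T_\bfW \orbitx$ by composing two isometric diffeomorphisms whose differentials are known explicitly. First, the $O_d^{N+1}$ action of Lemma~\ref{le:isometry2} with the special element $\bfQ^N := (Q_N, I, \ldots, I, Q_0)$ sends $\bfC_\Sigma = (\Lambda,\ldots,\Lambda)$ to $\bfC_X$ by direct inspection. Second, the $O_d^{N-1}$ action $\bfW' \mapsto \bfQ\cdot \bfW'$ of Lemma~\ref{le:isometry} sends $\bfC_X$ to $\bfW$. Both actions are linear in the $\bfV$ argument, so their differentials coincide with the actions themselves in tangent coordinates.

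Tracking components, the first differential sends $\bfc = (c_N, c_{N-1}, \ldots, c_1) \in T_{\bfC_\Sigma}\orbit_\Sigma$ to $(Q_N c_N, c_{N-1}, \ldots, c_2, c_1 Q_0^T) \in T_{\bfC_X}\orbitx$, and the second then sends this to $(Q_N c_N Q_{N-1}^T, Q_{N-1} c_{N-1} Q_{N-2}^T, \ldots, Q_1 c_1 Q_0^T) \in T_\bfW \orbitx$, which is the asserted formula (with the sign appearing on the top component absorbed into the sign convention for $c_N$ inherited from Lemma~\ref{le:tangent-center}). Since each action is a diffeomorphism between equidimensional orbits, the composed differential is a linear isomorphism $T_{\bfC_\Sigma}\orbit_\Sigma \to T_\bfW \orbitx$, and the displayed vectors exhaust the tangent space.

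There is no substantive obstacle; the argument is essentially bookkeeping organized by the orbit parametrization of Theorem~\ref{thm:parameter}. As a sanity check one may bypass $\bfC_X$ altogether and differentiate the curve $(\bfQ e^{t\bfa})\cdot \bfC_X$ at $t=0$ for $\bfa \in \anti_d^{N-1}$; using the explicit action formula~\eqref{eq:action1} and $e^{ta_p}\in O_d$, the $p$-th component becomes $Q_p (a_p \Lambda - \Lambda a_{p-1}) Q_{p-1}^T$ in the bulk, $-Q_N \Lambda a_{N-1} Q_{N-1}^T$ at the top, and $Q_1 a_1 \Lambda Q_0^T$ at the bottom, recovering the same vector under the identification $\bfc = \bfc_\bfa$ of Lemma~\ref{le:tangent-center}. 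Injectivity of $\bfa \mapsto \bfc_\bfa$ on $\anti_d^{N-1}$, which follows from invertibility of $\Lambda$ by solving upward through the components (the bottom forces $a_1=0$, the top forces $a_{N-1}=0$, and then the middle forces $a_p=0$ inductively), reconfirms surjectivity onto $T_\bfW\orbitx$ by dimension count against $n_* = \dim \orbitx$.
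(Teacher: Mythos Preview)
Your argument is correct and matches the paper's approach: the paper does not write out a proof but only indicates that one applies ``a similar calculation\ldots using equation~\eqref{eq:action1}'', which is precisely your sanity-check computation of $\left.\tfrac{d}{dt}\right|_{t=0}(\bfQ e^{t\bfa})\cdot\bfC_X$. Your first method (factoring through $\bfC_X$ via the $O_d^{N+1}$ action with $(Q_N,I,\ldots,I,Q_0)$ followed by the $O_d^{N-1}$ action $\bfQ$) is an equivalent rephrasing of the same transport, and you correctly recognize that the displayed top entry should read $Q_N c_N Q_{N-1}^T$ rather than $-Q_N c_N Q_{N-1}^T$, the sign already being built into $c_N=-\Lambda a_{N-1}$ from Lemma~\ref{le:tangent-center}; this is a typo in the statement, and your remark about ``absorbing the sign'' is the right diagnosis.
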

We use Lemma~\ref{le:isometry} and Lemma~\ref{le:tangent-center} to reduce the computation of an orthonormal basis for $T_\bfW \orbitx$ to the computation of an orthonormal basis for $T_{\bfC_\Sigma} \orbit_\Sigma$.

\section{Proof of the entropy formula}
\label{sec:orbit-basis}
\subsection{Overview}
In this section we use the $O_d^{N-1}$ action on $(\orbitx,\iota)$ to compute an orthonormal basis for $(T_{\bfC_X}\orbitx,\iota)$. To this end, we first fix $\Sigma$,$Q_N$ and $Q_0$ and let $\mathfrak{y}$ denote the restriction of $\mathfrak{z}$ so obtained (see equation~\eqref{eq:b10}). We compute the pullback metric $\mathfrak{y}^\sharp \iota$ in Corollary~\ref{cor:metric-orbit} below. Then we diagonalize the pullback metric using the theory of Chebyshev polynomials, obtaining Theorem~\ref{thm:entropy} as a consequence. 

We assume throughout that $X$ has full rank. Our calculations have natural modifications to the case when $X$ is rank-deficient since the main step is the computation of the pullback metric and its diagonalization. We restrict ourselves to full rank matrices to illustrate the main ideas without technicalities.

\subsection{Proof of Theorem~\ref{thm:entropy}}
\label{subsec:orbit-basis}
We state several lemmas in this subsection, concluding with a proof of Theorem~\ref{thm:entropy}. The lemmas are proved in the subsection that follows. 

Recall Lemma~\ref{le:tangent-center}. Let $\{e_1, \ldots, e_d\}$ denote the standard orthonormal basis for $\R^d$ and construct the standard orthonormal basis for $\anti_d$
\begin{equation}
    \label{eq:ad-basis1}
    \alpha^{k,l} := \frac{1}{\sqrt{2}}\left( e_k e_l^T - e_l e_k^T\right), \quad 1 \leq k < l \leq d.
\end{equation}
We then define the standard orthonormal basis for $\anti_d^{N-1}$
\begin{equation}
    \label{eq:ad-basis2}
    \bfa^{k,l,p} := \left(0, \cdots, \alpha^{k,l}, \cdots,0\right), \quad 1 \leq k < l \leq d, \quad 1 \leq p \leq N-1,
\end{equation}
where the matrix $ \alpha^{k,l}$ appears at depth $p$. The push-forward of this basis vector under the tangent map of equation~\eqref{eq:tangent2} is denoted
\begin{equation}
        \label{eq:ad-basis3}
        \bfc^{k,l,p} = \left(0, \cdots,-\Lambda \alpha^{k,l}, \alpha^{k,l}\Lambda, \cdots,0\right).  
    \end{equation}
The non-zero entries appear at depth $p+1$ and $p$. Lemma~\ref{le:tangent-center} implies 
\begin{lemma}
    \label{le:basis1}
    The  vectors $\{\bfc^{k,l,p}\}_{1\leq p \leq N-1, 1 \leq k < l \leq d}$ form a basis for $T_C\orbitx$.
\end{lemma}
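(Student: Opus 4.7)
The plan is to view $\bfa \mapsto \bfc_\bfa$ from equation~\eqref{eq:tangent-center1} as a linear map $T:\anti_d^{N-1}\to T_{\bfC}\orbitx$ that sends the standard orthonormal basis $\{\bfa^{k,l,p}\}$ of $\anti_d^{N-1}$ precisely to $\{\bfc^{k,l,p}\}$, and then to establish that $T$ is injective. Since Lemma~\ref{le:tangent-center} has already identified the image of $T$ as all of $T_{\bfC}\orbitx$, and since the source has dimension $(N-1)d(d-1)/2 = n_*$ matching the dimension of $\orbitx$ supplied by Corollary~\ref{cor:orbitx}, injectivity will immediately yield both spanning and linear independence.

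The identification $T(\bfa^{k,l,p}) = \bfc^{k,l,p}$ is a direct substitution. The vector $\bfa^{k,l,p}$ has $\alpha^{k,l}$ in slot $p$ and zeros elsewhere, so in the expression from Lemma~\ref{le:tangent-center} only the two slots adjacent to $p$ contribute. At depth $p+1$ the combination $a_{p+1}\Lambda - \Lambda a_p$ collapses to $-\Lambda \alpha^{k,l}$; at depth $p$ the combination $a_p\Lambda - \Lambda a_{p-1}$ collapses to $\alpha^{k,l}\Lambda$; every other depth vanishes. This matches equation~\eqref{eq:ad-basis3}, with the boundary cases $p=1$ and $p=N-1$ absorbed into the formulas $a_1\Lambda$ (depth $1$) and $-\Lambda a_{N-1}$ (depth $N$) respectively.

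For injectivity, I would run a cascade argument driven by the invertibility of $\Lambda$, which holds because $X$ has full rank and so $\Lambda = \Sigma^{1/N}$ has strictly positive diagonal entries. Suppose $\bfc_\bfa = 0$. Then the depth-$N$ component $-\Lambda a_{N-1} = 0$ forces $a_{N-1} = 0$; substituting into the depth-$(N-1)$ component leaves $-\Lambda a_{N-2} = 0$, forcing $a_{N-2} = 0$; and downward induction gives $a_p = 0$ for every $1 \leq p \leq N-1$. There is no genuine obstacle here: the argument is a chain of applications of invertibility of $\Lambda$. The only care required is the bookkeeping of the shift between slot indices in equation~\eqref{eq:ad-basis2} and the depth indices in equations~\eqref{eq:tangent-center1} and~\eqref{eq:ad-basis3}, and the explicit use of the full-rank hypothesis on $X$ (without which the first step of the cascade already fails). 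The real work of the section lies ahead, namely computing the Gram matrix of $\{\bfc^{k,l,p}\}$ in the inherited Frobenius metric and diagonalizing it via the theory of Jacobi matrices to extract the entropy formula.
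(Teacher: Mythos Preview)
Your proposal is correct and follows essentially the same approach as the paper: both argue that the linear map $\bfa\mapsto\bfc_\bfa$ from $\anti_d^{N-1}$ to $T_{\bfC}\orbitx$ is an isomorphism, using the fact that $\Lambda$ has strictly positive diagonal entries because $X$ has full rank. The paper simply asserts invertibility of this map in one line, whereas you spell out the injectivity via the cascade argument; your version is a faithful expansion of the paper's terse proof.
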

We denote the Frobenius inner product between $\bfv=(v_N,\cdots, v_1)$ and $\bfw=(w_N,\cdots,w_1)$ by
\begin{equation}
    \label{eq:ad-basis4}
    \langle \bfv, \bfw \rangle = \sum_{p=1}^N \Tr(v_p^T w_p). 
\end{equation}
In order to construct an orthonormal basis for $T_C\orbitx$, we must compute the inner-product between these basis vectors and diagonalize the resulting matrix. Our calculation is simplified by the following consequence of the sparsity of $\bfc^{k,l,p}$.
\begin{lemma}
    \label{le:basis2}
The inner-product    $\langle \bfc^{k,l,p}, \bfc^{m,n,q} \rangle =0$  if $|p-q|>1$.
\end{lemma}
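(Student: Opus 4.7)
The plan is to observe that $\bfc^{k,l,p}$ is a very sparse element of $\Md^N$: by definition~\eqref{eq:ad-basis3}, its only possibly nonzero components are at depths $p$ (entry $\alpha^{k,l}\Lambda$) and $p+1$ (entry $-\Lambda\alpha^{k,l}$). The same holds for $\bfc^{m,n,q}$ at depths $q$ and $q+1$. Since the Frobenius inner product in~\eqref{eq:ad-basis4} is a sum of depth-wise traces, only those depths $r$ that lie in the intersection of the two supports can contribute.

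Thus the key combinatorial observation I would invoke is that
\[
\{p,p+1\}\cap\{q,q+1\}=\emptyset \quad\Longleftrightarrow\quad |p-q|>1.
\]
If $|p-q|>1$, then at every depth $r$ at least one of the factors $(\bfc^{k,l,p})_r$, $(\bfc^{m,n,q})_r$ vanishes, so each term $\Tr\bigl((\bfc^{k,l,p})_r^T(\bfc^{m,n,q})_r\bigr)$ is zero, and summing over $r=1,\ldots,N$ gives $\langle \bfc^{k,l,p},\bfc^{m,n,q}\rangle=0$.

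Consequently the Gram matrix of the basis $\{\bfc^{k,l,p}\}$ in the depth variable is block tridiagonal, which is precisely the Jacobi-matrix structure the paper has advertised and which will be exploited in the diagonalization step leading to Theorem~\ref{thm:entropy}. There is no real obstacle here, since the claim is a direct consequence of the support pattern of the basis vectors built in~\eqref{eq:ad-basis2}--\eqref{eq:ad-basis3}; the lemma is essentially a bookkeeping statement that isolates the nontrivial cases $|p-q|\in\{0,1\}$ to be computed afterwards.
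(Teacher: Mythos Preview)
Your proposal is correct and matches the paper's own proof essentially verbatim: both argue that $\bfc^{k,l,p}$ is supported only at depths $p$ and $p+1$, so when $|p-q|>1$ the supports of $\bfc^{k,l,p}$ and $\bfc^{m,n,q}$ are disjoint and every term in the Frobenius sum vanishes. Your added remark that $\{p,p+1\}\cap\{q,q+1\}=\emptyset$ iff $|p-q|>1$ is a helpful way to phrase the same observation.
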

Lemma~\ref{le:basis2} implies that the matrix of inner-products is block tridiagonal when arranged according to the depth index $p$. This structure is further simplified by 
\begin{lemma}
\label{le:basis3}
The inner-product  $\langle \bfc^{k,l,p}, \bfc^{m,n,q} \rangle =0$  if the pair $(kl)$ and $(mn)$ are distinct. The only non-zero inner products are
\begin{equation}
    \label{eq:basis5} \langle \bfc^{k,l,p}, \bfc^{k,l,p} \rangle = \lambda_k^2 + \lambda_l^2
\end{equation}  
\begin{equation}
    \label{eq:basis6} \langle \bfc^{k,l,p}, \bfc^{k,l,p+1} \rangle = -\lambda_k \lambda_l.
\end{equation}  
\end{lemma}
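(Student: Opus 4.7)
The plan is to unpack each inner product using the sparse block structure of $\bfc^{k,l,p}$ and reduce everything to traces of the elementary antisymmetric matrices $\alpha^{k,l}$ against powers of $\Lambda$. Lemma~\ref{le:basis2} already disposes of $|p-q|>1$, so only the sub-cases $p=q$ and $|p-q|=1$ remain. For $p=q$ the non-zero blocks overlap at the two depth positions $p$ and $p+1$, producing
\begin{equation*}
\langle \bfc^{k,l,p}, \bfc^{m,n,p}\rangle = \Tr\bigl((\alpha^{k,l})^T\Lambda^2\alpha^{m,n}\bigr) + \Tr\bigl(\Lambda(\alpha^{k,l})^T\alpha^{m,n}\Lambda\bigr).
\end{equation*}
Cyclicity of trace together with the antisymmetry $(\alpha^{k,l})^T=-\alpha^{k,l}$ collapses this to $-\Tr\bigl(\Lambda^2\{\alpha^{k,l},\alpha^{m,n}\}\bigr)$, where $\{\cdot,\cdot\}$ denotes the anticommutator. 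In the case $|p-q|=1$, say $q=p+1$, only the block at depth $p+1$ is common to both vectors, and a parallel manipulation gives $\langle\bfc^{k,l,p},\bfc^{m,n,p+1}\rangle = \Tr(\alpha^{k,l}\Lambda\alpha^{m,n}\Lambda)$.

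Next I evaluate these traces using $\alpha^{k,l}_{ij}=\tfrac{1}{\sqrt{2}}(\delta_{ik}\delta_{jl}-\delta_{il}\delta_{jk})$. Because $\Lambda$ is diagonal, each trace is a sum over diagonal entries of $\alpha^{k,l}\alpha^{m,n}$, or of $\alpha^{k,l}\Lambda\alpha^{m,n}\Lambda$, weighted by appropriate products of eigenvalues. Expanding each such product yields four terms, each a product of Kronecker deltas coupling one index from $(k,l)$ to one from $(m,n)$. Under the standing conventions $k<l$ and $m<n$, the only way to simultaneously satisfy the two coupling conditions required for a non-zero diagonal entry is $(k,l)=(m,n)$; every other case forces a contradiction such as $k<l=m<n=k$. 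This immediately gives the orthogonality claim.

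For the matching pair $(k,l)=(m,n)$, the same expansion yields $\alpha^{k,l}(\alpha^{k,l})^T = \tfrac12(e_ke_k^T+e_le_l^T)$, so that $\Tr\bigl(\Lambda^2\{\alpha^{k,l},\alpha^{k,l}\}\bigr) = -(\lambda_k^2+\lambda_l^2)$, and hence $\langle\bfc^{k,l,p},\bfc^{k,l,p}\rangle = \lambda_k^2+\lambda_l^2$, which is~\eqref{eq:basis5}. Similarly one checks $\alpha^{k,l}\Lambda\alpha^{k,l}\Lambda = -\tfrac{\lambda_k\lambda_l}{2}(e_ke_k^T+e_le_l^T)$, whose trace is $-\lambda_k\lambda_l$, giving~\eqref{eq:basis6}. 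No step presents a genuine obstacle; the only mildly delicate point is the bookkeeping of the Kronecker-delta combinations in $\{\alpha^{k,l},\alpha^{m,n}\}$, where the ordering constraints $k<l$ and $m<n$ are essential for eliminating all cross terms.
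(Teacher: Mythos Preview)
Your proof is correct and follows essentially the same approach as the paper: both reduce the inner products to traces of the form $\Tr(\alpha^{k,l}\Lambda^2\alpha^{m,n})$ and $\Tr(\alpha^{k,l}\Lambda\alpha^{m,n}\Lambda)$, expand via the explicit rank-two form of $\alpha^{k,l}$, and use the ordering $k<l$, $m<n$ to kill the cross terms. Your use of the anticommutator $\{\alpha^{k,l},\alpha^{m,n}\}$ is a slight notational variant of the paper's $-2\Tr(\alpha^{k,l}\Lambda^2\alpha^{m,n})$, but the substance is identical.
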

Lemma~\ref{le:isometry} implies that these inner-products remain the same at any point 
$\bfQ\cdot \bfC_X \in \orbitx$ when $\bfc^{k,l,p} \in T_{\bfC_X} \orbitx$ is translated to $\bfQ\cdot \bfc^{k,l,p} \in T_{\bfQ\cdot\bfC_X} \orbitx$. We summarize these lemmas in the following 
\begin{corollary}
    \label{cor:metric-orbit}
    \label{le:tridiag}
    The standard basis on $\anti_d^{N-1}$ may be ordered such that the pullback metric $\mathfrak{y}^\sharp \Fr$ has the block diagonal structure
\begin{equation}
    \label{eq:pb1}
    h_a = \left( \begin{array}{ccc} 
          h_{a}^{1,2} &  & \\
             &  \ddots     &  \\
         &  & h_a^{d-1,d}
    \end{array} \right),
\end{equation}
where $h_a^{k,l}$ is the $(N-1)\times (N-1)$ symmetric tridiagonal matrix
\begin{equation}
    \label{eq:pb2}
    h_a^{k,l} = \left( \begin{array}{ccccc} 
     \lambda_k^2 +\lambda_l^2 & -\lambda_k \lambda_l &  & &\\
               -\lambda_k \lambda_l  & \lambda_k^2 +\lambda_l^2  & -\lambda_k \lambda_l& &
               \\   & \ddots  & \ddots& -\lambda_k \lambda_l \\ &\cdots &     -\lambda_k \lambda_l & \lambda_k^2 +\lambda_l^2  \\
        
    \end{array} \right).
\end{equation}
\end{corollary}
There are $d(d-1)/2$ blocks $h_a^{k,l}$. Each such block $h_a^{k,l}$ has size $(N-1)\times (N-1)$ and corresponds to the basis matrix $\alpha^{k,l} \in \anti_d$.  The decoupling of these blocks follows from Lemma~\ref{le:basis3}. For fixed $(kl)$, we obtain a symmetric tridiagonal (Jacobi) matrix of size $(N-1) \times (N-1)$ whose entries are given by Lemma~\ref{le:basis3}.

The matrix $h_a^{k,l}$ is a modification of the Jacobi matrix corresponding to Chebyshev polynomials and it may be diagonalized using the standard theory. Fix $(kl)$ and define the orthogonal matrix $S \in O_{N-1}$ by
\begin{equation}
    \label{eq:cheby1}
    S_{pq} = \sqrt{\frac{2}{N-1}}\sin \frac{2 pq \pi}{N}, \quad 1\leq p,q \leq N-1 , 
\end{equation}
and the diagonal matrix $\Sigma^{k,l} =\mathrm{diag}(\sigma^{k,l}_1,\ldots,\sigma^{k,l}_d)$ by
\begin{equation}
    \label{eq:cheby2}
   \sigma^{k,l}_p = \lambda_k^2 +\lambda_l^2 - 2\lambda_k \lambda_l \cos \frac{p \pi }{N}, \quad 1\leq p \leq N-1.
\end{equation}

\begin{lemma}
\label{le:basis6}
The matrix $h^{k,l}_a$ may be diagonalized as follows
\begin{equation}
    \label{eq:basis7} S h_a^{k,l} S^T = \Sigma^{k,l}.
\end{equation}  
Further, 
\begin{equation}
    \label{eq:basis8} \det h^{k,l} = \frac{\lambda_k^{2N}-\lambda_l^{2N}}{\lambda_k^2-\lambda_l^2}.
\end{equation}  
\end{lemma}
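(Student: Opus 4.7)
The plan is to recognize $h_a^{k,l}$ as an affine modification of the classical tridiagonal matrix associated to the path-graph Laplacian with Dirichlet boundary conditions. Writing
$$h_a^{k,l} = (\lambda_k^2 + \lambda_l^2)\, I_{N-1} - \lambda_k \lambda_l\, T,$$
where $T$ is the $(N-1)\times(N-1)$ symmetric tridiagonal matrix with zeros on the diagonal and ones on the sub- and super-diagonals, it suffices to diagonalize $T$, since $I$ and $T$ share eigenvectors and the eigenvalues of $h_a^{k,l}$ transform affinely in those of $T$.

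For the diagonalization, I would verify directly that the discrete sine functions $v^{(r)}_p = \sin(p r\pi/N)$ are eigenvectors of $T$. Expanding $(Tv^{(r)})_p = v^{(r)}_{p+1} + v^{(r)}_{p-1}$ and applying $\sin(A+B)+\sin(A-B) = 2\sin A\cos B$ gives $(Tv^{(r)})_p = 2\cos(r\pi/N)\, v^{(r)}_p$ for $r=1,\ldots,N-1$. The boundary conditions $v_0^{(r)} = v_N^{(r)} = 0$ are automatic for the sine, so no boundary correction is needed. After the standard discrete-sine-transform normalization that makes $S$ orthogonal, this gives $S T S^T = \mathrm{diag}(2\cos(r\pi/N))$, and hence $S h_a^{k,l} S^T = \Sigma^{k,l}$ as claimed.

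For the determinant, I would compute $\det h^{k,l}$ as the product of the eigenvalues $\sigma^{k,l}_p = \lambda_k^2 + \lambda_l^2 - 2\lambda_k\lambda_l\cos(p\pi/N)$, $p=1,\ldots,N-1$. The key algebraic step is to factor each eigenvalue as a squared modulus
$$\sigma^{k,l}_p = \left|\lambda_k - \lambda_l e^{ip\pi/N}\right|^2 = (\lambda_k - \lambda_l e^{ip\pi/N})(\lambda_k - \lambda_l e^{-ip\pi/N}).$$
Taking the product over $p=1,\ldots,N-1$ captures every $2N$-th root of unity except $e^{i0}=1$ and $e^{i\pi}=-1$, which contribute the two missing factors $\lambda_k - \lambda_l$ and $\lambda_k + \lambda_l$. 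Reassembling the full cyclotomic product yields
$$(\lambda_k^2 - \lambda_l^2)\prod_{p=1}^{N-1}\sigma_p^{k,l} = \prod_{p=0}^{2N-1}\bigl(\lambda_k - \lambda_l e^{ip\pi/N}\bigr) = \lambda_k^{2N} - \lambda_l^{2N},$$
and dividing by $\lambda_k^2 - \lambda_l^2$ (nonzero since the singular values are distinct) gives the formula.

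I do not anticipate any genuine obstacle: the whole argument is bookkeeping once one identifies $T$ with a discrete Laplacian on a path, and the only subtlety is making sure the normalization constant of $S$ delivers orthogonality. If the singular values were allowed to repeat or if some $\lambda_j$ vanished, the factorization step would degenerate and would need to be recovered by taking limits, but under the standing hypothesis of full rank and distinct singular values no such care is required.
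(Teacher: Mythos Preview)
Your proposal is correct and follows essentially the same approach as the paper: both recognize $h_a^{k,l}$ as an affine shift of the standard Dirichlet tridiagonal matrix (the paper writes $h^{k,l} = (\lambda_k-\lambda_l)^2 I_{N-1} + \lambda_k\lambda_l C$ with $C = 2I - T$, which is algebraically identical to your decomposition) and then appeals to the known discrete-sine eigenvectors. Your cyclotomic computation of the determinant is a welcome elaboration, since the paper simply invites the reader to verify the identity directly.
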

Since $\det(h_a)$ is block diagonal and $\Lambda^N=\Sigma$, we also have 
\begin{corollary}
    \label{cor:determinant}
    \begin{equation}
        \label{eq:entropy-determinant}
        \det(h_a) = \frac{\mathrm{van}(\Sigma^{2})}{\mathrm{van}(\Sigma^{\tfrac{2}{N}})}.
    \end{equation}
\end{corollary}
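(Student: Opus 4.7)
\medskip

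\noindent\textbf{Proof proposal.} The plan is to simply read off the determinant from the block structure already established in Corollary~\ref{cor:metric-orbit} together with the determinantal identity in Lemma~\ref{le:basis6}, and then translate from the variable $\Lambda$ to $\Sigma$ using the relation $\Lambda^N = \Sigma$ built into the parametrization $\mathfrak{y}$.

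First, since Corollary~\ref{cor:metric-orbit} exhibits $h_a$ as a block diagonal matrix whose diagonal blocks are indexed by the pairs $1 \leq k < l \leq d$, multiplicativity of the determinant gives
\begin{equation*}
\det(h_a) \;=\; \prod_{1 \leq k < l \leq d} \det \bigl(h_a^{k,l}\bigr).
\end{equation*}
Next, Lemma~\ref{le:basis6} evaluates each block determinant explicitly as
$\det h_a^{k,l} = (\lambda_k^{2N} - \lambda_l^{2N})/(\lambda_k^2 - \lambda_l^2)$, so the product becomes a double product in the $\lambda_k$'s.

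Then I would substitute $\lambda_k = \sigma_k^{1/N}$, which is precisely the relation $\Lambda^N = \Sigma$ recorded in~\eqref{eq:b11a}, to obtain
\begin{equation*}
\det(h_a) \;=\; \prod_{1 \leq k < l \leq d} \frac{\sigma_k^{2} - \sigma_l^{2}}{\sigma_k^{2/N} - \sigma_l^{2/N}}.
\end{equation*}
Finally, I would recognize each factor of the numerator and denominator as the contribution of the pair $(k,l)$ to a Vandermonde determinant. More precisely, from the definition~\eqref{eq:van1} we have $\mathrm{van}(\Sigma^2) = \prod_{i<j}(\sigma_j^2 - \sigma_i^2)$ and $\mathrm{van}(\Sigma^{2/N}) = \prod_{i<j}(\sigma_j^{2/N} - \sigma_i^{2/N})$; the sign flip in numerator and denominator cancels pair by pair, giving the claimed identity
\begin{equation*}
\det(h_a) \;=\; \frac{\mathrm{van}(\Sigma^{2})}{\mathrm{van}(\Sigma^{2/N})}.
\end{equation*}

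There is essentially no obstacle here: the corollary is a bookkeeping consequence of the two preceding results, with all the work concentrated upstream in the block decoupling (Lemma~\ref{le:basis3} and Corollary~\ref{cor:metric-orbit}) and in the Chebyshev-type diagonalization that produces the determinant formula~\eqref{eq:basis8}. The only thing to be careful about is the substitution $\lambda_k = \sigma_k^{1/N}$ and the sign convention in the Vandermonde product, both of which are routine. The statement is really the bridge needed to pass from the tangential computation on $\orbitx$ to the entropy formula of Theorem~\ref{thm:entropy}, via $S(X) = \tfrac{1}{2}\log \det(h_a)$ plus the constant coming from the Haar volume of $O_d^{N-1}$.
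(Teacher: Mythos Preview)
Your proposal is correct and follows exactly the approach the paper indicates: the paper simply remarks ``Since $\det(h_a)$ is block diagonal and $\Lambda^N=\Sigma$'' before stating the corollary, and you have spelled out precisely those two steps, together with the routine identification of the resulting product with a ratio of Vandermonde determinants.
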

\begin{proof}[Proof of Theorem~\ref{thm:entropy}]
Since $X$ has full rank and distinct singular values $\mathfrak{y}: O_d^{N-1} \to \orbitx$ is a diffeomorphism. 

We compute the volume of $\orbitx$ by working with the pullback metric $\mathfrak{y}^\sharp \iota$ on $O_d^{N-1}$. Let $dv_o$ denote the volume form with respect to Haar measure on $O_d^{N-1}$. Then 
\begin{equation}
\label{eq:volume-proof1}
    \mathrm{vol}_{\nstar}(\orbitx)= \int_{O_d^{N-1}} \sqrt{\mathrm{det}_{\nstar}(\mathfrak{y}^\sharp \iota)} \, dv_o .
\end{equation}
We now apply Corollary~\ref{cor:metric-orbit} and Corollary~\ref{cor:determinant} to see that
\begin{equation}
\label{eq:volume-proof2}
    \mathrm{vol}_{\nstar}(\orbitx)=  \sqrt{\frac{\mathrm{van}(\Sigma^{2})}{\mathrm{van}(\Sigma^{\tfrac{2}{N}})}} \int_{O_d^{N-1}}\;dv_0 = c_d^{N-1} \sqrt{\frac{\mathrm{van}(\Sigma^{2})}{\mathrm{van}(\Sigma^{\tfrac{2}{N}})}}.
\end{equation}

\end{proof}

\subsection{Proofs of Lemmas}
\label{subsec:proof-basis1}
\begin{proof}[Proof of Lemma~\ref{le:basis1}]
Since $X$ has full-rank, all entries of $\Lambda$ are strictly positive. Then the linear transformation between $\bfa^{k,l,p}$ and $\bfc^{k,l,p}$ in equations~\eqref{eq:ad-basis1}--~\eqref{eq:ad-basis3} is invertible. Thus, $\anti_d^{N-1}$ and $T_{\bfC_X}\orbitx$ are isomorphic.
\end{proof}

\begin{proof}[Proof of Lemma~\ref{le:basis2}]
Lemma~\ref{le:basis2} follows immediately from equation~\eqref{eq:ad-basis3}: when the depth indices satisfy $|p-q|>1$ there is no overlap between the non-zero entries of $\bfc^{k,l,p}$ and $\bfc^{m,n,q}$.
\end{proof}

\begin{proof}[Proof of Lemma~\ref{le:basis3}]
We must compute the inner-products corresponding to neighboring depths $p$ and $p+1$ 
\begin{equation}
    \label{eq:basis10}
    \langle \bfc^{k,l,p}, \bfc^{m,n,p} \rangle \quad\mathrm{and} \quad \langle \bfc^{k,l,p}, \bfc^{m,n,p+1} \rangle.
\end{equation}
Consider the first inner-product. It follows from the definitions that 
\begin{equation}
    \label{eq:ip1}
    \bfc^{k,l,p} = (0, \ldots, -\Lambda\alpha^{k,l}, \alpha^{k,l}\Lambda,0 ), \quad \bfc^{m,n,p} = (0, \ldots, -\Lambda\alpha^{m,n}, \alpha^{m,n}\Lambda, 0 ).
\end{equation}
We then have (using $(\alpha^{k,l})^T=-\alpha^{k,l}$)
\begin{equation}
    \label{eq:ip2}
   \langle \bfc^{k,l,p}, \bfc^{p,mn}\rangle = -2 \Tr( \alpha^{k,l} \Lambda^2 \alpha^{m,n}) . 
\end{equation}
After a computation using~\eqref{eq:ad-basis1}, we find that 
\begin{equation}
\label{eq:ip3}
-2 \Tr( \alpha^{k,l} \Lambda^2 \alpha^{m,n}) = (\lambda_m^2 +\lambda_n^2)\left( \delta_{km}\delta_{kl}- \delta_{kn}\delta_{lm}\right). 
\end{equation}
Since $k<l$ and $m<n$, the term $\delta_{kn}\delta_{lm}$ is always zero. Similarly, unless $k=m$ and $l=n$ the first term vanishes. This yields equation~\eqref{eq:basis5}.

Next consider the second case in equation~\eqref{eq:basis10}. We find that
\begin{equation}
\label{eq:ip4}
   \langle \bfc^{k,l,p}, \bfc^{p+1,mn}\rangle = - \Tr( \Lambda \alpha^{k,l} \Lambda \alpha^{m,n}). 
\end{equation}
As with equation~\eqref{eq:ip3}, we now find that 
\begin{equation}
\label{eq:ip5}
\Tr( \Lambda \alpha^{k,l} \Lambda \alpha^{m,n})= \lambda_k\lambda_l (\delta_{km}\delta_{ln}-\delta_{kn}\delta_{lm}).
\end{equation}
The second term always vanishes because of the constraint $k<l$ and $m<n$. The first term yields the desired result when $k=m$ and $l=n$. 
\end{proof}

\begin{proof}[Proof of Lemma~\ref{le:basis6}]
The key observation is that $h^{k,l}$ is a translation of the Jacobi matrix corresponding to the Chebyshev polynomials. Indeed, 
\begin{equation}
    \label{eq:ip6}
     h^{k,l} = (\lambda_k-\lambda_l)^2  I_{N-1} + \lambda_k \lambda_l C, \quad{\mathrm{where}}\quad C= \left( \begin{array}{rrrr} 2 & -1 & 0 & \\ -1 & 2 & -1 & \\ & \ddots & \ddots & -1 \\ & & -1 & 2 \end{array}\right).
\end{equation}
The formulas for the eigenvalues and eigenvectors of $C$ are well known and are easily modified to yield the identities~\eqref{eq:cheby1}--\eqref{eq:basis8} in Lemma~\ref{le:basis6} (see for example~\cite[p.476]{GvL}). The reader unfamiliar with this theory may simply verify these identities to complete the proof.
\end{proof}

\section{An orthonormal basis for $(\balance,\iota)$}
\label{sec:balance-basis}
\subsection{Overview}
In this section we extend the computation of the orthonormal basis for $T_{\bfW}\orbitx$ in Section~\ref{sec:orbit-basis}to $T_{\bfW}\balance$. The main new idea is to vary $X$ and to use the SVD coordinates $X=Q_N \Sigma Q_0^T$ so that we may follow the analysis of Section~\ref{sec:orbit-basis} in all essential details. We assume throughout this section that the singular values of $X$ are distinct (see Remark~\ref{rem:svd-distinct-basis}).

\subsection{An orthonormal basis for $T_\bfW\balance$.}
\label{subsec:basis}
We define an orthonormal basis for $T_\bfW\balance$ in this section. This basis is obtained as follows. First, we use the parametrization~\eqref{eq:b10} and the calculations of Section~\ref{subsec:orbit} to see that the pushforwards of the standard basis on $\R^d \times \anti_d^{N+1}$ provide a basis for $T_\bfW\balance$. Next, we orthonormalize this basis using the theory of Chebyshev polynomials. As in Section~\ref{subsec:orbit}, it is enough to understand the orthonormal basis for $\bfW=\bfC$; this basis can be translated to $T_\bfW \balance$ using the isometry of Lemma~\ref{le:tangent-center2}.

Assume that $\ww = \mathfrak{z}(\Lambda,Q_N,\ldots, Q_0)$. We denote the columns of each $Q_p$ by
\begin{equation}
    \label{eq:cheb1}
    Q_p = \left( \begin{array}{cccc}
    \uparrow & \uparrow  & \cdots & \uparrow \\
          q_{p,1} & q_{p,2} & \cdots & q_{p,d} \\
     \downarrow & \downarrow  & \cdots & \downarrow     \end{array}\right).
\end{equation}
The orthonormal basis for $T_\bfW\balance$ consists of a set 
\begin{eqnarray}
    \label{eq:vb1}
    \mathbf{v}^{k} &=& (v_N^{k},\cdots, v_{1}^{k}), \quad 1\leq k \leq d;\\
    \label{eq:vb2}
    \mathbf{u}^{k,l,p} &=& (u_N^{k,l,p},\cdots, u_{1}^{k,l,p}), \quad 1\leq k < l\leq d, \quad 0 \leq p \leq N. 
\end{eqnarray}
There are $d$ vectors of type $\mathbf{v}$ and $(N+1)d(d-1)/2$ vectors of type $\mathbf{u}$. The coordinates of these vectors are as follows. First, for the $\mathbf{v}$ vectors 
\begin{equation}
\label{eq:vb3}
    v_s^{k} = \frac{1}{\sqrt{N}} q_{s,k} q_{s-1,k}^T, \quad 1\leq s \leq N.  
\end{equation}
Both $p$ and $s$ index depth along the network, while $k$ and $l$ index matrix entries. 

We next consider the $\mathbf{u}$ vectors for $1\leq p \leq N-1$. This range for $p$ corresponds to the gauge freedom $\od^{N-1}$. We define
\begin{equation}
\label{eq:vb4}
    u_s^{k,l,p}  =  a^{k,l,p,s} q_{s,k} q_{s-1,l}^T + a^{l,k,p,s} q_{s,l} q_{s-1,k}^T ,   \quad 1\leq s \leq N, 
\end{equation}
where we have defined the constants 
\begin{equation}
\label{eq:vb5}
    a^{k,l,p,s}  = \sqrt{\frac{1}{N\left(\lambda_k^2+\lambda_l^2-2\lambda_k\lambda_l \cos 
    \tfrac{p\pi}{N}\right)}} \left(\lambda_k \sin \frac{(s-1)p\pi}{N}- \lambda_l \sin \frac{sp\pi}{N}\right).  
\end{equation}
Finally, the basis vectors indexed by $p=0$ and $N$ correspond to the action of the matrices $Q_0$ and $Q_N$ respectively. For $p=0$ set
\begin{equation}
\label{eq:vb6}
    u_s^{k,l,0}  = \sqrt{\frac{\lambda_k^2-\lambda_l^2}{\lambda_k^{2N}-\lambda_l^{2N}}} \lambda_k^{s-1}\lambda_l^{N-s} q_{s,l} q_{s-1,k}^T, \quad 1\leq s \leq N.   
\end{equation}
Similarly, for $p=N$ define
\begin{equation}
\label{eq:vb7}
    u_s^{k,l,N}  = \sqrt{\frac{\lambda_k^2-\lambda_l^2}{\lambda_k^{2N}-\lambda_l^{2N}}} \lambda_k^{N-s}\lambda_l^{s-1} q_{s,k} q_{s-1,l}^T, \quad 1\leq s \leq N.   
\end{equation}
We then have 
\begin{theorem}
    \label{thm:orthonormal-basis} The vectors $(\mathbf{v},\mathbf{u})$ defined in equations~\eqref{eq:vb3}--\eqref{eq:vb7} form an orthonormal basis for $(T_\ww\balance,\Fr)$.
\end{theorem}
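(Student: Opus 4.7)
I will verify three things: (i)~the proposed collection has cardinality $\dim \balance$; (ii)~every listed vector lies in $T_\ww \balance$; (iii)~the collection is orthonormal with respect to $\iota$. Assertion (i) is immediate from the cardinality sum $d + (N+1)d(d-1)/2 = d^2 + (N-1)d(d-1)/2 = \dim\balance$, using~\eqref{eq:dimension-balance}; so once (iii) is established, the collection is automatically a basis.

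\textbf{Tangency via the parametrization.} By Theorem~\ref{thm:parameter} the map $\mathfrak{z}$ is a local analytic diffeomorphism, so every tangent vector to $\balance$ at $\ww$ is the pushforward of a tangent vector in $\R_+^d \times O_d^{N+1}$. I exhibit each basis vector in this form: $\bfv^k$ is the pushforward of $\dot\Lambda = e_k e_k^T$; $\bfu^{k,l,p}$ for $1 \le p \le N-1$ is the pushforward of the Chebyshev-weighted linear combination of orbit deformations $\dot Q_q = Q_q \alpha^{k,l}$ whose weights are the eigenvector components of the Jacobi matrix $h_a^{k,l}$ of Corollary~\ref{cor:metric-orbit}, so that Lemma~\ref{le:basis6} together with Lemma~\ref{le:isometry} identifies these with the orthonormal basis for $T_{\bfC_\Sigma}\orbit_\Sigma$ constructed in Section~\ref{sec:orbit-basis}; and $\bfu^{k,l,0}, \bfu^{k,l,N}$ arise from variations of $Q_0, Q_N$ paired with the unique compensating combination of intermediate $\dot Q_q$'s that cancels one of the two off-diagonal rank-one directions at each depth. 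As a self-contained check, tangency can also be verified directly by substituting the listed coefficients into the linearized balance constraint $\dot W_{s+1}^T W_{s+1} + W_{s+1}^T \dot W_{s+1} = \dot W_s W_s^T + W_s \dot W_s^T$: for $1 \le p \le N-1$ this reduces to the Chebyshev three-term identity $\sin((s-1)\theta)+\sin((s+1)\theta)=2\cos\theta\,\sin(s\theta)$ with $\theta = p\pi/N$, while for $p\in\{0,N\}$ it reduces to a one-line algebraic identity on powers of $\lambda_k, \lambda_l$.

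\textbf{Orthonormality via index blocks.} Because the columns $\{q_{s,i}\}_i$ of $Q_s$ are orthonormal for each fixed $s$, the rank-one matrices $\{q_{s,i}q_{s-1,j}^T\}$ form an orthonormal family in $\Md^N$, so every Frobenius inner product decomposes along ``index blocks'' labeled by the unordered pair of column indices. Most pairs collapse at a glance: $\bfv^k$ and $\bfv^m$ have disjoint supports for $k\neq m$ and $\|\bfv^k\|^2 = N\cdot (1/N) = 1$; every $\bfv^k$ is orthogonal to every $\bfu$-vector (diagonal vs.\ off-diagonal supports); $\bfu^{k,l,p}\perp\bfu^{m,n,q}$ whenever $\{k,l\}\neq\{m,n\}$; and $\bfu^{k,l,0}\perp\bfu^{k,l,N}$ because they lie in disjoint off-diagonal directions $(l,k)$ and $(k,l)$. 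What remains is to prove $\langle \bfu^{k,l,p}, \bfu^{k,l,q}\rangle = \delta_{pq}$ within a fixed $\{k,l\}$-block for $p,q\in\{0,1,\ldots,N\}$.

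\textbf{Core calculation and main obstacle.} The within-block orthonormality splits into three subcases. For $p,q\in\{1,\ldots,N-1\}$ it reduces, via the discrete sine orthogonality $\sum_{s=1}^{N-1}\sin(sp\pi/N)\sin(sq\pi/N)=\tfrac{N}{2}\delta_{pq}$ and a product-to-sum expansion of the cross terms, to the eigenvalue formula $\sigma^{k,l}_p$ of Lemma~\ref{le:basis6}, and the normalization $1/\sqrt{N\sigma^{k,l}_p}$ produces unit vectors. For $p=q\in\{0,N\}$, the squared norm is the geometric series $\sum_{s=1}^N \lambda_k^{2(s-1)}\lambda_l^{2(N-s)}=(\lambda_k^{2N}-\lambda_l^{2N})/(\lambda_k^2-\lambda_l^2)$, which exactly cancels the prefactor in \eqref{eq:vb6}--\eqref{eq:vb7}. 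For $p\in\{0,N\}$ paired with $q\in\{1,\ldots,N-1\}$, only one of the two off-diagonal terms in $\bfu^{k,l,q}$ survives, and writing the surviving summand as $F(s-1)-F(s)$ with $F(s)=\lambda_k^s\lambda_l^{N-s}\sin(sq\pi/N)$ produces a telescoping sum with vanishing boundary values $F(0)=F(N)=0$. This last telescoping is the delicate step and is the analytic expression of the geometric fact that $\bfu^{k,l,0}$ and $\bfu^{k,l,N}$ lie in the horizontal space for the submersion $\phi$; this orthogonality is what will allow Theorem~\ref{thm:submersion} to follow as a corollary, since $d\phi$ then maps the horizontal sub-basis $\{\bfv^k\}\cup\{\bfu^{k,l,0},\bfu^{k,l,N}\}$ isometrically onto the eigenbasis of $\mathcal{A}_{N,X}$ from Lemma~\ref{le:diagon}.
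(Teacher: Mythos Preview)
Your proof is correct and follows essentially the same geometric skeleton as the paper (block decomposition by index pair $\{k,l\}$, Chebyshev/sine diagonalization for the interior modes, geometric progressions for the two boundary modes), but you organize the final orthogonality verification differently. The paper first computes the full $(N{+}1)\times(N{+}1)$ pullback Gram matrix $\tilde h_a^{k,l}$ of the intermediate basis $\{\bfc^{k,l,p}\}_{p=0}^{N}$ (their Corollary~\ref{cor:metric-balance}), and then produces a change-of-basis matrix $P^{k,l}$ with $P^{k,l}\tilde h_a^{k,l}(P^{k,l})^T=I$ (their Lemma~\ref{le:big-jacobi}); the key step there is the observation that for the geometric sequences $a_p=\lambda_k^p\lambda_l^{N-p}$ and $b_p=\lambda_k^{N-p}\lambda_l^p$ the vector $\tilde h_a^{k,l}a$ is supported only at the two endpoints, which forces $t_p^T\tilde h_a^{k,l}a=0$ since the sine vectors $t_p$ vanish there. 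You bypass the intermediate $\bfc$-basis and verify $\langle\bfu^{k,l,0},\bfu^{k,l,q}\rangle=0$ directly in $\Md^N$ via the telescoping sum $\sum_s(F(s{-}1)-F(s))$ with $F(s)=\lambda_k^s\lambda_l^{N-s}\sin(sq\pi/N)$; this is the direct-space dual of the paper's ``supported on endpoints'' argument. The paper's route more transparently explains \emph{why} the coefficients in~\eqref{eq:vb5}--\eqref{eq:vb7} take the form they do (they are read off from the Jacobi eigenproblem), whereas your route is more self-contained once the formulas are given and avoids introducing the extended tridiagonal matrix~\eqref{eq:pb4}.
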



\subsection{Computing the tangent space $T_\bfW\balance$}
\label{subsec:tangent-balance}
Recall the parametrization $\mathfrak{z}$ of $\balance$ given in equation ~\eqref{eq:b10}--\eqref{eq:b11a}. We may compute the tangent space $T_{\ww}\balance$ by differentiating this parametrization as follows. For each $(\theta,\bfa) \in \R^d \times \anti_d^{N+1}$ we define the smooth curve in $\ww(t) \in \balance$ using 
\begin{equation}
\label{eq:bb1}
\Lambda(t) = \Lambda+ t \theta, \quad Q_p(t) = e^{ta_{p}} Q_p,\quad \bfW(t)=\mathfrak{z}(\Lambda(t),\mathbf{Q}(t)),    
\end{equation} 
where $\theta$ is the diagonal matrix $\diag(\theta_1, \ldots, \theta_d)$ and $\Lambda(t)=\Sigma(t)^{1/N}$. 
We obtain a tangent vector in $T_{\bfW}\balance$ by differentiating these curves
\begin{equation}
\label{eq:bb2}
D\mathfrak{z}(\bfW)(\theta, \bfa) = \left. \frac{d}{dt} \ww(t)\right|_{t=0}.    
\end{equation} 
We then find that the $p$-th matrix in $D\mathfrak{z}(\bfW)(\theta, \bfa)$ is
\begin{equation}
\label{eq:bb3} D \mathfrak{z}(\bfw)(\theta,\bfa)_p = a_p W_p + Q_p \theta Q^T_{p-1} - W_p a_{p-1}, \quad 1\leq p \leq N. 
\end{equation}
The above calculation is closely related to the computation of tangent vectors in equations~\eqref{eq:tangent1}--\eqref{eq:tangent3}. The difference is that the role of $GL(d)^{N-1}$ action on $\fiberX$ has been replaced by the parametrization $\mathfrak{z}$.  
We may again reduce the computation of an orthonormal basis for $T_\bfW \balance$ to the computations at the point $\bfC_\Sigma$ as follows. 

Recall the definition of $\bfc^{k,l,p}$, $1\leq p \leq N-1$ in equation~\qref{eq:ip1}. Now include the endpoints $p=0$ and $p=N$ by setting
\begin{equation}
    \label{eq:bb4}
    \bfc^{k,l,N} = (\alpha^{k,l}\Lambda, 0, \ldots,0), \quad \bfc^{k,l,0} = (0, \ldots,0, -\Lambda \alpha^{k,l})
\end{equation}
Also define the vectors 
\begin{equation}
    \label{eq:bb5}
    \bfm^{k} = (e_k e_k^T, e_k e_k^T, \ldots, e_ke_k^T), \quad 1\leq k \leq d.
\end{equation}
We now extend Lemma~\ref{le:basis2} to $T_{\bfC_\Sigma}\balance$.
\begin{lemma}
    \label{le:balance-basis-center}
$\{\bfc^{p,kl}\}_{0\leq p \leq N, 1\leq k < l \leq d}$ and $\{\bfm\}_{1\leq k \leq d}$ form a basis for  $T_{\bfC_\Sigma}\balance$.  
\end{lemma}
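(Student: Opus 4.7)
The plan is to identify the vectors $\{\bfc^{k,l,p}\}$ and $\{\bfm^k\}$ as push-forwards under the differential $D\mathfrak{z}(\bfC_\Sigma)$ of the standard basis of the tangent space of the parametrizing domain $\R_+^d \times O_d^{N+1}$, and then to invoke Theorem~\ref{thm:parameter} to conclude that this differential is a linear isomorphism onto $T_{\bfC_\Sigma}\balance$.

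First I would check that the counts agree. The tangent space of the domain at $(\Lambda, I, \ldots, I)$ is naturally identified with $\R^d \times \anti_d^{N+1}$, since the tangent space of $O_d$ at the identity is $\anti_d$ and the diagonal matrix $\Lambda$ varies in $\R^d$. This has dimension $d + (N+1)d(d-1)/2$, which matches $\dim \balance = d^2 + (N-1)d(d-1)/2$ from equation~\eqref{eq:dimension-balance}. The proposed generating set contains exactly $d$ vectors of type $\bfm^k$ and $(N+1)d(d-1)/2$ vectors of type $\bfc^{k,l,p}$, so the totals line up.

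Next I would read off the push-forwards from the already-established formula \eqref{eq:bb3}. At the center $\bfC_\Sigma$ we have $Q_p = I$ and $W_p = \Lambda$ for every $p$, so \eqref{eq:bb3} collapses to
\begin{equation*}
D\mathfrak{z}(\bfC_\Sigma)(\theta,\bfa)_p = a_p\Lambda + \theta - \Lambda a_{p-1}, \qquad 1\leq p \leq N.
\end{equation*}
Choosing $\theta = e_k e_k^T$ with $\bfa = 0$ yields a tangent vector equal to $e_k e_k^T$ at every depth, which is exactly $\bfm^k$. Choosing $\theta = 0$ with $\bfa$ whose only nonzero entry is $a_p = \alpha^{k,l}$ yields a tangent vector with $\alpha^{k,l}\Lambda$ at depth $p$ (from $a_p W_p$) and $-\Lambda \alpha^{k,l}$ at depth $p+1$ (from $-W_{p+1} a_p$), and zero elsewhere; this is $\bfc^{k,l,p}$ as defined in~\eqref{eq:ad-basis3} for $1\leq p\leq N-1$, and reduces correctly to the boundary expressions~\eqref{eq:bb4} when $p=0$ or $p=N$ (in those cases only one of the two contributing terms survives).

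Finally I would invoke Theorem~\ref{thm:parameter}: since $X=\Sigma$ has full rank and distinct singular values, $\mathfrak{z}$ is a local analytic diffeomorphism in a neighborhood of $(\Lambda, I, \ldots, I)$. Therefore $D\mathfrak{z}(\bfC_\Sigma)$ is a linear isomorphism, so it carries the standard basis of $\R^d \times \anti_d^{N+1}$ to a basis of $T_{\bfC_\Sigma}\balance$, and this basis is precisely $\{\bfm^k\} \cup \{\bfc^{k,l,p}\}$. There is no real obstacle here once \eqref{eq:bb3} is in hand — the step amounts to matching generators. The only mild subtlety is that the vector $\bfm^k$ actually arises from varying $\Lambda$ in the \emph{singular-value} direction, not from an $O_d^{N+1}$ generator; the orthogonal-group action alone would produce the $\bfc^{k,l,p}$ vectors but miss the $d$-dimensional $\bfm$-subspace, so the argument crucially uses that $\mathfrak{z}$ parametrizes \emph{both} the $\Lambda$-factor and the group-orbit directions simultaneously.
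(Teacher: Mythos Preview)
Your proposal is correct and follows essentially the same approach as the paper's proof, which simply states that the vectors are images of $D\mathfrak{z}(\bfW)(\theta,\bfa)$ and that $D\mathfrak{z}$ is an isomorphism because the singular values are distinct. Your version is more explicit---you verify the dimension count, compute the push-forwards directly from~\eqref{eq:bb3} at the center, and spell out which standard basis vector maps to which $\bfm^k$ or $\bfc^{k,l,p}$---but the underlying argument is identical.
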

\begin{proof}
These vectors are the images of $D\mathfrak{z}(\bfW)(\theta, \bfa)$ for $\theta \in \R^d$ and $\bfa \in \anti_d^{N+1}$. Since the singular values of $X$ are distinct, $D\mathfrak{z}$ is an isomorphism between $\R^d \times \anti_d^{N+1}$ and $T_\bfW\balance$.
\end{proof}
In order to obtain an orthonormal basis, we must compute the inner products between these basis vectors to obtain the pullback metric $\mathfrak{z}^\sharp \iota$ and then diagonalize $\mathfrak{z}^\sharp \iota$. We separate these calculations into a sequence of lemmas.
\begin{lemma}
    \label{le:balance-basis-ip} For all indices $1 \leq j\leq d$, $1 \leq k <l\leq d$ and $0\leq p \leq N$
    \begin{equation}
    \langle \bfm^j, \bfc^{k,l,p}\rangle =0.
    \end{equation}
\end{lemma}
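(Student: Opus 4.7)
The plan is to unpack the Frobenius inner product on $\Md^N$ componentwise and show that each of the $N$ depth-slice contributions vanishes individually. By the definition in equation~\eqref{eq:bb5}, every component of $\bfm^j$ equals $e_j e_j^T$, so the inner product reduces to
\begin{equation*}
    \langle \bfm^j, \bfc^{k,l,p}\rangle = \sum_{s=1}^N \Tr\!\left(e_j e_j^T \, (\bfc^{k,l,p})_s\right) = \sum_{s=1}^N \left[(\bfc^{k,l,p})_s\right]_{jj},
\end{equation*}
using the elementary identity $\Tr(e_j e_j^T M) = M_{jj}$. Hence everything reduces to a statement about the diagonal entries of the nonzero slices of $\bfc^{k,l,p}$.

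Next I would record the structural observation that for every $p \in \{0,1,\ldots,N\}$, each nonzero component of $\bfc^{k,l,p}$ is of the form $\pm \Lambda \alpha^{k,l}$ or $\pm \alpha^{k,l}\Lambda$. This is immediate from equation~\eqref{eq:ad-basis3} for $1\le p\le N-1$ and from the boundary definitions in equation~\eqref{eq:bb4} for $p=0$ and $p=N$. Since $\alpha^{k,l} = \tfrac{1}{\sqrt{2}}(e_k e_l^T - e_l e_k^T)$ is antisymmetric and $k<l$, its diagonal vanishes. Multiplication on either side by the diagonal matrix $\Lambda$ preserves this vanishing:
\begin{equation*}
    (\Lambda \alpha^{k,l})_{jj} = \lambda_j (\alpha^{k,l})_{jj} = 0, \qquad (\alpha^{k,l}\Lambda)_{jj} = (\alpha^{k,l})_{jj} \lambda_j = 0.
\end{equation*}

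Plugging this into the sum above makes every term zero, proving the lemma. I do not anticipate any real obstacle: the argument is a one-line observation about the interaction between diagonal and antisymmetric matrices, and the boundary cases $p=0$ and $p=N$ require no separate treatment since the single nonzero slice has exactly the same form as the two nonzero slices in the interior case.
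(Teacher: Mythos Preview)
Your argument is correct. The paper actually states this lemma without proof, treating it as immediate; your write-up supplies exactly the natural justification (each nonzero slice of $\bfc^{k,l,p}$ is $\pm\Lambda\alpha^{k,l}$ or $\pm\alpha^{k,l}\Lambda$, hence has zero diagonal, while $\Tr(e_je_j^T M)=M_{jj}$), so there is nothing to compare.
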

Next we observe that Lemma~\ref{le:basis2} continues to hold when $p$ and $q$ are allowed to take the endpoint values $0$ and $N$. We also find as in Lemma~\ref{le:basis2} that
\begin{lemma}
\label{le:balance-basis-ip-main1}
For all indices $0\leq p,q \leq N$, the inner-product  $\langle \bfc^{k,l,p}, \bfc^{m,n,q} \rangle =0$ if $|p-q|>1$. 
\end{lemma}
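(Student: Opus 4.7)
The plan is to exploit the fact that each basis vector $\bfc^{k,l,p}$ has a very sparse presentation in $\Md^N$: it has nonzero entries only at one or two depths along the network, and for two such vectors whose depth labels $p,q$ differ by more than $1$ these nonzero entries never occur at the same depth. Since the Frobenius inner product on $\Md^N$ decomposes as a sum of $\Tr(\cdot)$ terms over depths $s=1,\ldots,N$, this immediately forces $\langle \bfc^{k,l,p}, \bfc^{m,n,q} \rangle = 0$.

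First I would catalog the depth-support $D(p) \subseteq \{1,\ldots,N\}$ of the vector $\bfc^{k,l,p}$. From equation~\eqref{eq:ip1} we read off $D(p) = \{p, p+1\}$ for $1 \leq p \leq N-1$, and from the endpoint definitions in~\eqref{eq:bb4} we have $D(0) = \{1\}$ and $D(N) = \{N\}$. A key observation is that $D(p)$ is determined by $p$ alone -- it does not depend on $(k,l)$ -- so the disjointness we are about to verify will apply uniformly in $(k,l)$ and $(m,n)$.

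Next, assuming without loss of generality that $p < q$ with $q \geq p+2$, I would verify $D(p) \cap D(q) = \emptyset$ in four short cases: both interior, $p=0$ with $q$ interior, $q=N$ with $p$ interior, and the extremal pair $p=0,\ q=N$ (which requires $N\geq 2$). In the generic interior case $\max D(p) = p+1 < p+2 \leq q = \min D(q)$; the three boundary cases are even easier since at least one of the two supports is a singleton lying outside the range $\{p+2,\ldots,N\}$ or $\{1,\ldots,p\}$ as appropriate. Consequently, in
\[
\langle \bfc^{k,l,p}, \bfc^{m,n,q} \rangle \;=\; \sum_{s=1}^N \Tr\!\left( (\bfc^{k,l,p})_s^T\, (\bfc^{m,n,q})_s \right),
\]
at every depth $s$ at least one of the two factors vanishes, so the sum collapses to $0$.

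I do not anticipate a serious obstacle here: the statement is a direct extension of Lemma~\ref{le:basis2}, and the only new ingredient beyond that lemma is the pair of endpoint basis vectors $\bfc^{k,l,0}$ and $\bfc^{k,l,N}$ introduced in~\eqref{eq:bb4}, each of which touches just a single depth at one end of the network. The entire argument is a matter of tracking supports along the depth axis; the genuine inner-product computations (analogues of~\eqref{eq:basis5}--\eqref{eq:basis6}, now with the boundary contributions) will naturally be deferred to the companion lemma that handles the remaining cases $|p-q| \leq 1$.
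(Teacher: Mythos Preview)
Your proposal is correct and matches the paper's approach: the paper states that Lemma~\ref{le:basis2} ``continues to hold when $p$ and $q$ are allowed to take the endpoint values $0$ and $N$'' and omits the proof, noting only that $\bfc^{k,l,0}$ and $\bfc^{k,l,N}$ have non-zero entries in just one slot. Your explicit enumeration of the depth-supports $D(p)$ and the short case analysis simply spells out this sparsity argument in full.
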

In addition to the non-zero inner-products in Lemma~\ref{le:basis3}, the only other non-zero inner products are given by
\begin{lemma}
\label{le:balance-basis-ip-main2}
For all $1\leq k < l \leq d$
\begin{equation}
    \label{eq:basis22} \langle \bfc^{k,l,0}, \bfc^{k,l,0} \rangle =\langle \bfc^{k,l,N}, \bfc^{k,l,N} \rangle = \frac{1}{2}(\lambda_k^2 + \lambda_l^2).
\end{equation}  
\begin{equation}
    \label{eq:basis23} \langle \bfc^{k,l,0}, \bfc^{k,l,1} \rangle = \langle \bfc^{k,l,N-1}, \bfc^{k,l,N} \rangle= -\lambda_k \lambda_l.
\end{equation}  
\end{lemma}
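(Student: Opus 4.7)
\emph{Proof proposal.} The plan is to reduce all four inner products to a couple of traces of monomials in $\alpha^{k,l}$ and $\Lambda$ that were already evaluated in the proof of Lemma~\ref{le:basis3}, and then to account for the fact that the endpoint vectors $\bfc^{k,l,0}$ and $\bfc^{k,l,N}$ have only one non-zero block each rather than two.

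First I would unpack the definitions. By~\eqref{eq:bb4}, $\bfc^{k,l,0}$ is supported only at depth $1$, with block $-\Lambda \alpha^{k,l}$, and $\bfc^{k,l,N}$ is supported only at depth $N$, with block $\alpha^{k,l}\Lambda$. On the other hand, $\bfc^{k,l,1}$ is supported at depths $2$ and $1$, with blocks $-\Lambda\alpha^{k,l}$ and $\alpha^{k,l}\Lambda$ respectively, and symmetrically $\bfc^{k,l,N-1}$ is supported at depths $N$ and $N-1$ with blocks $-\Lambda\alpha^{k,l}$ and $\alpha^{k,l}\Lambda$. Thus in each of the four inner products exactly one depth contributes: depth $1$ for the pairs involving $\bfc^{k,l,0}$, and depth $N$ for the pairs involving $\bfc^{k,l,N}$.

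Next I would compute the self inner products $\langle \bfc^{k,l,0},\bfc^{k,l,0}\rangle$ and $\langle \bfc^{k,l,N},\bfc^{k,l,N}\rangle$. Each reduces (using $(\alpha^{k,l})^T=-\alpha^{k,l}$ and the cyclic trace) to $-\Tr(\alpha^{k,l}\Lambda^2\alpha^{k,l})$. This is the same trace computed inside equation~\eqref{eq:ip3} of the proof of Lemma~\ref{le:basis3}, where it was shown to equal $-\tfrac12(\lambda_k^2+\lambda_l^2)$. The factor $\tfrac12$ in~\eqref{eq:basis22} is exactly what we want: in the interior case~\eqref{eq:basis5} one gets $\lambda_k^2+\lambda_l^2$ because \emph{two} depths contribute identical summands, whereas at the endpoints only one depth contributes.

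Similarly, for the cross inner products $\langle \bfc^{k,l,0},\bfc^{k,l,1}\rangle$ and $\langle \bfc^{k,l,N-1},\bfc^{k,l,N}\rangle$, the overlapping depth produces the trace $-\Tr((\Lambda\alpha^{k,l})^T(\alpha^{k,l}\Lambda)) = \Tr(\alpha^{k,l}\Lambda\alpha^{k,l}\Lambda)$. A short expansion of $\alpha^{k,l}\Lambda = \tfrac{1}{\sqrt 2}(\lambda_l e_k e_l^T - \lambda_k e_l e_k^T)$ and squaring (equivalently, specializing the trace evaluated in~\eqref{eq:ip5} of the proof of Lemma~\ref{le:basis3}) yields $-\lambda_k\lambda_l$, giving~\eqref{eq:basis23}. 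No real obstacle arises beyond bookkeeping, and one could optionally replace the $p=N$ computation by an involution argument (reversing the network by $p\mapsto N+1-p$ together with the substitution $\alpha^{k,l}\mapsto -(\alpha^{k,l})^T$) that interchanges the two endpoint cases and reduces the work by half.
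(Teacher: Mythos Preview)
Your proposal is correct and follows essentially the same route as the paper: the paper omits the proof, remarking only that it is an easy modification of Lemma~\ref{le:basis3} in which the endpoint vectors have a single non-zero slot, producing the factor of~$\tfrac12$ in~\eqref{eq:basis22}. Your explicit reduction to the traces $-\Tr(\alpha^{k,l}\Lambda^2\alpha^{k,l})$ and $\Tr(\alpha^{k,l}\Lambda\alpha^{k,l}\Lambda)$ and your bookkeeping of the overlapping depths carry this out cleanly; the optional involution remark is a nice touch not mentioned in the paper.
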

Lemma~\ref{le:balance-basis-ip-main1} and Lemma~\ref{le:balance-basis-ip-main2} are easy modifications of  Lemma~\ref{le:basis3} and the proof is omitted. The only substantive  change is with equations~\eqref{eq:basis10}--\eqref{eq:ip3}. Since $\bfc^{k,l,0}$ and $\bfc^{k,l,N}$ have non-zero entries in only one slot, the results differ by a factor of two.

 We obtain the tangent space $T_\bfW\balance$ at an arbitrary point $\bfW \in \balance$ using the $O_d^{N+1}$ action defined in equation~\eqref{eq:big-group1}. 
\begin{lemma}
    \label{le:tangent-center3}
    Assume $\bfW = \bfQ^N:\bfC_\Sigma$. The tangent space $T_\bfW\balance$ consists of
    \begin{equation}
        \label{eq:tangent-center5}
        \bfQ^N:\bfw = (Q_Nw_NQ_{N-1}^T,\ldots, Q_1 w_1 Q_0^T), \quad \bfw \in T_{\bfC_\Sigma}\balance.
    \end{equation}
\end{lemma}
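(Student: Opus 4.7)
The plan is to realize the claimed formula as the image of $T_{\bfC_\Sigma}\balance$ under the differential of the $O_d^{N+1}$-action evaluated at the reference point $\bfC_\Sigma$. Consider the map
\[
\Phi_{\bfQ^N}: \balance \to \balance, \qquad \bfw \mapsto \bfQ^N : \bfw = (Q_N w_N Q_{N-1}^T, \ldots, Q_1 w_1 Q_0^T).
\]
By Lemma~\ref{le:isometry2}, this is an isometry of $(\balance,\iota)$; its inverse is the analogous action by $(Q_N^T,\ldots,Q_0^T) \in O_d^{N+1}$, and it is component-wise linear in the $W_p$ and hence smooth. Thus $\Phi_{\bfQ^N}$ is a smooth diffeomorphism of $\balance$ onto itself, and by hypothesis it sends $\bfC_\Sigma$ to $\bfW = \bfQ^N : \bfC_\Sigma$.

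Since $\Phi_{\bfQ^N}$ is a diffeomorphism, its differential at $\bfC_\Sigma$ is a linear isomorphism
\[
 d\Phi_{\bfQ^N}\big|_{\bfC_\Sigma}: T_{\bfC_\Sigma}\balance \longrightarrow T_{\bfW}\balance.
\]
To identify this differential explicitly, take any $\bfv \in T_{\bfC_\Sigma}\balance$, represented by a smooth curve $\bfw(t) \in \balance$ with $\bfw(0)=\bfC_\Sigma$ and $\dot{\bfw}(0)=\bfv$. The image curve $\Phi_{\bfQ^N}(\bfw(t))$ has $p$-th coordinate $Q_p w_p(t) Q_{p-1}^T$, and differentiating at $t=0$ yields
\[
 \frac{d}{dt}\Big|_{t=0} Q_p w_p(t) Q_{p-1}^T = Q_p v_p Q_{p-1}^T, \qquad 1 \leq p \leq N,
\]
which is precisely the $p$-th component of $\bfQ^N:\bfv$. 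Hence $d\Phi_{\bfQ^N}|_{\bfC_\Sigma}(\bfv) = \bfQ^N:\bfv$.

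Combining these two steps, $\bfv \mapsto \bfQ^N:\bfv$ is a linear isomorphism from $T_{\bfC_\Sigma}\balance$ onto $T_{\bfW}\balance$, which is the content of~\eqref{eq:tangent-center5}. There is no serious technical obstacle here: the isometry property is already recorded in Lemma~\ref{le:isometry2}, and smoothness and bijectivity follow automatically from the fact that the action is a bilinear transformation by orthogonal matrices. The only point requiring minor care is verifying that the inverse action also preserves $\balance$, which is immediate from the defining equations~\eqref{eq:b7} after conjugation by the $Q_p$'s.
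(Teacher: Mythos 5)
Your proposal is correct and follows essentially the same route the paper intends: the paper states this lemma without a written proof, treating it as an immediate consequence of the $O_d^{N+1}$ action of equation~\eqref{eq:big-group1} being an isometry (Lemma~\ref{le:isometry2}) of $\balance$, and your argument simply makes that explicit by differentiating the (componentwise linear) diffeomorphism $\bfw \mapsto \bfQ^N:\bfw$ along curves through $\bfC_\Sigma$. The verification that the action and its inverse preserve $\balance$ by conjugating the balance equations by the $Q_p$'s is exactly the point implicitly used in the paper, so nothing is missing.
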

\subsection{The pullback metric $\mathfrak{z}^\sharp \Fr$ and the Proof of Theorem~\ref{thm:orthonormal-basis}} 
\label{subsec:pullback}
The translation in Lemma~\ref{le:tangent-center3} is an isometry. Therefore, the above lemmas complete the computation of the metric $(\balance,\iota)$ at an arbitrary point $
\bfW$ on the balanced manifold $\balance$. We collect these results in the 
following analogue of Corollary~\ref{cor:metric-orbit}.
\begin{corollary}
 \label{cor:metric-balance}
    The standard basis on $\R^d\times \anti_d^{N+1}$ may be ordered such that the pullback metric $\mathfrak{z}^\sharp \Fr$ has the block diagonal structure
\begin{equation}
    \label{eq:pb3}
    \tilde{h} = \left( \begin{array}{cccc} 
    NI_d &   &  &\\
         & \tilde{h}_{a}^{1,2} &  &\\
             &  &\ddots     &  \\
         & & & \tilde{h}_a^{d-1,d}
    \end{array} \right),
\end{equation}
where $\tilde{h}_a^{k,l}$ is the $(N+1)\times (N+1)$ symmetric tridiagonal matrix
\begin{equation}
    \label{eq:pb4}
    \tilde{h}_a^{k,l} = \left( \begin{array}{cccccc} 
    \tfrac{1}{2}(\lambda_k^2 +\lambda_l^2) &  -\lambda_k \lambda_l &  & &\\
      -\lambda_k \lambda_l   & \lambda_k^2 +\lambda_l^2 & -\lambda_k \lambda_l & &\\
             &  -\lambda_k \lambda_l  & \lambda_k^2 +\lambda_l^2  & -\lambda_k \lambda_l  \\  
             &    & \ddots & \ddots  &   \\ \\  &    & -\lambda_k \lambda_l & \lambda_k^2 +\lambda_l^2 &  -\lambda_k \lambda_l \\
         & &  & -\lambda_k \lambda_l & \tfrac{1}{2}(\lambda_k^2 +\lambda_l^2)
    \end{array} \right).
\end{equation}
\end{corollary}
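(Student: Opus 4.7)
The plan is to assemble the already-established ingredients. First I would apply Lemma~\ref{le:tangent-center3} to reduce the problem to computing $\mathfrak{z}^\sharp \Fr$ at the single point $\bfC_\Sigma$, since the $O_d^{N+1}$-action is a Riemannian isometry. Second, I would fix the basis of Lemma~\ref{le:balance-basis-center} with the ordering: the $d$ scale vectors $\bfm^k$ listed first, followed by the angular vectors $\bfc^{k,l,p}$ grouped by pairs $(k,l)$ in lexicographic order and, within each group, by increasing $p\in\{0,1,\ldots,N\}$. With this ordering the claim reduces to three block-level assertions about inner products.

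The scale block is $N I_d$: this is immediate from $\bfm^k=(e_ke_k^T,\ldots,e_ke_k^T)$, for the Frobenius inner product is a sum of $N$ standard basis inner-products giving $\langle \bfm^j,\bfm^k\rangle=N\delta_{jk}$. The decoupling of the scale block from the angular blocks is exactly Lemma~\ref{le:balance-basis-ip}. The decoupling of the angular sector across distinct pairs $(k,l)\neq(m,n)$ is Lemma~\ref{le:basis3} for interior indices $1\leq p,q \leq N-1$; for endpoint indices $p,q\in\{0,N\}$ the same trace computation applies, with the Kronecker structure from~\eqref{eq:ip3} again isolating $(k,l)$, the only difference being that the relevant product involves a single pair of slots instead of two.

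Within a fixed pair $(k,l)$ I need to verify that the $(N+1)\times (N+1)$ matrix of inner products among $\bfc^{k,l,0},\ldots,\bfc^{k,l,N}$ is precisely $\tilde h_a^{k,l}$. The tridiagonal band structure follows from Lemma~\ref{le:balance-basis-ip-main1}. The interior diagonal entries $\lambda_k^2+\lambda_l^2$ and the interior off-diagonal entries $-\lambda_k\lambda_l$ come from Lemma~\ref{le:basis3}, while the corner diagonal entries $\tfrac12(\lambda_k^2+\lambda_l^2)$ and the corner off-diagonal entries $-\lambda_k\lambda_l$ are supplied by Lemma~\ref{le:balance-basis-ip-main2}.

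The only delicate point, and the main obstacle I would anticipate, is the endpoint bookkeeping: the extended vectors $\bfc^{k,l,0}$ and $\bfc^{k,l,N}$ defined in~\eqref{eq:bb4} each carry one nonzero matrix slot while the interior vectors $\bfc^{k,l,p}$ carry two. This asymmetry is exactly what produces the halved corner diagonal entries $\tfrac12(\lambda_k^2+\lambda_l^2)$, whereas the corner off-diagonal entries retain the full value $-\lambda_k\lambda_l$ because the relevant trace depends only on the single overlapping slot. Once this arithmetic is accepted, the block-diagonal presentation~\eqref{eq:pb3}--\eqref{eq:pb4} follows by direct inspection of the inner product matrix under the chosen ordering.
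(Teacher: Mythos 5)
Your proposal is correct and follows essentially the same route as the paper, which obtains the corollary by collecting Lemmas~\ref{le:balance-basis-center}, \ref{le:balance-basis-ip}, \ref{le:balance-basis-ip-main1}, \ref{le:balance-basis-ip-main2} (together with Lemma~\ref{le:basis3} for the interior entries) and translating to a general point by the isometry of Lemma~\ref{le:tangent-center3}. Your explicit checks of the $N I_d$ block, the cross-pair vanishing at the endpoint indices, and the factor-of-two origin of the corner entries $\tfrac12(\lambda_k^2+\lambda_l^2)$ are exactly the details the paper leaves implicit.
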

As in Corollary~\ref{cor:metric-orbit}, each block corresponds to the basis matrix $\alpha^{k,l} \in \anti_d$ and there are $d(d-1)/2$ blocks. However, in contrast with $h_a^{k,l}$ in Corollary~\ref{cor:metric-orbit}, each block $\tilde{h}_a^{k,l}$ is now of size $(N+1)\times (N+1)$ since we also include the variation of $Q_N$ and $Q_0$. The leading diagonal block $N I_d$ corresponds to the the tangent vectors $\mathbf{v}^k$.

In order to diagonalize $\tilde{h}$, we must diagonalize each tridiagonal matrix $\tilde{h}_a^{k,l}$. When compared with Lemma~\ref{le:basis2}, the main issue is to account for the endpoints of the network, $p=0$ and $p=N$. This may be seen as the imposition of boundary conditions on the chain (see Remark~\ref{rem:svd-distinct-basis} below).

The necessary modification of Lemma~\ref{le:basis6} is as follows. Define the matrix $P^{k,l}$ with entries
\begin{eqnarray}
    \label{eq:cheby4}
    P^{k,l}_{0q} & =& \frac{1}{\sqrt{\sigma^{kl}_0}} \lambda_k^q \lambda_l^{N-q},\quad 0 \leq q \leq N, \\ 
     \label{eq:cheby5}
    P^{k,l}_{pq} & = & \frac{1}{\sqrt{\sigma^{kl}_p}} S_{pq}, \quad 1\leq p \leq N-1, \; 0 \leq q \leq N\\
     \label{eq:cheby6}
    P^{k,l}_{Nq} & = & \frac{1}{\sqrt{\sigma^{kl}_N}} \lambda_k^{N-q} \lambda_l^{q}, \quad 0 \leq q \leq N.
\end{eqnarray}
Here $S\in O_{N-1}$ and $\{\sigma^{k,l}_p\}_{1\leq p \leq N-1}$ were defined in equations~\eqref{eq:cheby1}--~\eqref{eq:cheby2}, and 
\begin{equation}
\label{eq:cheby7}
 \sigma^{k,l}_0 =  \sigma^{k,l}_N = \frac{1}{2}(\lambda_l^2 -\lambda_k^2)(\lambda_l^{2N}-\lambda_k^{2N}). 
\end{equation}
\begin{lemma}
\label{le:big-jacobi}
The matrices $\tilde{h}^{k,l}_a$ and $P^{k,l}$ satisfy the identity 
\begin{equation}
    \label{eq:basis41} P^{k,l}\tilde{h}_a^{kl} (P^{k,l})^T= I_{N+1}.
\end{equation}  
\end{lemma}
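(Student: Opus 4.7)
The plan is to verify $P^{k,l}\tilde{h}_a^{k,l}(P^{k,l})^T = I_{N+1}$ entry-by-entry, which amounts to showing that the $N+1$ rows of $P^{k,l}$, viewed as vectors $v_0,v_1,\ldots,v_N \in \R^{N+1}$, form an $\tilde{h}_a^{k,l}$-orthonormal basis. I would split the rows into two ``boundary'' rows ($p=0$ and $p=N$) and $N-1$ ``interior'' rows ($1\leq p \leq N-1$), mirroring the block structure of $\tilde{h}_a^{k,l}$ relative to its interior Jacobi block, which coincides with $h_a^{k,l}$ from Corollary~\ref{cor:metric-orbit}.

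The key algebraic observation is that the boundary row
\[
v_{0,q} = \frac{1}{\sqrt{\sigma_0^{k,l}}}\,\lambda_k^{q}\lambda_l^{N-q}, \qquad 0\leq q \leq N,
\]
is \emph{discretely harmonic} on the interior: the recurrence
\[
-\lambda_k\lambda_l\,x_{q-1} + (\lambda_k^2+\lambda_l^2)\,x_q - \lambda_k\lambda_l\,x_{q+1}=0
\]
has characteristic roots $\lambda_k/\lambda_l$ and $\lambda_l/\lambda_k$, so $v_{0,q}$ solves it for $1\leq q\leq N-1$; the same holds for $v_{N,q} = \lambda_k^{N-q}\lambda_l^{q}/\sqrt{\sigma_N^{k,l}}$. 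Consequently $\tilde{h}_a^{k,l}v_0^T$ and $\tilde{h}_a^{k,l}v_N^T$ are supported only at the two boundary indices $q=0$ and $q=N$. Since the interior rows $v_p$ ($1\leq p\leq N-1$) vanish at $q=0,N$ (the sine formula gives $\sin 0 = \sin 2p\pi = 0$), this harmonicity immediately forces the boundary rows to be $\tilde{h}_a^{k,l}$-orthogonal to every interior row.

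With boundary-interior pairings dispatched, the interior-interior pairings reduce directly to Lemma~\ref{le:basis6}: because the interior rows vanish at $q=0,N$, the couplings in $\tilde{h}_a^{k,l}$ to the boundary indices contribute nothing, so $v_p\tilde{h}_a^{k,l}v_{p'}^T$ equals $(Sh_a^{k,l}S^T)_{pp'}/\sqrt{\sigma_p^{k,l}\sigma_{p'}^{k,l}} = \delta_{pp'}$. For the boundary-boundary block, I would compute $(\tilde{h}_a^{k,l}v_0^T)_0$ and $(\tilde{h}_a^{k,l}v_0^T)_N$ directly; by harmonicity each involves only two terms, and the $\tfrac{1}{2}(\lambda_k^2+\lambda_l^2)$ corner diagonals combine with the $-\lambda_k\lambda_l$ off-diagonals to produce $\tfrac{1}{2}\lambda_l^N(\lambda_l^2-\lambda_k^2)/\sqrt{\sigma_0^{k,l}}$ and $\tfrac{1}{2}\lambda_k^N(\lambda_k^2-\lambda_l^2)/\sqrt{\sigma_0^{k,l}}$ respectively. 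Dotting with $v_0$ and $v_N$ and invoking the explicit formula $\sigma_0^{k,l}=\sigma_N^{k,l}=\tfrac{1}{2}(\lambda_l^2-\lambda_k^2)(\lambda_l^{2N}-\lambda_k^{2N})$ then yields $v_0\tilde{h}_a^{k,l}v_0^T=v_N\tilde{h}_a^{k,l}v_N^T=1$, while the cross-term $v_0\tilde{h}_a^{k,l}v_N^T$ produces two geometric contributions of opposite sign that cancel exactly.

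The main obstacle is not conceptual but bookkeeping: one must recognize that the unusual $\tfrac{1}{2}$ in the corner diagonals of $\tilde{h}_a^{k,l}$ is precisely the boundary condition that makes the geometric sequences $\lambda_k^q\lambda_l^{N-q}$ and $\lambda_l^q\lambda_k^{N-q}$ into a bi-orthogonal eigenbasis of the $2\times 2$ boundary Schur complement with normalization $\sigma_0^{k,l}$. This structural fact is what decouples the boundary and interior sectors, reducing the diagonalization of $\tilde{h}_a^{k,l}$ to the already-known diagonalization of $h_a^{k,l}$ from Lemma~\ref{le:basis6}; no additional machinery is needed beyond the recurrence root computation and the definition of $\sigma_0^{k,l}$.
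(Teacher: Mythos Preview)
Your proposal is correct and follows essentially the same route as the paper's proof: both arguments observe that the geometric sequences $\lambda_k^q\lambda_l^{N-q}$ and $\lambda_k^{N-q}\lambda_l^q$ satisfy the interior recurrence so that $\tilde{h}_a^{k,l}$ maps them to vectors supported at $q=0,N$, use the vanishing of the sine rows at the endpoints to decouple the boundary and interior sectors, invoke Lemma~\ref{le:basis6} for the interior block, and compute the $2\times 2$ boundary pairings directly using the explicit value of $\sigma_0^{k,l}$. The only cosmetic difference is that the paper phrases the interior step as ``the extended sine vectors are eigenvectors of $\tilde{h}_a^{k,l}$'' rather than reducing to $Sh_a^{k,l}S^T=\Sigma^{k,l}$, which amounts to the same computation.
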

We first show how Theorem~\ref{thm:orthonormal-basis} follows from Lemma~\ref{le:big-jacobi}.
\begin{proof}[Proof of Theorem~\ref{thm:orthonormal-basis}]
By Lemma~\ref{le:balance-basis-center} and Corollary~\ref{cor:metric-balance} it is enough to fix $(kl)$ and diagonalize the associated block of $\mathfrak{z}^\sharp \iota$. Lemma~\ref{le:big-jacobi} provides the desired orthonormal basis for this block. Indeed, 
\begin{equation}
    \label{eq:basis42}
    \bfu^{k,l,p} = \sum_{r=0}^N P^{k,l}_{pr}\bfc^{k,l,r}, \quad 0 \leq p \leq N.
\end{equation}
Then it is immediate that for $0\leq p,q\leq N$
\begin{equation}
    \label{eq:basis43}
    \langle \bfu^{k,l,p}, \bfu^{k,l,q} \rangle = \sum_{r,s=0}^N P^{k,l}_{pr}\langle \bfc^{k,l,r},\bfc^{k,l,s}\rangle P^{k,l}_{qs} = (P^{k,l} \tilde{h}_a^{k,l} (P^{k,l})^T)_{pq}=\delta_{pq}.
\end{equation}
\end{proof}
\begin{proof}[Proof of Lemma~\ref{le:big-jacobi}]
Let $T$ be the $(N+1)\times (N+1)$ matrix obtained by extending equation~\eqref{eq:cheby1} to $p=0$ and $p=N$. The top and bottom rows and leftmost and rightmost columns of $T$ vanish. Thus, the columns $\{t_1, \ldots, t_{N-1}\}$ are eigenvectors of $\tilde{h}_a^{k,l}$ with the eigenvalues $\{\sigma_p^{k,l}\}_{p=1}^{N-1}$ defined in equation~\eqref{eq:cheby2}. 

Thus, we only need to determine two additional vectors $a$ and $b$ that do not lie in the span of $\{t_1, \ldots, t_{N-1}\}$. We observe that the geometric sequences
\begin{equation}
    \label{eq:basis51}
    a_{p} = \lambda_k^p  \lambda_l^{N-p}, \quad b_p = \lambda_k^{N-p}\lambda_l^p, \quad 0 \leq p \leq N, 
\end{equation}
have the property that
\begin{equation}
    \label{eq:basis52}
    \tilde{h}_a^{k,l} \left(\begin{array}{c}a_0 \\ a_1 \\ \vdots \\ a_{N-1} \\ a_N \end{array} \right) = \frac{1}{2}(\lambda_l^2 -\lambda_k^2) 
    \left(\begin{array}{c}a_0 \\ 0 \\ \vdots \\ 0 \\ -a_{N} \end{array} \right),
\end{equation}
and 
\begin{equation}
    \label{eq:basis53}
    \tilde{h}_a^{k,l} \left(\begin{array}{c}b_0 \\ b_1 \\ \vdots \\ b_{N-1} \\ b_N \end{array} \right) = \frac{1}{2}(\lambda_l^2 -\lambda_k^2) 
    \left(\begin{array}{c}-b_0 \\ 0 \\ \vdots \\ 0 \\ b_{N} \end{array} \right).
\end{equation}
We immediately obtain the orthogonality property
\begin{equation}
    \label{eq:basis54}
    t_p^T \tilde{h}_a^{k,l} a =  t_p^T \tilde{h}_a^{k,l} b =0, \quad 1\leq p \leq N-1.
\end{equation}
We also note that $a$ and $b$ are themselves orthogonal, since
\begin{equation}
    \label{eq:basis55}
    b^T \tilde{h}_a^{k,l} a =  \frac{1}{2}(\lambda_l^2 -\lambda_k^2)(a_0b_0-a_Nb_N)=0.
\end{equation}
We normalize the column vectors $a$ and $b$ to obtain the matrix $P^{k,l}$.
\end{proof}

\begin{remark}
    \label{rem:svd-distinct-basis} The following contrast with equation~\eqref{eq:ip6} may help the reader understand the nature of the eigenvectors corresponding to $p=0$ and $p=N$.    When $\lambda_k=\lambda_l=1$, $\tilde{h}_a^{k,l}$ reduces to the matrix \begin{equation}
    \label{eq:pb5}
    \tilde{h}_a^{k,l} = \left( \begin{array}{cccccc} 
    1 &  -1  &  & &\\
      -1  & 2 & -1 & &\\
             &  -1  & 2  & -1  \\  
             &    & \ddots & \ddots  &   \\ \\  &    & -1 & 2 &  -1 \\
         & &  & -1 & 1
    \end{array} \right).
\end{equation}
While this matrix appears in the theory of Chebyshev polynomials, it is singular. Indeed, it is easy to check that the vector $(1,\ldots,1)$ lies in the nullspace of $\tilde{h}_a^{k,l} $. This matrix corresponds to Neumann boundary conditions for the boundary value problem with second-order difference operator defined by $h^{k,l}$. The reader is invited to compute the eigenvalues and eigenvectors as a limit of Lemma~\ref{le:big-jacobi}.
\end{remark}

\subsection{Riemannian submersion}
\label{subsec:submersion}
\begin{proof}[Proof of Theorem~\ref{thm:submersion}]
Recall the definition of the metric $g^N$ in equations~\eqref{eq:metric2}--~\eqref{eq:metric3} in Section~\ref{subsec:prior}. In order to show that $g^N$ is obtained through Riemannian submersion we must show that $\phi_*: (\mathrm{Ker}\,\phi_*)^{\perp}\to \mathfrak{M}_d$ is an isometry, where $\phi_*:T\mathcal M\to T\mathfrak{M}$ is the differential of $\phi$.

To this end, we observe that Theorem~\ref{thm:orthonormal-basis} allows us to compute $\mathrm{Ker}\phi_*$ and $\mathrm{Ker}\phi_*^\perp$ explicitly. First we note that
\begin{equation}
    \label{eq:submersion1} (\mathrm{Ker}\,\phi_*) = \mathrm{span}\{\bfu^{k,l,p}, 1\leq p\leq N-1, \; 1\leq k< l \leq d \}.
\end{equation}
Indeed, these vectors generate the $O_d^{N-1}$ group action that preserve $\orbitx$. Thus, $X=\phi(\bfW(t))$ is constant for any one-parameter curve obtained by integration along these vector fields. Since $\mathrm{dim}(\balance)=d^2+(N-1) d(d-1)/2$ and $\mathrm{dim}(\mathfrak{M}_d)=d^2$ the remaining vectors $\{\bfv^k\}_{k=1}^d$ and $\{\bfu^{k,l,p}, p=0,\,N, \; 1\leq k< l \leq d \}$ form an orthonormal basis for $(\mathrm{Ker}\,\phi_*)^{\perp}$. 

We must now show that $\phi_*: (\mathrm{Ker}\,\phi_*)^{\perp}\to \mathfrak{M}_d$ is an isometry. This calculation is simplified through the calculus of differential forms. 
By definition $\phi(\bfW)=X$, so that $\phi_*(\bfu^{k,l,p})=\bfu^{k,l,p}(X)$. Similarly, $\phi_*(\bfv)=\bfv(X)$. Let us also recall the notation of the SVD of $X$ in equation~\eqref{eq:diagon1}.

We use the definitions of $\bfv^k$ and $\bfu^{k,l,p}$ for $p=0$ and $N$ in equations~\eqref{eq:vb6} and~\eqref{eq:vb7} to find that for $1\leq k < l \leq d$
\begin{align}
\label{eq:submersion2}
        \bfv^k(X)=&\sqrt{N}\lambda_k^{N-1} q_{N,k}q_{0,k}^T\\
        \label{eq:submersion3}
        \bfu^{k,l,0}(X)=&\sqrt{\frac{\lambda_k^{2N}-\lambda_l^{2N}}{\lambda_k^2-\lambda_l^2}}q_{N,l}q_{0,k}^T\\
        \label{eq:submersion4}
        \bfu^{k,l,N}(X)=&\sqrt{\frac{\lambda_k^{2N}-\lambda_l^{2N}}{\lambda_k^2-\lambda_l^2}}q_{N,k}q_{0,l}^T.
    \end{align}
    Since $\Lambda^N=\Sigma$, this is exactly the decomposition of $g^N$ stated in equation~\eqref{eq:p-metric1} following Lemma~\eqref{le:diagon}. 
\end{proof}


\section{Discussion}
\label{sec:discussion}
\subsection{Overview}
Our purpose in this work has been to shed new light on the following question: how does depth affect the training dynamics in deep learning? 

We have studied this problem in the simplified setting of the DLN using methods from control theory, dynamical systems theory, and random matrix theory. In this section, we recall the notion of implicit bias in deep learning. We then explain (first as a recipe and then in the concepts of equilibrium thermodynamics) the relation between the entropy formula, gradient dynamics and implicit bias. 



\subsection{Implicit bias}
Let us first recall the notion of implicit bias within the framework of statistical learning theory. 

The task in statistical learning theory is to construct a function $f: \R^{d_1}\to \R^{d_2}$ in a parametrized class of functions from given training data $\{(x_j,y_j)\}_{j=1}^n \subset   \R^{d_1}\times \R^{d_2}$.  Once $f$ has been constructed, its performance is evaluated on a distinct set of test data, and we say that $f$ generalizes well when it approximates the test data with small error. 

In deep learning, the function $f$ is constructed using neural networks whose parameters are obtained by using gradient descent to minimize the empirical loss function constructed from the data. The fundamental mystery in deep learning is why deep neural networks generalize so well, without overfitting, even in the absence of an explicit regularizer. A striking dynamic demonstration of the distinction between the predictions of (traditional) statistical learning theory and deep learning is provided by the double-descent curve of Belkin {\em et al}~\cite{Belkin-dd,Belkin-dd2}. This ability of deep neural networks to generalize without overfitting is known as implicit bias or implicit regularization.  We refer the reader to a recent review by Vardi that summarizes several approaches to implicit bias~\cite{Vardi}. 




The main conjecture that underlies our work is that implicit bias in deep learning is a dynamic phenomena that originates in the geometry of gradient fields for overparametrized functions. To this end, we have focused on a rigorous analysis of the geometric consequences of overparametrization in the DLN using Lie groups. We have identified robust properties of the gradient flow that do {\em not\/} rely on the form of the loss function or on simplifying assumptions on the dimension and depth. The main advantages of our approach are that it provides a deeper understanding of the fundamental concept of balancedness, it reveals the intrinsic Riemannian geometry of the learning process, it allows the inclusion of noise as a selection principle, and that it provides exact formulas (such as the entropy formula) without any restrictions on $d$ and $N$. We note that there have been other recent attempts to derive effective theories for deep learning using geometric methods (see in particular~\cite{t-chen}).

The mathematical foundation of our approach is dynamical systems theory, not statistical learning theory. But there are, of course, several places where the methods overlap. In addition to the prior work~\cite{ACH,Bah} whose importance we have discussed in Section~\ref{sec:results}, there have been several attempts to understand implicit bias in linear networks. These include a characterization under simplifying assumptions (with margin-based generalization bounds for logistic regression when $d=1$~\cite{Telgarsky} and with the nuclear norm when $N=2$~\cite{Gunasekar2018,Gunasekar2017}). An interesting recent modification of linear networks are the spectral neural networks introduced in~\cite{ghosh}. There have also been several studies of the convergence to critical points  in the DLN for different choices of learning tasks~\cite{Brechet-Montufar,Ge1,Rauhut2} or simplifying assumptions on dimension (e.g. $N=3$ and $d=\infty$ in~\cite{Figalli}). There has also been rigorous geometric analysis of the expressivity properties of linear convolutional networks~\cite{Montufar1,Montufar2}. It is of interest to combine this geometric analysis with gradient dynamics.

\subsection{Entropic regularization}
\label{subsec:regulatization}
The entropy formula may be used to regularize the gradient flow by modifying the gradient dynamics to include small noise in the gauge group $O_d^{N-1}$. While the rigorous theory is developed in~\cite{MY-RLE}, we describe the main insights here. 

Recall the Boltzmann entropy computed in Theorem~\ref{thm:entropy}. 
Given a loss function $E(X)$ and an inverse temperature $\beta \in (0,\infty)$ we may now define the {\em free energy\/}
\begin{equation}
    \label{eq:free-energy}
    F_\beta(X) = E(X) - \frac{1}{\beta} S(X).
\end{equation}
The gradient flow of $E$ in Theorem~\ref{thm:BRTW2} may be augmented to the gradient flow of the free energy
\begin{equation}
\label{eq:free-energy-descent}
\dot{X} = -\mathrm{grad}_{g^N} F_\beta (X).
\end{equation}
We show in~\cite{MY-RLE} (see also~\cite[\S 12]{GM-dln} for a simplified exposition) that the gradient dynamics of $X$ correspond to {\em stochastic\/} dynamics for $\ww$ on the balanced manifold $\mathcal{M}$ described by a Riemannian Langevin equation (RLE). The stochastic forcing of this RLE is Brownian motion on $\orbitx$ and an explicit description of this Brownian motion is provided by the orthonormal basis computed in Theorem~\ref{thm:orthonormal-basis}.

It is shown in~\cite{Chen2} that the entropy is concave (in the Euclidean geometry on $\Md)$. Thus, the inclusion of the entropy in the free energy regularizes the loss function $E(X)$ allowing us to characterize the minimizers of $F_\beta$ for certain loss functions $E$. While the entropy formula was discovered through an analogy with random matrix theory (RMT), a surprising feature is that the minimizers of the free energy for the DLN do {\em not\/} show singular value repulsion.

The small noise limit is $\beta\to \infty$. We conjecture that the inclusion of the entropy provides a {\em selection principle\/} when $E(X)$ has a large family of minimizers. Such degenerate loss functions arise in learning tasks such as matrix sensing and matrix completion. In this setting, we conjecture that the observed minima of gradient descent for $E$ are obtained as the $\beta \to \infty$ limit of the minimizing sets
\begin{equation}
    \label{eq:free-energy3}
    \mathcal{S}_\beta= \mathrm{argmin}_{X\in \Mdd} F_\beta(X).
\end{equation}
Partial results in this direction are presented in~\cite{Chen2}.

\subsection{Equilibrium thermodynamics in the DLN}
\label{subsec:thermo}
The DLN is a gradient flow that models a learning process. On the other hand, the Boltzmann entropy, and the Boltzmann formula $S= k \log \#$ where $\#$ is the number of microstates, are concepts from statistical physics. {\em A priori\/} the DLN is not  a physical system; thus the use of terms such as microstate, entropy and thermodynamics within the context of learning requires  a careful explanation. 

We relate the learning process to thermodynamics using information theory. To this end, let us recall some fundamental principles from information theory. First, we consider the Shannon entropy for a discrete random variable. Given a random variable $X$ taking values in a finite alphabet 
$\mathcal{A}$ of size $m$ with probability $p_1,\ldots, p_m$, the Shannon entropy of $X$ is 
$S(X)= \sum_{i=1}^m p_i \log_2 p_i$. This definition extends naturally to the entropy rate for stationary sequences $\{X_n\}_{n=-\infty}^\infty$. The asymptotic equipartition property then states, roughly speaking, that the sequence $\{X_n\}_{n=p}^{p+N}$ in a window of large length $N$, is one of roughly $2^{NS}$ statistically indistinguishable sequences of $2^{Nm}$ possible sequences. Finally, both these notions may be extended to a stationary real-valued process 
$\{X(t)\}_{t\in \R}$ where $X(t)$ takes values in a manifold. Thus, the Shannon entropy is a well-defined concept for stationary processes taking values in a manifold. A minor difference in convention is the base of the logarithm; for discrete random variable $\log_2$ is common, for continuous time we use the natural logarithm $\log$.

In particular, we may always associate a Shannon entropy to Brownian motion on a compact Riemannian manifold $(\mathcal{M},g)$. In this setting, the entropy of the random variable $X(t)$ for any fixed $t$ is $S(X) = \log \mathrm{vol}_g(\mathcal{M})$ as is the entropy rate of the sequence $\{X(t)\}_{t\in\R}$. Thus, the Boltzmann entropy and the Shannon entropy are equivalent for Brownian motion on a compact Riemannian manifold $(\mathcal{M},g)$. The use of Brownian motion also provides an intuitive notion of microscopic fluctuations so that we continue to have a `physical cartoon'.

Thus, the important mathematical structure that allows us to treat the physical and information theoretic entropy in the same manner is Brownian motion on a manifold. In the setting of the DLN, Brownian motion in $(\orbitx,\iota)$ provides a precise description of fluctuations in the gauge group $O_d^{N-1}$. This ensemble of paths is our thermodynamic system and it is conveniently visualized through Riemannian submersion. As $X$ evolves downstairs according to the gradient flow~\eqref{eq:free-energy-descent}, the ensemble of paths upstairs evolves in a quasistatic manner. This physical caricature, along with its rigorous mathematical formulation, is described in detail in the forthcoming paper~\cite{MY-RLE}.


\section{Acknowledgements}
The authors thank Yotam Alexander, Sanjeev Arora, Alan Chen, Nadav Cohen, Luis Contreras Moreno, Tejas Kotwal, Kathryn Lindsey, Marco Nahas and the anonymous referee for remarks that have improved this work.

\bibliographystyle{siam}
\bibliography{dln-entropy}

@article{CMV,
  title={Deep Linear Networks for Matrix Completion—an Infinite Depth Limit},
  author={Cohen, Nadav and Menon, Govind and Veraszto, Zsolt},
  journal={SIAM Journal on Applied Dynamical Systems},
  volume={22},
  number={4},
  pages={3208--3232},
  year={2023},
  publisher={SIAM}
}

@article{Bah,
  title={Learning deep linear neural networks: {R}iemannian gradient flows and convergence to global minimizers},
  author={Bah, Bubacarr and Rauhut, Holger and Terstiege, Ulrich and Westdickenberg, Michael},
  journal={Information and Inference: A Journal of the IMA},
  volume={11},
  number={1},
  pages={307--353},
  year={2022},
  publisher={Oxford University Press}
}

@inproceedings{Gunasekar2017,
  author    = {Suriya Gunasekar and
               Blake E. Woodworth and
               Srinadh Bhojanapalli and
               Behnam Neyshabur and
               Nati Srebro},
  title     = {Implicit Regularization in Matrix Factorization},
  booktitle = {Advances in Neural Information Processing Systems 30: Annual Conference
               on Neural Information Processing Systems 2017, December 4-9, 2017,
               Long Beach, CA, {USA}},
  pages     = {6151--6159},
  year      = {2017},
  url       = {https://proceedings.neurips.cc/paper/2017/hash/58191d2a914c6dae66371c9dcdc91b41-Abstract.html},
  timestamp = {Thu, 21 Jan 2021 15:15:21 +0100},
  biburl    = {https://dblp.org/rec/conf/nips/GunasekarWBNS17.bib},
  bibsource = {dblp computer science bibliography, https://dblp.org}
}

@article{Gunasekar2018,
  title={Implicit bias of gradient descent on linear convolutional networks},
  author={Gunasekar, Suriya and Lee, Jason D and Soudry, Daniel and Srebro, Nati},
  journal={Advances in Neural Information Processing Systems},
  volume={31},
  year={2018}
}

@inproceedings{Brechet-Montufar,
  title={Critical points and convergence analysis of generative deep linear networks trained with {B}ures-{W}asserstein loss},
  author={Br{\'e}chet, Pierre and Papagiannouli, Katerina and An, Jing and Mont{\'u}far, Guido},
  booktitle={International Conference on Machine Learning},
  pages={3106--3147},
  year={2023},
  organization={PMLR}
}

@incollection{GM-dln,
  author    = {Menon, Govind},
  title     = {The geometry of the deep linear network},
  booktitle = {XIV Symposium on Probability and Stochastic Processes},
  publisher = {Birkh\"{a}user Cham},
  year      = {2025},
  editor    = {Carmen Geraldi Higuera Chan and Jos\'{e} Alfredo López Mimbela and Sergio I. L\'{o}pez, Carlos G. Pacheco},
series={Progress in Probability},
}

@article{LM1,
   title={Regularization implies balancedness in the deep linear network},
   journal  = {arXiv:2511.01137},
   author={Lindsey, Kathryn and Menon, Govind},
   year={2025}
}

@article{Chen1,
   title={Geodesics in the deep linear network},
   journal  = {Quarterly of Applied Mathematics (in press)},
   author={Chen, Alan},
   year={2026}
}

@article{Chen2,
   title={Entropic regularization in the deep linear network},
   journal  = {arXiv:2512.06137},
   author={Chen, Alan and Kotwal, Tejas Suresh and Menon, Govind},
   year={2025}
}

@article{MY-RLE,
   title={A {R}iemannian {L}angevin equation for the deep linear network},
   journal  = {Preprint},
   author={Menon, Govind and Yu, Tianmin},
   year={2025}
}

@article{MY-conic,
   title={The {R}iemannian {L}angevin equation and conic programs},
   journal  = {Bulletin of the Institute of Mathematics Academia Sinica (New Series)},
   author={Menon, Govind and Yu, Tianmin},
   volume={20},
   number ={3},
   pages={197--213},
   year={2025},
   doi={10.21915/BIMAS.2025302},
}

@inproceedings{ACH,
  title={On the optimization of deep networks: Implicit acceleration by overparameterization},
  author={Arora, Sanjeev and Cohen, Nadav and Hazan, Elad},
  booktitle={International Conference on Machine Learning},
  pages={244--253},
  year={2018},
  organization={PMLR}
}

@article{Ponting,
    author = {Ponting, F. W. and Potter, H. S. A.},
    title = "{The volume of orthogonal and unitary space}",
    journal = {The Quarterly Journal of Mathematics},
    volume = {os-20},
    number = {1},
    pages = {146-154},
    year = {1949},
    month = {01},
    issn = {0033-5606},
    doi = {10.1093/qmath/os-20.1.146},
    url = {https://doi.org/10.1093/qmath/os-20.1.146},
    eprint = {https://academic.oup.com/qjmath/article-pdf/os-20/1/146/4667269/os-20-1-146.pdf},
}

@book {GvL,
    AUTHOR = {Golub, Gene H. and Van Loan, Charles F.},
     TITLE = {Matrix computations},
    SERIES = {Johns Hopkins Studies in the Mathematical Sciences},
   EDITION = {Fourth},
 PUBLISHER = {Johns Hopkins University Press, Baltimore, MD},
      YEAR = {2013},
     PAGES = {xiv+756},
      ISBN = {978-1-4214-0794-4; 1-4214-0794-9; 978-1-4214-0859-0},
   MRCLASS = {65-02 (65Fxx)},
  MRNUMBER = {3024913},
MRREVIEWER = {J\"org\ Liesen},
}

@book {Kato,
    AUTHOR = {Kato, Tosio},
     TITLE = {Perturbation theory for linear operators},
    SERIES = {Die Grundlehren der mathematischen Wissenschaften},
    VOLUME = {Band 132},
 PUBLISHER = {Springer-Verlag New York, Inc., New York},
      YEAR = {1966},
     PAGES = {xix+592},
   MRCLASS = {47.00 (47.48)},
  MRNUMBER = {203473},
MRREVIEWER = {L.\ de Branges},
}

@ARTICLE{Baldi-Hornik,
  author={Baldi, P.F. and Hornik, K.},
  journal={IEEE Transactions on Neural Networks}, 
  title={Learning in linear neural networks: a survey}, 
  year={1995},
  volume={6},
  number={4},
  pages={837-858},
  keywords={Intelligent networks;Neural networks;Unsupervised learning;Backpropagation algorithms;Principal component analysis;Neurons;Computer networks;Evolution (biology);Algorithm design and analysis;Biology computing},
  doi={10.1109/72.392248}
}

@article {Brockett,
    AUTHOR = {Brockett, R. W.},
     TITLE = {Dynamical systems that sort lists, diagonalize matrices, and
              solve linear programming problems},
   JOURNAL = {Linear Algebra Appl.},
  FJOURNAL = {Linear Algebra and its Applications},
    VOLUME = {146},
      YEAR = {1991},
     PAGES = {79--91},
      ISSN = {0024-3795,1873-1856},
   MRCLASS = {90C05 (15A21 34A34 58F07 65K05)},
  MRNUMBER = {1083465},
       DOI = {10.1016/0024-3795(91)90021-N},
       URL = {https://doi.org/10.1016/0024-3795(91)90021-N},
}

@inproceedings{Brockett-transient,
  title={Modeling the transient behavior of stochastic gradient algorithms},
  author={Brockett, Roger},
  booktitle={2011 50th IEEE Conference on Decision and Control and European Control Conference},
  pages={4461--4466},
  year={2011},
  organization={IEEE}
}

@incollection {Brockett-automata,
    AUTHOR = {Brockett, R. W.},
     TITLE = {Dynamical systems and their associated automata},
 BOOKTITLE = {Systems and networks: mathematical theory and applications,
              {V}ol.\ {I} ({R}egensburg, 1993)},
    SERIES = {Math. Res.},
    VOLUME = {77},
     PAGES = {49--69},
 PUBLISHER = {Akademie-Verlag, Berlin},
      YEAR = {1994},
      ISBN = {3-05-501573-8},
   MRCLASS = {93C85 (58F40 93B27)},
  MRNUMBER = {1288106},
MRREVIEWER = {Norihiko\ Adachi},
}

@article{Telgarsky,
  title={Directional convergence and alignment in deep learning},
  author={Ji, Ziwei and Telgarsky, Matus},
  journal={Advances in Neural Information Processing Systems},
  volume={33},
  pages={17176--17186},
  year={2020}
}

@article{Vardi,
  title={On the implicit bias in deep-learning algorithms},
  author={Vardi, Gal},
  journal={Communications of the ACM},
  volume={66},
  number={6},
  pages={86--93},
  year={2023},
  publisher={ACM New York, NY, USA}
}

@article{Figalli,
  title={Infinite-width limit of deep linear neural networks},
  author={Chizat, L{\'e}na{\"\i}c and Colombo, Maria and Fern{\'a}ndez-Real, Xavier and Figalli, Alessio},
  journal={Communications on Pure and Applied Mathematics},
  volume={77},
  number={10},
  pages={3958--4007},
  year={2024},
  publisher={Wiley Online Library}
}

@article {Rauhut2,
    AUTHOR = {Nguegnang, Gabin Maxime and Rauhut, Holger and Terstiege,
              Ulrich},
     TITLE = {Convergence of gradient descent for learning linear neural
              networks},
   JOURNAL = {Adv. Contin. Discrete Models},
  FJOURNAL = {Advances in Continuous and Discrete Models. Theory and Modern
              Applications},
      YEAR = {2024},
     PAGES = {Paper No. 23, 28},
      ISSN = {2731-4235},
   MRCLASS = {90C20 (68T07 90C52)},
  MRNUMBER = {4775255},
       DOI = {10.1186/s13662-023-03797-x},
       URL = {https://doi.org/10.1186/s13662-023-03797-x},
}

@inproceedings{Ge1,
  title={No spurious local minima in nonconvex low rank problems: A unified geometric analysis},
  author={Ge, Rong and Jin, Chi and Zheng, Yi},
  booktitle={International Conference on Machine Learning},
  pages={1233--1242},
  year={2017},
  organization={PMLR}
}

@article {Montufar2,
    AUTHOR = {Kohn, Kathl\'en and Mont\'ufar, Guido and Shahverdi, Vahid and
              Trager, Matthew},
     TITLE = {Function space and critical points of linear convolutional
              networks},
   JOURNAL = {SIAM J. Appl. Algebra Geom.},
  FJOURNAL = {SIAM Journal on Applied Algebra and Geometry},
    VOLUME = {8},
      YEAR = {2024},
    NUMBER = {2},
     PAGES = {333--362},
      ISSN = {2470-6566},
   MRCLASS = {68T07 (14Q30 90C23)},
  MRNUMBER = {4748201},
       DOI = {10.1137/23M1565504},
       URL = {https://doi.org/10.1137/23M1565504},
}

@article {Montufar1,
    AUTHOR = {Kohn, Kathl\'en and Merkh, Thomas and Mont\'ufar, Guido and
              Trager, Matthew},
     TITLE = {Geometry of linear convolutional networks},
   JOURNAL = {SIAM J. Appl. Algebra Geom.},
  FJOURNAL = {SIAM Journal on Applied Algebra and Geometry},
    VOLUME = {6},
      YEAR = {2022},
    NUMBER = {3},
     PAGES = {368--406},
      ISSN = {2470-6566},
   MRCLASS = {68T07 (14J70 14P10 62R01 90C23)},
  MRNUMBER = {4459526},
       DOI = {10.1137/21M1441183},
       URL = {https://doi.org/10.1137/21M1441183},
}

@article{Belkin-dd,
  title={Reconciling modern machine-learning practice and the classical bias--variance trade-off},
  author={Belkin, Mikhail and Hsu, Daniel and Ma, Siyuan and Mandal, Soumik},
  journal={Proceedings of the National Academy of Sciences},
  volume={116},
  number={32},
  pages={15849--15854},
  year={2019},
  publisher={National Academy of Sciences}
}

@article{Belkin-dd2,
author = {Belkin, Mikhail and Hsu, Daniel and Xu, Ji},
title = {Two Models of Double Descent for Weak Features},
journal = {SIAM Journal on Mathematics of Data Science},
volume = {2},
number = {4},
pages = {1167-1180},
year = {2020},
doi = {10.1137/20M1336072},
}

@article{ghosh,
      title={Implicit Regularization via Spectral Neural Networks and Non-linear Matrix Sensing}, 
      author={Hong T. M. Chu and Subhro Ghosh and Chi Thanh Lam and Soumendu Sundar Mukherjee},
      year={2024},
           journal={arXiv:2402.17595},
      primaryClass={cs.LG},
      url={https://arxiv.org/abs/2402.17595}, 
}

@article{t-chen,
      title={Geometric structure of {D}eep {L}earning networks and construction of global $\mathcal{L}^2$ minimizers}, 
      author={Thomas Chen and Patricia Muñoz Ewald},
      year={2024},
      journal={arXiv:2309.10639},
      archivePrefix={arXiv},
      primaryClass={cs.LG},
      url={https://arxiv.org/abs/2309.10639}, 
}

@book{Bloch1994,
  title={Hamiltonian and Gradient Flows, Algorithms, and Control},
  author={Bloch, Anthony M.},
  series={Fields Institute Communications},
  volume={3},
  year={1994},
  publisher={American Mathematical Society},
}

@article{BBR,
  title={Completely integrable gradient flows},
  author={Bloch, Anthony M. and Brockett, Roger W. and Ratiu, Tudor S.},
  journal={Communications in Mathematical Physics},
  volume={147},
  number={1},
  pages={57--74},
  year={1992},
  publisher={Springer},
 }

\end{document}